\def\eqref#1{equation~\ref{#1}}
\def\1{\bm{1}}
\def\mE{{\bm{E}}}
\def\mV{{\bm{V}}}
\DeclareMathAlphabet{\mathsfit}{\encodingdefault}{\sfdefault}{m}{sl}
\SetMathAlphabet{\mathsfit}{bold}{\encodingdefault}{\sfdefault}{bx}{n}
\definecolor{kwblue}{RGB}{39,92,188}    
\definecolor{strgreen}{RGB}{28,138,66}  
\definecolor{cmgray}{gray}{0.45}        
\definecolor{bgblue}{RGB}{235,244,255}  
\lstdefinestyle{py}{
  language=Python,
  basicstyle=\ttfamily\footnotesize,
  keywordstyle=\bfseries\color{kwblue},
  commentstyle=\itshape\color{cmgray},
  stringstyle=\color{strgreen},
  numbersep=6pt,
  showstringspaces=false, keepspaces=true,
  columns=fullflexible,
  tabsize=4,
  mathescape=true
}
\theoremstyle{plain}
\newtheorem{theorem}{Theorem}[section]
\newtheorem{proposition}[theorem]{Proposition}
\newtheorem{lemma}[theorem]{Lemma}
\newtheorem{corollary}[theorem]{Corollary}
\theoremstyle{definition}
\theoremstyle{remark}
\newtheorem{remark}[theorem]{Remark}
\newcommand{\mcG}{\mathcal{G}}
\title{Efficient Causal Structure Learning via Modular Subgraph Integration}
\author{
Haixiang Sun\textsuperscript{1},
Pengchao Tian\textsuperscript{2},
Zihan Zhou\textsuperscript{3},
Jielei Zhang\textsuperscript{2},
Peiyi Li\textsuperscript{2},
Andrew L. Liu\textsuperscript{1}
\\
\textsuperscript{1}Purdue University \quad
\textsuperscript{2}Bilibili Inc \quad
\textsuperscript{3}Johns Hopkins University
}
\date{} 
\begin{document}

\maketitle

\begin{abstract}
Learning causal structures from observational data remains a fundamental yet computationally intensive task, particularly in high-dimensional settings where existing methods face challenges such as the super-exponential growth of the search space and increasing computational demands. To address this, we introduce VISTA (Voting-based Integration of Subgraph Topologies for Acyclicity), a modular framework that decomposes the global causal structure learning problem into local subgraphs based on Markov Blankets. The global integration is achieved through a weighted voting mechanism that penalizes low-support edges via exponential decay, filters unreliable ones with an adaptive threshold, and ensures acyclicity using a Feedback Arc Set (FAS) algorithm. The framework is model-agnostic, imposing no assumptions on the inductive biases of base learners, is compatible with arbitrary data settings without requiring specific structural forms, and fully supports parallelization. We also theoretically establish finite-sample error bounds for VISTA, and prove its asymptotic consistency under mild conditions. Extensive experiments on both synthetic and real datasets consistently demonstrate the effectiveness of VISTA, yielding notable improvements in both accuracy and efficiency over a wide range of base learners. The code is available at \url{https://github.com/Pedro-Tian/causal-dag}.
\end{abstract}

\section{Introduction}
\label{sec: intro}

{\color{black}Understanding causal relationships from observational data~\cite{pearl2009causality} is critical across numerous fields such as biology~\cite{petersen2024causal}, economics~\cite{hunermund2023causal}, and healthcare~\cite{sanchez2022causal}. Identifying causal structures enables reliable interventions and scientific insights. A common modeling framework represents the system as a causal graph—a Directed Acyclic Graph (DAG) where nodes are variables and directed edges denote causal links~\cite{spirtes2000causation}. While identifiability of the true DAG generally requires additional structural assumptions, our VISTA framework inherits whatever identifiability guarantees each base learner provides. In practice, large-scale observational datasets further complicate structure recovery, as most existing algorithms struggle to scale efficiently. Constraint-based pipelines \cite{spirtes2000causation, meek2013causal} must search over large conditioning sets while the number of CI tests grows combinatorially with the size of graph, and finite-sample CI tests become unreliable in high dimensions, so early mistakes can easily propagate to later steps. Score-based learners \cite{chickering2002optimal, loh2014high} optimize over a super-exponential DAG space; practical solvers still require heavy global searches or acyclicity constraints with repeated dense updates, driving time and memory up sharply. These disadvantages make them difficult to perform well in large-scale datasets. 
}

Given the challenges of learning large-scale causal structures, divide-and-conquer strategies have emerged as a natural solution. By decomposing the global graph into smaller, tractable subgraphs, these methods significantly reduce computational complexity, particularly in sparse settings, and facilitate parallel or distributed computation. In addition, aggregating local structures often enhances robustness relative to learning the full graph in a single pass. Early approaches expand neighborhoods from a random node~\cite{gao2017local} or apply hierarchical clustering~\cite{gu2020learning}. More recent work often partition the variable set into local neighborhoods, such as Markov Blankets, before aggregating them~\cite{dong2024dcdilp, mokhtarian2021recursive, tsamardinos2003algorithms, wu2023practical, wu2022multi}. However, the majority of these “conquer” steps rely on fixed heuristics for merging, such as voting thresholds, edge overlap rules, or manual conflict resolution. While simple, such rule-based schemes lack adaptability to noise and offer limited theoretical guarantees for global consistency. DCILP~\cite{dong2024dcdilp} formulates the merging process as an Integer Linear Program (ILP) and introduces solver-based reconciliation. Although this approach benefits from advances in ILP solvers and distributed optimization, it remains NP-hard and often incurs substantial solver overhead. In practice, even moderate-sized subproblems can lead to high memory usage and long runtimes. Alternatively, recent methods like~\cite{shah2024causal} retain heuristic-based fusion steps, which are efficient but similarly sensitive to noise and lack theoretical support.

In this paper, we propose VISTA (Voting-based Integration of Subgraph Topologies for Acyclicity), a novel modular framework for large-scale causal discovery. The method proceeds in three main stages. First, for each variable we identify its Markov Blanket, thereby reducing the global problem into tractable local neighborhoods. A base learner is then applied to each neighborhood using the data restricted to that subset of variables, producing local subgraphs. Second, these local subgraphs are aggregated through an adaptive voting mechanism that down-weights low-support edges, suppressing statistical noise and inconsistencies. Finally, the aggregated graph is post-processed with an efficient approximation algorithm that enforces acyclicity while preserving as many high-confidence orientations as possible. We also establish a theoretical result showing that the overall error rate of the procedure is bounded above by that of the subgraph-level aggregation, ensuring soundness of the divide-and-conquer strategy.

Crucially, VISTA is strictly model-agnostic and highly efficient. It makes no assumptions about the internal design or inductive biases of the base learners, places no restrictions on the choice of Markov Blanket identification algorithm, and imposes no conditions on the underlying data distribution beyond standard faithfulness assumptions. It operates purely on the edge-level outputs of local subgraphs and requires only a one-time $\mathcal{O}(|\mV|^2)$ aggregation without any additional solver or training overhead. This lightweight design makes VISTA framework readily compatible with any causal discovery method while enabling broad applicability across baselines and full parallelism in the divide phase.

Our key contributions include:
\begin{itemize}
\item We propose VISTA, a model-agnostic and modular framework that decomposes global DAG learning into node-centered Markov Blanket subgraphs. It is fully plug-and-play with respect to MB identification and local learners, requiring no identifiability or distributional assumptions on the chosen base learners.

\item Our aggregation is lightweight, efficient, and edge-level, performing a one-pass weighted voting instead of relying on expensive global searches or solver-based optimization. We derive finite-sample error bounds and an asymptotic consistency guarantee for this aggregation, which explicitly calibrates errors from imperfect base learners.
\item Extensive experiments across diverse graphs and a wide range of base learners demonstrate that VISTA remedies the typical performance drop of base learners, consistently improving robustness and scalability over standalone baselines.
\end{itemize}

\section{Preliminaries}
\label{sec:prelim}

\paragraph{Setup and notation.}
Let $\mV=\{V_1,\ldots,V_n\}$ be random variables generated by a structural causal model with mutually independent noises $\epsilon_i$:
\[
V_i = f_i(\mathrm{Pa}(V_i), \epsilon_i),\qquad \epsilon_i \perp\!\!\!\perp \mathrm{Pa}(V_i).
\]
This induces a directed acyclic graph (DAG) $\mathcal{G}=(\mV,\mE)$ where $V_i\!\to\!V_j\in\mE$ iff $V_i$ appears in $f_j$, and the observational distribution factorizes as
$\mathbb{P}(\mV)=\prod_{i=1}^{n}\mathbb{P}(V_i\mid \mathrm{Pa}(V_i))$.

\paragraph{Markov Blanket locality.}
Assuming causal sufficiency for exposition, the \emph{Markov Blanket} $\mathrm{MB}(V)$ of a node $V$ is the minimal set that renders $V$ independent of all others given $\mathrm{MB}(V)$; it consists of parents, children, and \emph{spouses} (other parents of the children). Equivalently, $\mathrm{MB}(V)$ $d$-separates $V$ from $\mV\setminus(\{V\}\cup \mathrm{MB}(V))$.
This locality motivates our divide--conquer design: by learning $\mathrm{MB}(V)$, causal discovery can be restricted to the induced subgraph $\mathcal{G}[\{V\}\cup \mathrm{MB}(V)]$, substantially reducing search complexity while preserving relevant adjacencies for $V$.

\paragraph{Existing Modular Causal Discovery Paradigms.}
For large-scale causal discovery, several local-to-global or fusion-style schemes decompose a graph and then merge the pieces \cite{NIPS1999_5d79099f}: a top-down CI-driven partition with set-based stitching \cite{JMLR:v9:xie08a,9165127}, global fusion over multiple full Bayesian networks \cite{PUERTA2021155}, a separation–reunion pipeline that repeatedly searches the structure \cite{LIU2017185}, a PC-style progressive skeleton requiring iterative bootstraps \cite{10681652}, combination of super-structure and exact score-based search \cite{ng2021reliable}, and DCILP, which formulates reconciliation as an ILP \cite{dong2024dcdilp}. However, these methods are typically algorithm-specific rather than modular frameworks; they either assume correct inputs at merging time, depend on heavy global search or solver-based optimization, or perform essentially uncalibrated frequency-based stitching. There also exists a SADA-based or extended model \cite{pmlr-v28-cai13,8016406,rahman2021accelerating}, but it is limited to LiNGAM and lacks a calibration process during merging. By contrast, our framework provides a lightweight, calibrated weighted-voting aggregation that down-weights low-support directions and remains compatible with arbitrary base learners. A more detailed related work discussion appears in Appendix~\ref{app: detailed related works}.

\section{Methodology}
\label{sec: methodology}


We introduce VISTA (Voting-based Integration of Subgraph Topologies for Acyclicity), a novel modular framework for large-scale DAG learning that is both model-agnostic and efficient. Instead of searching the full graph, VISTA focuses on edge-level evidence: for each node $V$, we form the subgraph induced by $\{V\}\cup\mathrm{MB}(V)$ and run any off-the-shelf local learner, regardless of its parametric form, identifiability assumptions, or internal design. The resulting local predictions are reconciled by a lightweight weighted voting on each ordered pair $(X,Y)$, which calibrates errors from imperfect base learners, and acyclicity is then enforced by a Feedback Arc Set heuristic~\cite{eades1993fast}. This modular design makes VISTA fully plug-and-play: MB identification and local learning can be tailored to the data regime, while aggregation and acyclicity remain fixed, scalable, and consistent. 
\begin{proposition}[Coverage of a DAG by Markov-Blanket Subgraphs]
\label{thm:coverage}
Let $\mathcal{G}=(\mV,\mE)$ be a DAG. For each $V\in\mV$,  define
\begin{equation}
    \mathcal{G}' = \bigcup_{V_i\in\mV}  \mathcal{G}\left[\{V_i\}\cup \mathrm{MB}(V_i)\right].
\end{equation}
Then every edge of $\mathcal{G}$ is present in $\mathcal{G}'$, i.e., $\mE\subseteq \mE(\mathcal{G}')$.
\end{proposition}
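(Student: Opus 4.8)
The plan is to argue edge by edge: it suffices to show that an arbitrary directed edge of $\mathcal{G}$ already appears in one of the node-centered induced subgraphs whose union defines $\mathcal{G}'$, since the union of graphs takes the union of vertex sets and edge sets and hence can only add, never delete, edges.

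First I would fix an arbitrary edge $V_i\to V_j\in\mE$ and recall the defining decomposition of a Markov Blanket used in the paper: $\mathrm{MB}(V)$ consists of the parents, children, and spouses of $V$, so in particular every child of $V$ lies in $\mathrm{MB}(V)$. Because $V_i\to V_j$ means exactly that $V_j$ is a child of $V_i$, we get $V_j\in\mathrm{MB}(V_i)$, and therefore both endpoints $V_i,V_j$ lie in the vertex set $\{V_i\}\cup\mathrm{MB}(V_i)$. (Symmetrically, $V_i$ is a parent of $V_j$, so $V_i\in\mathrm{MB}(V_j)$ and both endpoints lie in $\{V_j\}\cup\mathrm{MB}(V_j)$; either of the two subgraphs may be used for the remainder of the argument.)

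Next I would invoke the definition of an induced subgraph: for any $S\subseteq\mV$, the graph $\mathcal{G}[S]$ retains every edge of $\mathcal{G}$ whose two endpoints both lie in $S$. Applying this with $S=\{V_i\}\cup\mathrm{MB}(V_i)$ yields $V_i\to V_j\in\mE\big(\mathcal{G}[\{V_i\}\cup\mathrm{MB}(V_i)]\big)$. Since this induced subgraph is one of the terms in $\mathcal{G}'=\bigcup_{V_k\in\mV}\mathcal{G}[\{V_k\}\cup\mathrm{MB}(V_k)]$ and the edge set of a union of graphs is the union of the edge sets, we conclude $V_i\to V_j\in\mE(\mathcal{G}')$. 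As the edge was arbitrary, this establishes $\mE\subseteq\mE(\mathcal{G}')$.

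There is no substantive obstacle here; the statement is essentially the combination of two definitional facts — that children belong to the Markov Blanket, and that induced subgraphs preserve edges among retained vertices. The only points requiring care are bookkeeping ones: making explicit which convention for $\mathrm{MB}$ is in force (so that $\mathrm{children}(V)\subseteq\mathrm{MB}(V)$ is literally true), and noting that the union $\bigcup$ is interpreted vertexwise and edgewise so that no local edge is lost during aggregation. I would optionally remark that the converse inclusion $\mE(\mathcal{G}')\subseteq\mE$ also holds trivially, since each $\mathcal{G}[S]$ is a subgraph of $\mathcal{G}$, giving in fact $\mE(\mathcal{G}')=\mE$; but only the stated direction is needed to justify the soundness of the divide step.
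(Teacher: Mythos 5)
Your proof is correct and follows essentially the same argument as the paper: an edge $V_i\to V_j$ makes $V_j$ a child of $V_i$ (and $V_i$ a parent of $V_j$), so both endpoints lie in $\{V_i\}\cup\mathrm{MB}(V_i)$, and the induced subgraph — hence the union $\mathcal{G}'$ — retains the edge. Your additional remarks on the induced-subgraph convention and the trivial converse inclusion are fine but not needed beyond the paper's own proof.
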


\begin{proof}
Take any edge $(X,Y)\in\mE$. If $X\!\to\!Y$, then $Y$ is a child of $X$ and $X$ is a parent of $Y$, hence $Y\in\mathrm{MB}(X)$ and $X\in\mathrm{MB}(Y)$. Therefore $(X,Y)$ appears in $\mathcal{G}[\{X\}\cup\mathrm{MB}(X)]$ and in $\mathcal{G}[\{Y\}\cup\mathrm{MB}(Y)]$, and thus in the union $\mathcal{G}'$.
\end{proof}
This coverage property is the foundation of VISTA: once MBs and their local subgraphs are correctly identified, no true edge is lost in the decomposition. Importantly, our framework remains \emph{agnostic} to the specific MB estimator or local learner, that any method suitable for the data distribution can be plugged in. All subsequent aggregation and acyclicity enforcement operate purely at the edge level and rely only on this coverage guarantee. Besides, as shown in Figure \ref{fig: comparison F1 MB}, the accuracy
\begin{wrapfigure}{r}{.5\textwidth} 
  \centering
  \vspace{-\baselineskip} 
  \includegraphics[width=\linewidth]{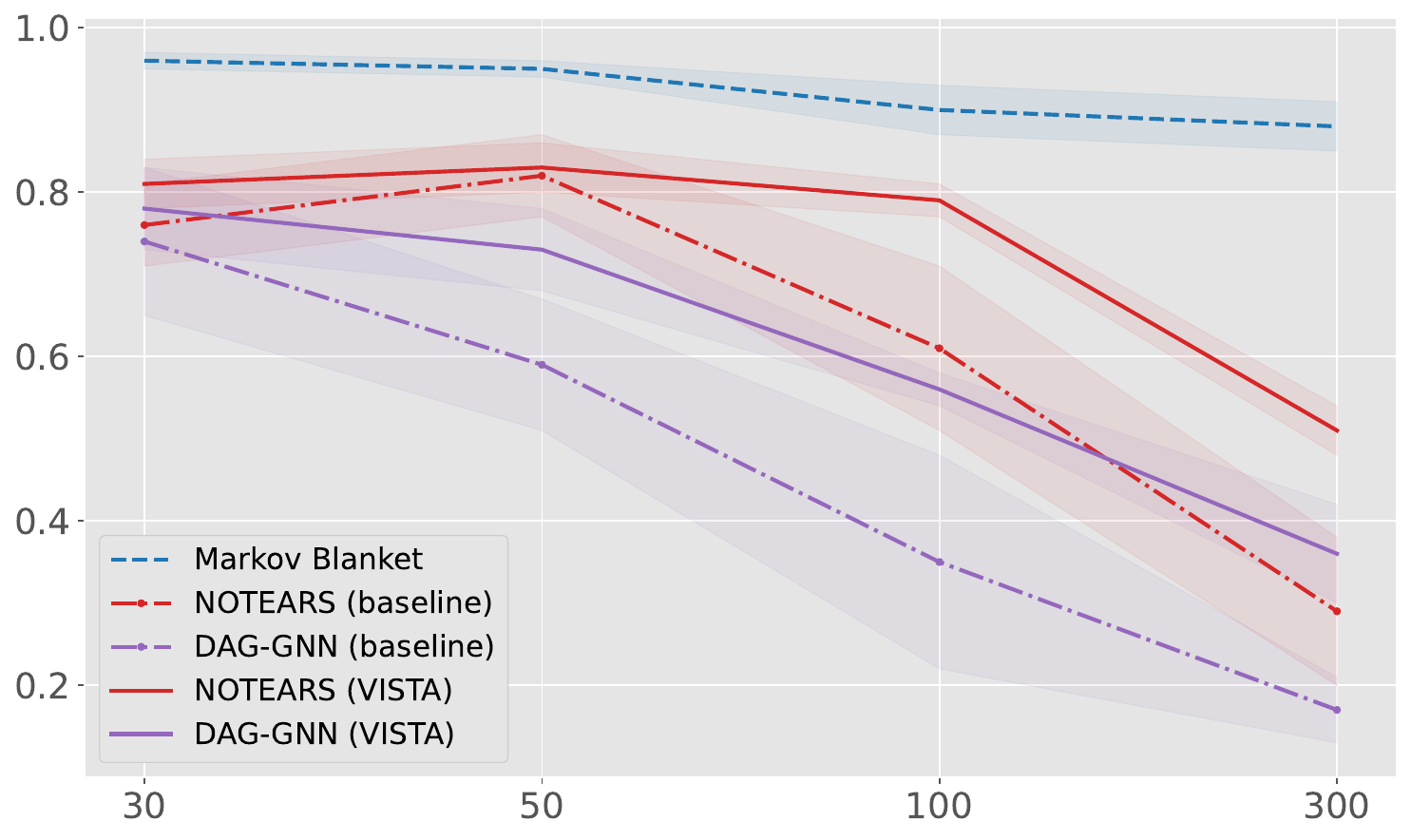}
  \caption{F1 score comparison as the number of nodes increases. }
  \label{fig: comparison F1 MB}

\end{wrapfigure}
of MB identification remains relatively stable as the number of nodes increases, whereas the performance of base learners degrades more sharply. This empirical observation is consistent with our theoretical analysis in Section~\ref{sec_errbound}, where we prove that the proposed merging scheme converges to the correct edge orientations. Furthermore, across different graph sizes, the VISTA-enhanced versions consistently outperform their corresponding baselines, demonstrating the robustness of our framework.

Moreover, since our framework is agnostic to the choice of MB identification methods, we also provide a flexible interface in our implementation that allows practitioners to plug in any suitable MB estimator depending on the specific data distribution. Notably, we assume that each base learner outputs directed edges on local subgraphs throughout this work. If an undirected adjacency $X - Y$ is returned, it is treated as providing no directional vote in the aggregation.

\subsection{VISTA: Voting-based Integration of Subgraph Topologies for Acyclicity}
\label{sec_vista}

\paragraph{Naive Voting (NV)} To merge estimated subgraphs into a globally causal graph, we first consider a naive voting strategy. For each pair of nodes $X$ and $Y$, let $A$ denote the number of times the directed edge $X \to Y$ appears across all subgraphs, and $B$ denote the number of times $Y \to X$ appears. The directional support ratio for each orientation is computed as:
\begin{equation*}
r_{X \to Y} = \frac{A}{A+B}, \quad r_{Y \to X} = \frac{B}{A+B}.
\end{equation*}
This NV rule serves to demonstrate an important property of our divide-and-conquer framework. By Theorem~3.1, every ground-truth causal edge must appear in the union of MB subgraphs. Therefore, even this unweighted scheme, which simply aggregates raw directional votes, already ensures that all true edges are included in the candidate pool. In other words, NV validates that our subgraph decomposition does not lose any causal edges, providing an essential guarantee for the global reconstruction stage.

However, while NV does not distinguish between strong and weak statistical support. Edges appearing rarely across subgraphs receive the same confidence as frequently supported ones, and directional conflicts cannot be resolved in a principled manner. These issues motivate the introduction of our weighted voting formulation, which incorporates frequency-based confidence to produce more reliable global orientation decisions.

\paragraph{Weighted Voting (WV)}

For each pair of nodes $X$ and $Y$, let $A$ and $B$ denote the number of times $X \to Y$ and $Y \to X$ appear across all subgraphs, respectively, and let $m = A + B$ be the total occurrence. We define the confidence-adjusted score as:
\begin{equation}
    s(X \to Y) = \left(1-e^{-\lambda m}\right) \frac{A}{m},
\end{equation}
where $\lambda > 0$ is a tunable weighting parameter. An edge $X \to Y$ is retained if $s(X \to Y) \ge t$, where $t \in (0,1)$ is a global decision threshold.

Here, the weighting term $\left(1 - e^{-\lambda m}\right)$ serves as a soft confidence modulator that adapts to the reliability of directional evidence. It plays a role analogous to smoothing priors in Bayesian estimation, where rare events are regularized toward lower confidence. The details in illustrated in Appendix \ref{supp_motivation}. The inclusion threshold $t$ determines the minimum score required to retain an edge. 

\begin{wrapfigure}{r}{0.46\textwidth} 
                 
\begin{minipage}{\linewidth}
\begin{codebox}
def VISTA(nodes, base_learner, ...,
           MB_solver, lam, t):
    local_graphs = []
    
    for v in nodes:
        MB_v = MB_solver(v)        
        G_v  = base_learner(MB_v $\cup$ v)   
        local_graphs.append(G_v)
        
    G_merged = WV(local_graphs, lam, t) 
    G_final = post_prune(G_merged)
    return G_final
\end{codebox}
\end{minipage}

\caption{Pseudocode of VISTA framework}
\end{wrapfigure}

Compared to naive voting, which treats all local decisions equally, the weighted scheme jointly calibrates confidence and sparsity. Specifically, the parameter $\lambda$ penalizes edges with weak support, while the threshold $t$ determines the final inclusion criterion. Together, the two parameters govern the precision–recall trade-off, since a larger $\lambda$ tends to preserve edges with limited but consistent evidence and thus improves recall, while a higher $t$ enforces stricter acceptance and thereby improves precision. This mechanism is particularly beneficial in sparse graphs, where many candidate edges receive only minimal support; the exponential weighting amplifies even small differences in frequency, effectively suppressing unreliable edges.  As a result, the aggregation remains robust without relying on strong parametric assumptions, and it provides a tunable handle for balancing false discoveries and missed edges.  Beyond the divide-and-conquer efficiency of VISTA, the weighted voting strategy itself enhances the performance of base learners, yielding substantial gains in recall while tightening theoretical error bounds. A detailed analysis of these effects is provided in Section~\ref{sec_errbound} and Appendices~\ref{supp_E} - \ref{supp_F}.

\paragraph{Acyclicity guarantee} 
While the weighted voting improves robustness, the resulting merged graph may still contain cycles. To ensure that the final output is a valid DAG, it is necessary to explicitly break loops introduced during the merging process. So we explicitly enforce acyclicity by solving a Feedback Arc Set (FAS) problem \cite{simpson2016efficient}. As FAS is NP-hard, we adopt a fast GreedyFAS heuristic \cite{eades1993fast} adapted to weighted edges; the implementation is detailed in Algorithm~\ref{alg:fas} in Appendix~\ref{supp_algo_fas}.

Notably, an important implementation detail involves the ordering between GreedyFAS and threshold-based filtering. In VISTA, cycles are first removed using GreedyFAS, after which edges with weights below a global threshold $t$ are filtered out. This ordering avoids forcing the cycle removal step to act on already sparse graphs, where eliminating a cycle may require discarding high-confidence edges. In contrast, applying filtering before GreedyFAS can lead to unnecessary precision loss, as the remaining cycles must be resolved by removing stronger edges that would otherwise have been preserved. Besides, taking a subset of nodes from a causal graph introduces unobserved confounding, which will lead to additional edges in the subgraph; the post-processing step here can remove part of these redundant edges.

In general, our VISTA offers several key advantages that make it particularly suited for large-scale causal discovery. It operates purely on aggregated edge counts and requires only matrix-level operations, with no reliance on optimization solvers or iterative training. Importantly, it is model-agnostic, i.e., the aggregation is independent of the internal structure of base learners and can be applied to any method that outputs directed subgraphs. This modularity allows seamless integration with a broad class of causal discovery algorithms and supports parallel execution in the divide stage. The complete procedure is implemented as a simple and modular pipeline, summarized in Figure \ref{fig:pipeline}.
\begin{figure}[h]
    \centering
    \includegraphics[width=1\linewidth]{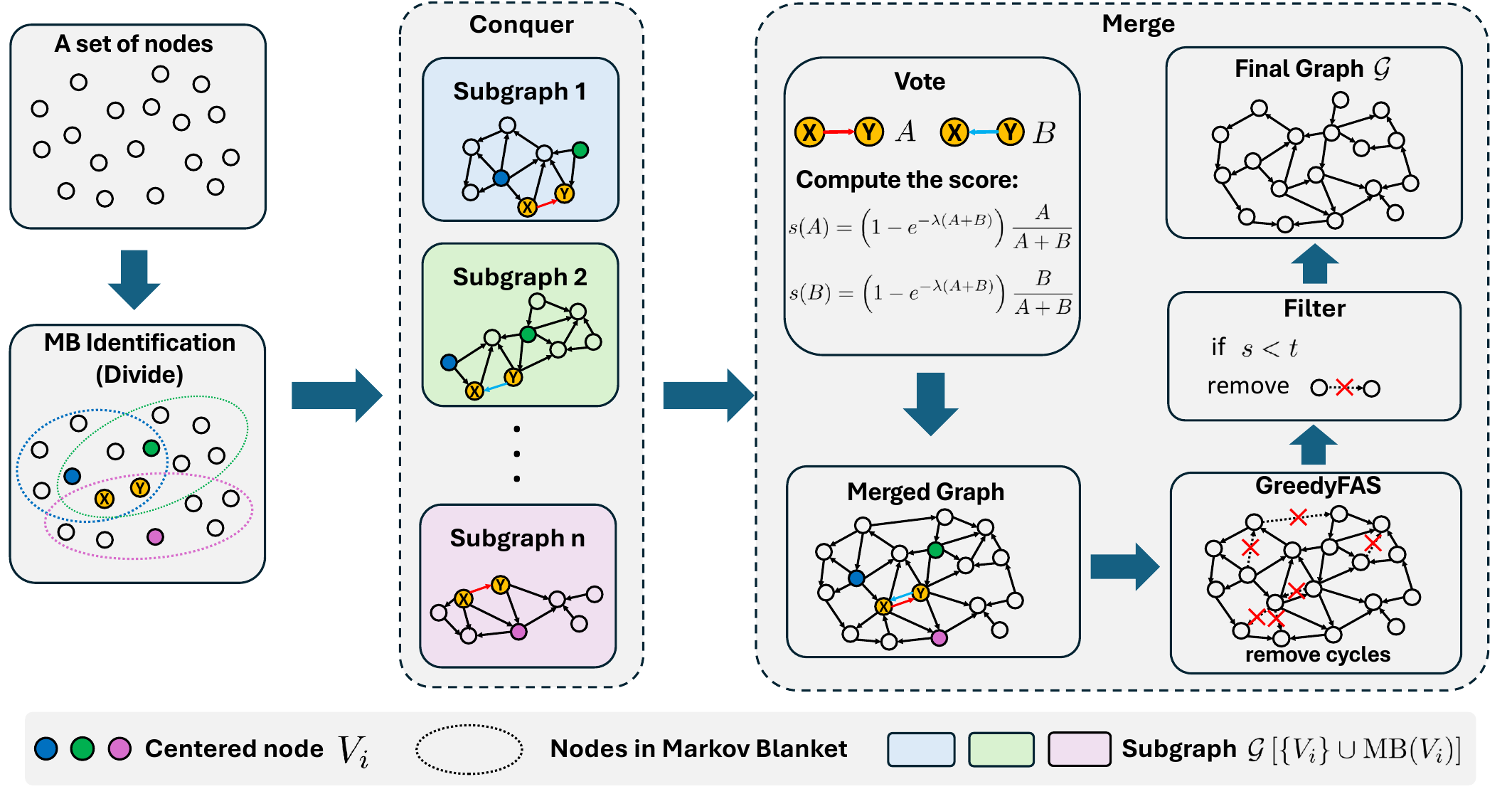}
    \caption{Overview of VISTA, a modular framework for causal discovery:
    (1) dividing via Markov Blankets identification, (2) parallel subgraph structure identification using a base learner, and (3) global aggregation through weighted voting. The framework then applies cycle resolution (GreedyFAS) and weight-based filtering to produce the final DAG.
    }
    \label{fig:pipeline}
\end{figure}

\paragraph{Theoretical guarantees for Weighted Voting} To ensure the reliability of our edge orientation decisions based on the weighted voting mechanism described above, we provide theoretical guarantees derived from concentration inequalities. The core idea is to determine the minimum number of votes (subgraphs) $m$ required to achieve a desired level of confidence $1-\epsilon$ in our decision.

\begin{theorem}[Sufficient Condition for Weighted Voting Accuracy]
\label{thm_accuracy}
Let $A \sim \operatorname{Binomial}(m, p)$ represent the number of successful votes in $m$ independent subgraphs for the edge direction $X \to Y$, where each subgraph supports this direction independently with probability $p \in (0,1)$, a decision threshold $t \in (0,1)$ and the weight function $w(m) = 1 - e^{-\lambda m}, \quad \lambda > 0$.  
Assume the effective threshold for accepting the edge direction $X \to Y$ is $r(m) =\frac{t}{1 - e^{-\lambda m}}< p$, i.e., the true support rate $p$ is above the effective threshold. Then, if
\begin{equation}
\label{eq:thm4.4_condition}
    \frac{m p}{2} \left(1 - \frac{t}{p (1 - e^{-\lambda m})} \right)^2
\ge \log\frac{1}{\epsilon},
\end{equation}
it follows that $P\left(s(A) \geq t \right) \geq 1 - \epsilon$.
\end{theorem}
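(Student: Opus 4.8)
The plan is to reduce the event $\{s(A)\ge t\}$ to a lower-tail deviation of the binomial count $A$ below its mean, and then apply a multiplicative Chernoff bound. First I would observe that for $\lambda>0$ and $m\ge 1$ the weight $w(m)=1-e^{-\lambda m}$ is strictly positive, so dividing through is an equivalence:
\[
s(A)=\bigl(1-e^{-\lambda m}\bigr)\frac{A}{m}\ge t
\iff \frac{A}{m}\ge \frac{t}{1-e^{-\lambda m}}=r(m)
\iff A\ge m\,r(m).
\]
Hence $P\bigl(s(A)\ge t\bigr)=P\bigl(A\ge m\,r(m)\bigr)$, and it suffices to bound the complementary event $P\bigl(A< m\,r(m)\bigr)\le\epsilon$. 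This is a lower-tail estimate, and it is precisely the hypothesis $r(m)<p$ that places the threshold $m\,r(m)$ strictly below the mean $\mu:=\E[A]=mp$, making such a bound possible.

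Next I would invoke the standard multiplicative Chernoff bound for a sum of independent Bernoulli variables: for every $\delta\in(0,1)$,
\[
P\bigl(A\le(1-\delta)\mu\bigr)\le \exp\!\Bigl(-\tfrac{\mu\,\delta^{2}}{2}\Bigr).
\]
I would then choose $\delta$ to align the two thresholds, i.e. $(1-\delta)\mu=m\,r(m)$, which gives $1-\delta=r(m)/p=\dfrac{t}{p\,(1-e^{-\lambda m})}$; since $r(m)>0$ and, by assumption, $r(m)<p$, we get $\delta\in(0,1)$, so the bound applies with $\mu=mp$ and yields
\[
P\bigl(A\le m\,r(m)\bigr)\le \exp\!\left(-\frac{mp}{2}\Bigl(1-\frac{t}{p\,(1-e^{-\lambda m})}\Bigr)^{2}\right).
\]
Finally I would substitute the standing assumption \eqref{eq:thm4.4_condition}: since the magnitude of the exponent is at least $\log(1/\epsilon)$, the right-hand side is at most $e^{-\log(1/\epsilon)}=\epsilon$. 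Because $A$ is integer-valued, $\{A<m\,r(m)\}\subseteq\{A\le m\,r(m)\}$, so $P\bigl(A<m\,r(m)\bigr)\le\epsilon$ and therefore $P\bigl(A\ge m\,r(m)\bigr)\ge 1-\epsilon$, which by the first step is exactly $P\bigl(s(A)\ge t\bigr)\ge 1-\epsilon$.

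I do not expect a genuine obstacle; the argument is essentially bookkeeping around the Chernoff bound. The three points needing a line of care are: (i) positivity of $w(m)$, so the rearrangement to $A\ge m\,r(m)$ is a true equivalence and the event is well posed; (ii) verifying $\delta=1-r(m)/p\in(0,1)$, which is exactly where the hypothesis $r(m)<p$ is consumed; and (iii) the harmless passage between strict and weak inequalities at the integer boundary. One could instead control $A/m$ via Hoeffding's inequality, but that would produce the bound $\exp\bigl(-2m(p-r(m))^{2}\bigr)$ rather than the multiplicative form $\exp\bigl(-\tfrac{m(p-r(m))^{2}}{2p}\bigr)$ that appears in \eqref{eq:thm4.4_condition}, so the multiplicative Chernoff bound is the tool that matches the stated inequality.
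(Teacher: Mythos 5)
Your proposal is correct and follows essentially the same route as the paper's proof: rewrite $s(A)\ge t$ as $A\ge m\,r(m)$, apply the multiplicative Chernoff lower-tail bound with $\delta=1-r(m)/p$ (using $r(m)<p$), and invoke condition (\ref{eq:thm4.4_condition}) to make the tail at most $\epsilon$. Your extra care about the strict/weak inequality at the boundary is a harmless refinement the paper glosses over.
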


This theorem guarantees that if $m$ is large enough to satisfy the given inequality, the weighted voting procedure will correctly identify the edge direction with high probability.  The condition highlights that the required $m$ depends on the squared relative difference between the true probability $p$ and the effective threshold $r(m)$. Note that $r(m)$ itself depends on $m$ and $\lambda$. As $m$ increases or $\lambda$ increases, $1 - e^{-\lambda m}$ approaches 1, and $r(m)$ approaches $t$. The inequality requires larger $m$ and becomes more difficult to satisfy  when $p$ is close to $r(m)$ or when higher confidence is desired. This trade-off illustrates the role of $\lambda$ in controlling the conservativeness of the decision rule, which we will analyze further in later sections. In practice, the true value of $p$ is unknown, but we can empirically validate the trend predicted by this condition using observed vote frequencies and measured recovery accuracy across different values of $\lambda$ and $t$.

Notably, Theorem \ref{thm_accuracy} is stated under an idealized assumption that the votes from different local subgraphs are independent. In practice, subgraphs learned from the same dataset can induce correlations among votes, so the bound should be interpreted as a qualitative guide, and we expect the same monotone trend to hold more effectively independent votes still reduce error and the gap between $p$ and the effective threshold continues to govern sample complexity. Extending the theory to low-correlation weakly dependent votes will be an interesting future direction.

\begin{corollary}[Lower bound on node in subgraphs]
\label{cor_bound}
Let $\lambda>0$, $t\in(0,1)$, and $\epsilon\in(0,1)$ be fixed.
For a candidate edge $(X,Y)$, denote by $m$ the number of local subgraphs whose Markov Blankets contain both endpoints. Under the setting of Theorem \ref{thm_accuracy}, the sufficient condition (\ref{eq:thm4.4_condition}) can be converted into an explicit bound
\begin{equation}
    m \geq \frac{2 \log(1/\epsilon)}{p \left( (1 - t/p)^2 - 2(t/p)(1 - t/p)e^{-\lambda} \right)}.
\end{equation}
\end{corollary}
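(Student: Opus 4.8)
The plan is to turn the implicit sufficient condition~(\ref{eq:thm4.4_condition}) of Theorem~\ref{thm_accuracy} into a closed-form lower bound on $m$ by \emph{decoupling} the two places in which $m$ enters: the linear prefactor $mp/2$ and the $m$ buried inside $e^{-\lambda m}$. Write $a:=t/p$; the standing feasibility hypothesis $r(m)<p$, i.e. $t<p(1-e^{-\lambda m})$, forces $a/(1-e^{-\lambda m})<1$ and in particular $a\in(0,1)$ with $1-e^{-\lambda m}\in(a,1)$. With this notation condition~(\ref{eq:thm4.4_condition}) reads $\tfrac{mp}{2}\bigl(1-\tfrac{a}{1-e^{-\lambda m}}\bigr)^2\ge\log(1/\epsilon)$, which cannot be inverted for $m$ directly because of the exponential, so the first task is to lower-bound the left side by something linear in $m$.

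I would do this in three steps. \emph{Step 1 (expand the bracket).} Using $\tfrac1{1-x}=1+x+O(x^2)$ with $x=e^{-\lambda m}$, expand $\bigl(1-\tfrac{a}{1-e^{-\lambda m}}\bigr)^2=\bigl((1-a)-ae^{-\lambda m}+O(e^{-2\lambda m})\bigr)^2=(1-a)^2-2a(1-a)e^{-\lambda m}+O(e^{-2\lambda m})$. \emph{Step 2 (strip $m$ out of the exponential).} Since $e^{-\lambda m}$ is decreasing in $m$, every admissible $m\ge 1$ obeys $e^{-\lambda m}\le e^{-\lambda}$, and because the coefficient $2a(1-a)$ is positive this gives $\bigl(1-\tfrac{a}{1-e^{-\lambda m}}\bigr)^2\ge (1-a)^2-2a(1-a)e^{-\lambda}$ (the leftover $O(e^{-2\lambda m})$ term being negligible, and of favorable sign, once the weight $1-e^{-\lambda m}$ is near saturation). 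Hence the left side of~(\ref{eq:thm4.4_condition}) is at least $\tfrac{mp}{2}\bigl((1-a)^2-2a(1-a)e^{-\lambda}\bigr)$, which is genuinely linear in $m$. \emph{Step 3 (solve).} Imposing $\tfrac{mp}{2}\bigl((1-a)^2-2a(1-a)e^{-\lambda}\bigr)\ge\log(1/\epsilon)$ and solving the resulting linear inequality gives $m\ge\dfrac{2\log(1/\epsilon)}{p\bigl((1-a)^2-2a(1-a)e^{-\lambda}\bigr)}$; substituting $a=t/p$ yields exactly the stated bound, and by Theorem~\ref{thm_accuracy} any such $m$ certifies $P(s(A)\ge t)\ge 1-\epsilon$.

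The main obstacle is precisely the decoupling in Steps 1–2: the coarse substitution $e^{-\lambda m}\approx 0$ would discard the very $e^{-\lambda}$ correction that the corollary reports, so one must keep the prefactor's $m$ intact while bounding only the exponential's $m$ by its smallest admissible value $m=1$, and one must check that truncating the expansion of $\bigl(1-\tfrac{a}{1-e^{-\lambda}}\bigr)^2$ after the linear term does not spoil the inequality in the intended parameter range. (A fully rigorous variant avoids the truncation and instead solves $\tfrac{mp}{2}\bigl(1-\tfrac{a}{1-e^{-\lambda}}\bigr)^2\ge\log(1/\epsilon)$ exactly, producing the tighter $m\ge 2\log(1/\epsilon)/\bigl(p\,(1-\tfrac{t}{p(1-e^{-\lambda})})^2\bigr)$; the bound in the statement is its leading-order-in-$e^{-\lambda}$ simplification.) Finally the argument presupposes the feasibility hypothesis, so one restricts throughout to $m$ with $t<p(1-e^{-\lambda m})$ — uniformly guaranteed by $t<p(1-e^{-\lambda})$ — which is also exactly what keeps the denominator in the stated bound positive. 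The remaining manipulations are routine algebra.
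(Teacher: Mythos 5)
Your proposal follows essentially the same route as the paper's own proof: expand $\bigl(1-\tfrac{t}{p(1-e^{-\lambda m})}\bigr)^2$ to first order in $e^{-\lambda m}$, drop the $O(e^{-2\lambda m})$ term, replace $e^{-\lambda m}$ by $e^{-\lambda}$ using $m\ge 1$, and solve the resulting linear inequality for $m$ (the paper likewise treats the truncation as an approximation and separately argues positivity of the denominator via the $\lambda$ range of Theorem~\ref{thm_lambda}). Your added caveat about the truncation, and the exact variant solving $\tfrac{mp}{2}\bigl(1-\tfrac{t}{p(1-e^{-\lambda})}\bigr)^2\ge\log(1/\epsilon)$, is if anything more careful than the paper's treatment, so the proposal matches the intended argument.
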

Generally, a lower error rate $\epsilon$ leads to a larger $\log(1/\epsilon)$ term, which increases the required size of $m$. When $p$ is much greater than $t$, it results in a small required $m$. This aligns with intuition: if the true voting rate $p$ is far from the threshold $t$, the distinction is easier, and fewer votes are needed for reliable decisions. Similarly, when the gap $p - t$ is small, it will result in a significantly larger required $m$. A large lower bound on $m$ primarily indicates that the current setting yields a very small gap between $p$ and $t$, which, in turn, implies that the decision task has intrinsically high sample complexity.

\subsection{Error bound analysis}
\label{sec_errbound}

We analyze the edge-level errors of the weighted voting rule to understand how the weighting parameter $\lambda$ and the threshold $t$ affect false positives and false negatives. We first characterize a sufficient condition that converts $t$ into a probability threshold and yields a feasible range for $\lambda$, and then show that under this regime, weighted voting achieves asymptotic consistency as the graph size grows.

\begin{theorem}[Practical choice of $\lambda$]
\label{thm_lambda}
Fix a vote count $m \ge 1$, a decision threshold $t \in (0,1)$, and a target error level $\epsilon \in (0,1)$. 
If $\lambda$ satisfies
\begin{equation}
\label{lambda_range}
    -\frac{1}{m}\ln(1-t)\;<\;\lambda\;\le\;-\frac{1}{m}\ln\epsilon,
\end{equation}
then the weighted-vote rule achieves the prescribed error control under the union bound. 
\end{theorem}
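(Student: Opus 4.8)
The plan is to first rewrite the acceptance rule $s(X\to Y)\ge t$ as a threshold on the empirical vote frequency, and then to bound the two failure modes—discarding a present edge (false negative) and retaining a spurious one (false positive)—separately, finally combining them with a union bound over the two orientations of each pair (and, for a whole-graph statement, over the $\mathcal{O}(|\mV|^2)$ pairs after rescaling $\epsilon$). Throughout I write $w(m)=1-e^{-\lambda m}$, so that $s(X\to Y)=w(m)\,A/m\ge t$ is equivalent to $A/m\ge r(m)$ with $r(m)=t/w(m)$.

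The left endpoint of (\ref{lambda_range}) will come out as exactly the feasibility requirement $r(m)\in(0,1)$: indeed $r(m)>0$ is automatic, while $r(m)<1\iff w(m)>t\iff e^{-\lambda m}<1-t\iff\lambda>-\tfrac1m\ln(1-t)$, and if this fails then $s(A)=w(m)A/m\le w(m)\le t$ for every outcome $A$, so no edge can ever be retained and the false-negative rate is $1$. Once $r(m)<1$ holds, the hypothesis $r(m)<p$ of Theorem~\ref{thm_accuracy} is attainable for a genuinely present edge with per-subgraph support probability $p$ (in particular whenever $p$ is bounded away from $r(m)$), and invoking Theorem~\ref{thm_accuracy} together with Corollary~\ref{cor_bound} gives $P(s(A)\ge t)\ge 1-\epsilon$; this discharges the false-negative half of the claim.

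For the false-positive half I would use the right endpoint of (\ref{lambda_range}): $\lambda\le-\tfrac1m\ln\epsilon\iff e^{-\lambda m}\ge\epsilon\iff w(m)\le 1-\epsilon$, hence $r(m)=t/w(m)\ge t/(1-\epsilon)$. For a pair that should be rejected—an absent edge, or the reverse orientation of a true one—faithfulness together with consistency of the base learner on the correctly identified subgraphs keeps the per-subgraph probability $p_0$ of the spurious direction strictly below $t$, so $p_0<t\le r(m)$; the upper-tail Chernoff bound for $A\sim\operatorname{Binomial}(m,p_0)$ then makes $P(A/m\ge r(m))$ exponentially small in $m$, and the slack $r(m)\ge t/(1-\epsilon)$ furnished by the cap on $\lambda$ is what pushes the effective threshold far enough above $p_0$ to bring this probability down to $\epsilon$. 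Intuitively, capping $\lambda$ stops the weight $w(m)$ from saturating at $1$ for this $m$, which is precisely the phenomenon that would let a handful of coincidental votes clear the bar. Unioning the false-negative event with the two false-positive events then yields the stated error control. The hard part will be this last step—quantifying $p_0$ from faithfulness and the base-learner assumptions and verifying that the $w(m)\le 1-\epsilon$ margin is exactly what the concentration bound consumes—whereas the lower bound and the false-negative side fall out almost immediately from the identity $r(m)=t/w(m)$ and Theorem~\ref{thm_accuracy}.
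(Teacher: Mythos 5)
Your lower-bound argument coincides with the paper's: the constraint $\lambda>-\tfrac1m\ln(1-t)$ is exactly the viability requirement $r_\lambda(m)=t/(1-e^{-\lambda m})<1$, without which $s(A)\le 1-e^{-\lambda m}\le t$ for every outcome and recall is zero. The algebra behind the upper bound also matches ($\lambda\le-\tfrac1m\ln\epsilon\iff e^{-\lambda m}\ge\epsilon\iff 1-e^{-\lambda m}\le 1-\epsilon$), but the paper justifies this cap differently from you: it is a worst-case saturation argument, namely that when votes for a spurious pair happen to be unanimous the score equals $1-e^{-\lambda m}$, so keeping this weight at least $\epsilon$ away from $1$ prevents indiscriminate acceptance; the quantitative false-positive/false-negative control in the paper then comes from the separate Hoeffding/union-bound lemma (with margins $p-r_\lambda$ and $r_\lambda-q$) and from the derivative analysis of that bound, not from the $\lambda$-range itself.

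The genuine gap is in your false-positive step. You claim that the cap on $\lambda$, via $r_\lambda(m)\ge t/(1-\epsilon)$, "pushes the effective threshold far enough above $p_0$ to bring this probability down to $\epsilon$." It does not: the extra slack is only $r_\lambda(m)-t\ge t\epsilon/(1-\epsilon)$, which \emph{vanishes} as $\epsilon\to0$, i.e. precisely when the requirement is most stringent. The Chernoff/Hoeffding bound you invoke gives $P(A/m\ge r_\lambda(m))\le e^{-2m(r_\lambda(m)-p_0)^2}$, and this is $\le\epsilon$ only if $m\gtrsim\log(1/\epsilon)/(t-p_0)^2$ — a condition on $m$ and the margin $t-p_0$ that is not implied by (\ref{lambda_range}); in the paper that requirement is the content of the asymptotic consistency result (Theorem~\ref{thm_asymptotic}) and of Lemma on the structure-aware bound, not of the choice of $\lambda$. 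So the step you defer as "the hard part" cannot be completed as planned; likewise, quantifying $p_0$ from faithfulness and base-learner consistency is nowhere assumed or used in the paper's argument for this theorem. Your false-negative half (via Theorem~\ref{thm_accuracy} and Corollary~\ref{cor_bound}) is fine but also goes beyond what the paper's proof actually does, since those results need the additional hypothesis $p>r_\lambda(m)$ and their own sample-size condition.
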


Theorem~\ref{thm_lambda} establishes a feasible interval for $\lambda$ that guarantees uniform control of edge-level errors. While the confidence weight $1-e^{-\lambda m}$ down-weights low-support orientations at a fixed $t$, the smaller $\lambda$ values impose stricter thresholds $r_\lambda(m)$ to suppress low-support edges, while larger values retain weaker true edges and improve recall. The proof of the theorem, as well as detailed discussions, is in Appendix~\ref{sec:lambda-discussion}. In practice, we adopt the relatively large admissible $\lambda$ in (\ref{lambda_range}), which lowers the effective threshold and reduces false negatives at the cost of more false positives. This choice is well suited to sparse graphs since false positives typically dominate. The empirical behavior of varying $\lambda$ is further examined in Section~\ref{sec_lambda}. Notably, as $\lambda \to 0$, the rule reduces to naive voting with a fixed threshold $t$. Building on the finite-sample guarantees above, we next analyze the asymptotic behavior of the weighted voting rule as the number of variables grows. Similarly to $p$, let $q \in (0,1)$ denote the probability that a false edge is erroneously included. In practice, both $p$ and $q$ can be empirically estimated.

\begin{theorem}[Asymptotic Consistency]
\label{thm_asymptotic}Fix a threshold $t\in(0,1)$ and let $\delta_p = p - t$ and $\delta_q = t - q$ denote the positive margins between $t$ and the inclusion probabilities $p,q$ of true and false edges respectively. Assume $\delta_p,\delta_q > 0$ and that $\lambda$ satisfies the conditions in Theorem~\ref{thm_lambda}. If the number of local subgraphs per candidate edge is $m=C \log n$ with $C>\tfrac{2}{\min\{\delta_p^2,\delta_q^2\}}$, then we have
\begin{equation}
    \Pr(\text{global error}) = o(1),\quad \quad \text{as}~~ n\rightarrow \infty.
\end{equation}
\end{theorem}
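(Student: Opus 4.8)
The plan is to bound the probability that \emph{any} of the candidate edges is misclassified by the weighted-vote rule, and show this total probability vanishes as $n\to\infty$ by a union bound over $O(n^2)$ ordered pairs. There are two types of per-edge error to control: (i) a \emph{false negative}, where a true edge $X\to Y$ has true support rate $p$ but its weighted score $s(A)$ falls below the threshold $t$; and (ii) a \emph{false positive}, where a non-edge (or wrong orientation) has inclusion probability $q<t$ but its weighted score exceeds $t$. For each, I would use a Chernoff/Hoeffding bound on the Binomial count $A\sim\operatorname{Binomial}(m,\cdot)$, exploiting the fact that with $m=C\log n$ and $\lambda$ in the admissible range of Theorem~3.6, the effective threshold $r(m)=t/(1-e^{-\lambda m})$ is within $o(1)$ of $t$, so the margins $\delta_p=p-t$ and $\delta_q=t-q$ are essentially preserved.

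Concretely, for the false-negative event: the condition $s(A)\ge t$ is equivalent to $A/m \ge r(m)$, and since $r(m)\to t$ while $p>t$, for $n$ large we have $p - r(m) \ge \delta_p/2$ (say). A lower-tail Hoeffding bound then gives
\begin{equation}
\Pr\bigl(s(A) < t\bigr) = \Pr\bigl(A/m < r(m)\bigr) \le \exp\!\left(-2m\,(p-r(m))^2\right) \le \exp\!\left(-\tfrac{1}{2} m\,\delta_p^2\right).
\end{equation}
Plugging $m = C\log n$ yields a bound of $n^{-C\delta_p^2/2}$. The symmetric argument for the false-positive event — here one notes that when the weight $1-e^{-\lambda m}$ is at most $1$, $s(A)\le A/m$, so $s(A)\ge t$ forces $A/m \ge t > q$, i.e. an upper-tail deviation of size at least $\delta_q$ — gives a bound of $n^{-C\delta_q^2/2}$ via the upper-tail Hoeffding inequality. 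Taking $C > 2/\min\{\delta_p^2,\delta_q^2\}$ ensures both exponents exceed $2$, so each per-edge error probability is $o(n^{-2})$.

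Finally I would assemble the global bound: there are at most $2\binom{n}{2}=n(n-1)$ ordered pairs, and
\begin{equation}
\Pr(\text{global error}) \;\le\; \sum_{(X,Y)} \Pr(\text{edge }(X,Y)\text{ misclassified}) \;\le\; n^2 \cdot O\!\left(n^{-C\min\{\delta_p^2,\delta_q^2\}/2}\right) \;=\; o(1),
\end{equation}
since $C\min\{\delta_p^2,\delta_q^2\}/2 > 1$. One should also confirm that the downstream GreedyFAS cycle-removal step cannot turn a correct edge set into an incorrect one: if every edge is classified correctly the recovered edge set already equals $\mE$, which is acyclic, so GreedyFAS removes nothing. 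I expect the main obstacle to be handled cleanly rather than technically hard: namely, making precise the ``for $n$ large, $r(m)\approx t$'' step so that the margins $\delta_p,\delta_q$ in the exponent are not eroded — this requires checking that under the admissible $\lambda$ from Theorem~3.6, $1-e^{-\lambda m}$ is bounded below by something like $1-\epsilon$ uniformly, so $r(m)\le t/(1-\epsilon)$, and then either absorbing the $\epsilon$-slack into a slightly larger constant $C$ or arguing it is $o(1)$ as $m=C\log n\to\infty$. A secondary subtlety, already flagged in the remark after Theorem~3.4, is the independence-of-votes assumption, which I would simply carry over as a standing hypothesis.
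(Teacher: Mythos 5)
Your overall strategy --- per-edge Chernoff/Hoeffding bounds for false negatives and false positives, a union bound over candidate pairs, attention to the effective threshold $r(m)=t/(1-e^{-\lambda m})$ versus $t$, and the observation that GreedyFAS is vacuous once every edge is classified correctly --- matches the paper's proof, which instantiates Lemma~\ref{lem:struct_bound} with margins taken to $t$ and counts $N_{\mathrm{FN}}=|E^\ast|=O(n)$ true edges against $N_{\mathrm{FP}}=O(n^2)$ false pairs. However, your final assembly has a concrete arithmetic gap. After halving the false-negative margin ($p-r(m)\ge\delta_p/2$), your per-edge exponents are $C\delta_p^2/2$ and (as you state the false-positive bound) $C\delta_q^2/2$; under the hypothesis $C>2/\min\{\delta_p^2,\delta_q^2\}$ these exceed $1$, not $2$. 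Hence each per-edge error is only $o(n^{-1})$, the claim ``both exponents exceed 2, so each per-edge error probability is $o(n^{-2})$'' does not follow, and $n^2\cdot n^{-(1+\varepsilon)}$ with small $\varepsilon$ is not $o(1)$: the union bound over all $n^2$ ordered pairs does not close as written.

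The repair is essentially the bookkeeping the paper does. On the false-positive side you should not give away any margin: as you yourself note, $1-e^{-\lambda m}\le 1$ implies $s(A)\le A/m$, so the upper-tail Hoeffding bound applies with the full margin $\delta_q$, giving $\exp(-2m\delta_q^2)=n^{-2C\delta_q^2}$ with $2C\delta_q^2>4$, which comfortably dominates the $O(n^2)$ count of false pairs. On the false-negative side, the union need only run over the true edges, of which there are $O(n)$ in the sparse regimes considered, so an exponent strictly greater than $1$ (your $C\delta_p^2/2>1$) already suffices there --- or, as in the paper's proof, one keeps the margin to $t$ and the full exponent $2m\delta_p^2$. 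Separately, your step ``$p-r(m)\ge\delta_p/2$ for large $n$'' is not automatic for every $\lambda$ admissible under Theorem~\ref{thm_lambda}: at the upper end $\lambda=-\tfrac{1}{m}\ln\epsilon$ one has $r(m)=t/(1-\epsilon)$ for all $m$, which is within $\delta_p/2$ of $t$ only when $\epsilon$ is small relative to $\delta_p/t$. You flag this yourself, and the paper sidesteps it by stating the asymptotic bound with margins to $t$ (its naive-voting form of the lemma); either make the required smallness of $\epsilon$ explicit or absorb the slack into $C$, but then redo the exponent count consistently.
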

Since most base solvers are reliable and can correctly identify a substantial fraction of true edges, our assumptions are quite mild and practically easy to satisfy. Theorem \ref{thm_asymptotic} establishes that weighted voting is asymptotically consistent: as the number of subgraph samples increases, the probability of edge-level misclassification vanishes. Notably, the required number of independent subgraphs per edge grows only logarithmically with the graph size, i.e., $\mathcal{O}(\log n)$, making the approach efficient. From a computational perspective, the global merging procedure involves only one pass of edge counting and scoring, with an overall complexity $\mathcal{O}(n^2)$ regardless of the base learner. These guarantees jointly ensure that the method remains scalable and reliable for large-scale structure discovery. The proof of the theorem is provided in Appendix \ref{supp_asymptotic}.

\section{Experiments}
\label{sec: experiments}
\subsection{Synthetic data}
We empirically evaluate the performance of the proposed VISTA framework on a range of graph structures and sizes, as well as diverse base learners. To demonstrate the improvement and effectiveness of VISTA, we report representative results that highlight the structural recovery performance of VISTA, its runtime benefits from our modular strategy, and the precision–recall trade-offs induced by different values of $\lambda$. All experiments are conducted on a machine equipped with 13th Gen Intel(R) Core(TM) i9-13900HX CPU (24 cores) and NVIDIA A30 GPU (24GB).

\paragraph{Baselines}

We benchmark VISTA against recent typical state-of-the-art causal discovery algorithms, including CAM \cite{buhlmann2016cam}, NOTEARS \cite{zheng2018dags}, DAG-GNN \cite{yu2019dag}, and GOLEM \cite{ng2020role} for the linear setting,  which we modeled as linear Structural Equation Model (SEM) with Gaussian noise, as well as SCORE \cite{rolland2022score} and GraN-DAG \cite{lachapellegradient} for the nonlinear setting, defined as quadratic SEM. Each baseline is evaluated both in isolation and when integrated with our ‌modular framework VISTA‌. Additionally, in Appendix \ref{supp_dcilp}, we provide a comparison between VISTA and DCILP \cite{dong2024dcdilp}, a recent distributed framework for causal structure learning, where we also implemented the MB solver used in that work.

We evaluate the accuracy of our VISTA framework under the Naive Voting (NV) and the Weighted Voting (WV) aggregation schemes. Each base learner is tested standalone and with both VISTA variants. We evaluate the proposed method on synthetic datasets generated from Erdős–Rényi (ER) and scale-free (SF) graphs, with average out-degree $h \in \{3, 5\}$ and number of nodes $n \in \{30,50,100,300\}$. Performance is assessed using False Discovery Rate (FDR), True Positive Rate (TPR), Structural Hamming Distance (SHD), and F1 score, as well as runtime metrics. Experiments are conducted under multiple simulation settings, and we report the average performance, with the $\pm$ values indicating the corresponding standard deviations.
\begin{table}[h]
\caption{Results with linear and nonlinear synthetic datasets ($n=100$, $h=5$).}
\label{tab: acc-er5-n100}
\resizebox{1\textwidth}{!}{
\begin{tabular}{c *{4}{c} *{4}{c}}
\toprule
  & \multicolumn{4}{c}{\textbf{ER5}} 
  & \multicolumn{4}{c}{\textbf{SF5}} \\
\cmidrule(lr){2-5} \cmidrule(l){6-9}
Method 
  & {FDR↓}       & {TPR↑}       & {SHD↓}         & {F1↑}
  & {FDR↓}       & {TPR↑}       & {SHD↓}         & {F1↑}         \\
\midrule
NOTEARS            & $0.21\pm0.21$ & $0.74\pm0.26$ & $208.80\pm199.71$  & $0.76\pm0.24$ & $0.37\pm0.15$ & $0.60\pm0.14$ & $352.60\pm125.39$  & $0.61\pm0.14$ \\
\textbf{+VISTA-NV} & $0.87\pm0.01$ & $\bm{0.97\pm0.01}$ & $3171.80\pm174.02$ & $0.23\pm0.01$ & $0.84\pm0.01$ & $\bm{0.97\pm0.01}$ & $2443.60\pm143.74$ & $0.27\pm0.01$ \\
\textbf{+VISTA-WV} & $\bm{0.08\pm0.03}$ & $0.68\pm0.01$ & $\bm{182.40\pm16.03}$   & $\bm{0.79\pm0.02}$ & $\bm{0.18\pm0.07}$ & $0.68\pm0.03$ & $\bm{233.00\pm34.76}$   & $\bm{0.74\pm0.03}$ \\ \addlinespace
GOLEM              & $0.61\pm0.16$ & $0.35\pm0.17$ & $567.00\pm129.77$  & $0.35\pm0.15$ & $0.70\pm0.15$ & $0.29\pm0.19$ & $610.10\pm118.00$  & $0.29\pm0.17$ \\
\textbf{+VISTA-NV} & $0.87\pm0.01$ & $\bm{0.91\pm0.04}$ & $2891.00\pm224.42$ & $0.23\pm0.01$ & $0.86\pm0.01$ & $\bm{0.90\pm0.02}$ & $2589.00\pm270.09$ & $0.25\pm0.02$ \\
\textbf{+VISTA-WV} & $\bm{0.23\pm0.12}$ & $0.50\pm0.13$ & $\bm{306.70\pm87.75}$   & $\bm{0.60\pm0.14}$ & $\bm{0.33\pm0.15}$ & $0.40\pm0.12$ & $\bm{371.10\pm88.21}$   & $\bm{0.50\pm0.13}$ \\ \addlinespace
DAG-GNN            & $0.66\pm0.15$ & $0.42\pm0.23$ & $739.20\pm323.34$  & $0.35\pm0.17$ & $0.64\pm0.15$ & $0.47\pm0.22$ & $731.40\pm303.38$  & $0.38\pm0.17$ \\
\textbf{+VISTA-NV} & $0.87\pm0.01$ & $\bm{0.95\pm0.01}$ & $3065.00\pm136.49$ & $0.23\pm0.01$ & $0.85\pm0.01$ & $\bm{0.95\pm0.00}$ & $2480.00\pm203.65$ & $0.27\pm0.01$ \\
\textbf{+VISTA-WV} & $\bm{0.36\pm0.03}$ & $0.56\pm0.05$ & $\bm{377.00\pm26.06}$   & $\bm{0.59\pm0.02}$ & $\bm{0.35\pm0.10}$ & $0.49\pm0.08$ & $\bm{363.00\pm41.10}$   & $\bm{0.56\pm0.09}$ \\
\cmidrule(lr){1-9}\morecmidrules\cmidrule(lr){1-9} 
GraN-DAG           & $0.92\pm0.04$ & $0.05\pm0.03$ & $715.00\pm70.14$  & $0.06\pm0.04$ & $0.94\pm0.02$ & $0.05\pm0.03$ & $1088.60\pm31.49$ & $0.05\pm0.02$ \\
\textbf{+VISTA-NV} & $0.86\pm0.04$ & $\bm{0.18\pm0.06}$ & $656.60\pm83.30$  & $0.16\pm0.03$ & $0.89\pm0.02$ & $\bm{0.20\pm0.04}$ & $947.20\pm53.33$  & $0.14\pm0.02$ \\
\textbf{+VISTA-WV} & $\bm{0.43\pm0.06}$ & $0.10\pm0.02$ & $\bm{503.40\pm46.68}$  & $\bm{0.17\pm0.03}$ & $\bm{0.54\pm0.05}$ & $0.11\pm0.02$ & $\bm{545.80\pm65.54}$  & $\bm{0.18\pm0.03}$ \\ \addlinespace
SCORE              & $0.92\pm0.10$ & $0.58\pm0.03$ & $4039.60\pm123.3$ & $0.14\pm0.15$ & $0.91\pm0.03$ & $0.62\pm0.05$ & $3166.40\pm258.7$ & $0.16\pm0.05$ \\
\textbf{+VISTA-NV} & $0.95\pm0.08$ & $\bm{0.76\pm0.02}$ & $3464.20\pm215.6$ & $0.09\pm0.14$ & $0.95\pm0.04$ & $\bm{0.76\pm0.05}$ & $2978.00\pm367.3$ & $0.08\pm0.07$ \\
\textbf{+VISTA-WV} & $\bm{0.80\pm0.06}$ & $0.65\pm0.07$ & $\bm{838.00\pm364.78}$ & $\bm{0.31\pm0.09}$ & $\bm{0.81\pm0.05}$ & $0.63\pm0.04$ & $\bm{892.60\pm345.58}$ & $\bm{0.29\pm0.06}$ \\ \hline
\end{tabular}
}
\label{tab_100_5}
\end{table}

\paragraph{Results}
\label{sec:Results}
Table \ref{tab: acc-er5-n100} shows two complementary roles of our aggregation. The NV variant already lifts recall by pooling evidence from overlapping neighborhoods, recovering more true edges. Building on this, WV acts as a principled edge-level filter. By down-weighting orientations with small or inconsistent support and applying a single global threshold, it removes noisy connections and yields substantially cleaner structures. Quantitatively, WV reduces FDR by $50\sim80\%$ relative to the original baselines and by $40\sim70\%$ compared to NV, while generally keeping TPR no less than $0.70$. The trend holds for both differentiable and combinatorial base learners, indicating that the gains stem from the aggregation rule rather than any particular estimator.

Crucially, $\lambda$ appears only in the final aggregation, so sweeping it is retraining-free: we reuse cached votes, recompute $r_\lambda(m)$, and rerun the DAG projection to obtain the full curves. To avoid per-dataset hyperparameter tuning and cherry-picking, all VISTA results in the main tables use a single, fixed operating point: $\lambda=0.5$ and $t=0.7$. This choice lies within (\ref{lambda_range}) and serves as a stable compromise between precision and recall across settings. We report the full precision–recall curves for transparency, but no post-hoc selection is performed for the tabulated results.

The observed improvement in WV cases against NV aligns with Theorem \ref{thm_lambda}. Edges with limited empirical support are selectively pruned while strongly supported ones are preserved, which is exactly the filtering behavior reflected in Table \ref{tab: acc-er5-n100}. This validates our weighted voting scheme as an effective, model-agnostic mechanism for stabilizing global structures. To further substantiate this model-agnostic property, we next examine the impact of data standardization as it is known to influence baseline performance \cite{reisach2021beware}.

\begin{table}[h]
\caption{Results with normalized linear and nonlinear synthetic datasets ($n=50$, $h=5$).}
\label{tab: acc-er5-n50}
\resizebox{1\textwidth}{!}{
\begin{tabular}{c *{4}{c} *{4}{c}}
\toprule
  & \multicolumn{4}{c}{\textbf{ER5}} 
  & \multicolumn{4}{c}{\textbf{SF5}} \\
\cmidrule(lr){2-5} \cmidrule(l){6-9}
Method 
  & {FDR↓}       & {TPR↑}       & {SHD↓}         & {F1↑}
  & {FDR↓}       & {TPR↑}       & {SHD↓}         & {F1↑}         \\
\midrule

NOTEARS            & $\bm{0.04\pm0.02}$ & $0.39\pm0.01$ & $140.00\pm4.90$  & $0.56\pm0.01$ & $\bm{0.02\pm0.02}$ & $0.38\pm0.04$ & $138.50\pm9.87$  & $0.55\pm0.05$ \\
\textbf{+VISTA-NV} & $0.27\pm0.05$ & $\bm{0.61\pm0.03}$ & $135.20\pm6.16$  & $0.66\pm0.02$ & $0.35\pm0.04$ & $\bm{0.62\pm0.04}$ & $132.80\pm18.82$ & $0.63\pm0.03$ \\
\textbf{+VISTA-WV} & $0.19\pm0.05$ & $0.58\pm0.03$ & $\bm{122.90\pm7.54}$  & $\bm{0.68\pm0.02}$ & $0.08\pm0.04$ & $0.54\pm0.06$ & $\bm{109.10\pm19.91}$ & $\bm{0.68\pm0.05}$ \\\addlinespace
GOLEM              & $0.40\pm0.03$ & $0.22\pm0.04$ & $182.00\pm15.51$ & $0.32\pm0.05$ & $0.44\pm0.07$ & $0.20\pm0.04$ & $183.60\pm6.55$  & $0.29\pm0.05$ \\
\textbf{+VISTA-NV} & $0.31\pm0.03$ & $\bm{0.75\pm0.03}$ & $129.50\pm4.97$  & $0.72\pm0.02$ & $0.29\pm0.05$ & $\bm{0.70\pm0.05}$ & $122.80\pm19.87$ & $0.70\pm0.04$ \\
\textbf{+VISTA-WV} & $\bm{0.06\pm0.03}$ & $0.62\pm0.04$ & $\bm{95.30\pm9.88}$   & $\bm{0.75\pm0.02}$ & $\bm{0.10\pm0.04}$ & $0.60\pm0.06$ & $\bm{100.20\pm15.69}$ & $\bm{0.72\pm0.05}$ \\ \addlinespace
DAG-GNN            & $0.16\pm0.03$ & $0.41\pm0.05$ & $160.80\pm53.55$ & $0.55\pm0.05$ & $0.19\pm0.05$ & $0.48\pm0.04$ & $183.60\pm45.37$ & $0.60\pm0.03$ \\
\textbf{+VISTA-NV} & $0.85\pm0.09$ & $\bm{0.74\pm0.14}$ & $609.80\pm72.70$ & $0.25\pm0.12$ & $0.79\pm0.04$ & $\bm{0.72\pm0.09}$ & $538.40\pm25.55$ & $0.33\pm0.05$ \\
\textbf{+VISTA-WV} & $\bm{0.14\pm0.05}$ & $0.50\pm0.09$ & $\bm{93.50\pm29.12}$ & $\bm{0.63\pm0.07}$ & $\bm{0.13\pm0.08}$ & $0.56\pm0.06$ & $\bm{87.80\pm16.56}$ & $\bm{0.68\pm0.05}$  \\
\cmidrule(lr){1-9}\morecmidrules\cmidrule(lr){1-9} 
GraN-DAG           & $0.82\pm0.01$ & $0.06\pm0.01$ & $275.00\pm18.50$ & $0.09\pm0.01$ & $0.92\pm0.02$ & $0.02\pm0.02$ & $269.80\pm45.50$ & $0.03\pm0.02$ \\
\textbf{+VISTA-NV} & $0.66\pm0.15$ & $\bm{0.26\pm0.06}$ & $219.20\pm46.41$ & $0.29\pm0.07$ & $0.68\pm0.05$ & $\bm{0.17\pm0.04}$ & $223.00\pm26.25$ & $0.22\pm0.04$ \\
\textbf{+VISTA-WV} & $\bm{0.15\pm0.06}$ & $0.18\pm0.05$ & $\bm{199.20\pm13.64}$ & $\bm{0.32\pm0.07}$ & $\bm{0.33\pm0.03}$ & $0.13\pm0.03$ & $\bm{205.40\pm59.15}$ & $\bm{0.23\pm0.04}$ \\ \addlinespace
SCORE              & $0.71\pm0.05$ & $0.50\pm0.05$ & $386.80\pm67.99$ & $0.37\pm0.04$ & $0.65\pm0.13$ & $0.52\pm0.15$ & $340.40\pm81.08$ & $0.38\pm0.05$ \\
\textbf{+VISTA-NV} & $0.79\pm0.03$ & $\bm{0.60\pm0.14}$ & $489.70\pm123.82$ & $0.31\pm0.04$ & $0.77\pm0.03$ & $\bm{0.56\pm0.05}$ & $471.10\pm16.68$ & $0.33\pm0.03$ \\
\textbf{+VISTA-WV} & $\bm{0.64\pm0.09}$ & $0.42\pm0.11$ & $\bm{305.80\pm49.93}$ & $\bm{0.39\pm0.07}$ & $\bm{0.57\pm0.04}$ & $0.36\pm0.06$ & $\bm{244.20\pm53.35}$ & $\bm{0.39\pm0.04}$ \\ \hline
\end{tabular}
}
\end{table}

The results show that, regardless of fluctuations in the performance of individual base learners, the improvements brought by VISTA remain consistent. This stability further supports our claim that VISTA does not rely on any inductive bias of the base learner or data distribution. Rather, the edge-level aggregation mechanism provides robustness across settings. These findings further highlight the model-agnostic nature of our framework. Additional experiments under alternative parameter settings are provided in Appendix \ref{appendix: additional experiments}. 
\paragraph{Time efficiency}
\label{sec:Results-time efficiency}
To assess the scalability of our framework, we report the total computation time for different base learners in Table \ref{tab:shd_er3}. All results are presented as mean $\pm$ standard deviation over repeated runs. Across all tested graph sizes, integrating VISTA consistently yields substantial runtime
\begin{wraptable}{r}{0.72\textwidth}

\centering
\caption{Comparison of total computing time (s) under ER3 setting.}

\label{tab:shd_er3}\small
\resizebox{\linewidth}{!}{
\begin{tabular}{lccc}
\toprule
Method & $n=50$ & $n=100$ & $n=300$ \\
\midrule
NOTEARS         & $494.40 \pm 98.24$  & $1473.69 \pm 395.59$ & $12515.63 \pm 1599.06$ \\
\quad +VISTA    & $\mathbf{189.15 \pm 65.37}$  & $\mathbf{339.90 \pm 158.75}$  & $\mathbf{2136.72 \pm 708.15}$ \\ \addlinespace
GOLEM           & $72.65 \pm 15.41$   & $108.82 \pm 70.56$   & $261.84 \pm 30.44$ \\
\quad +VISTA    & $\mathbf{21.93 \pm 0.81}$    & $\mathbf{26.16 \pm 2.68}$     & $\mathbf{43.40 \pm 3.21}$ \\\addlinespace
DAG-GNN         & $628.63 \pm 55.29$  & $2192.97 \pm 323.59$ & $17713.84 \pm 2861.06$ \\
\quad +VISTA    & $\mathbf{201.31 \pm 43.36}$  & $\mathbf{371.25 \pm 199.91}$  & $\mathbf{1960.43 \pm 794.02}$ \\\addlinespace
GraN-DAG         & $730.42 \pm 89.95$  & $3035.76 \pm 481.85$ & $25205.64 \pm 2098.85$ \\
\quad +VISTA    & $\mathbf{238.53 \pm 51.36}$  & $\mathbf{472.30 \pm 172.77}$  & $\mathbf{2336.32 \pm 1028.04}$ \\\addlinespace
SCORE           & $426.63 \pm 61.15$  & $10040.65 \pm 209.31$ & -------- \\
\quad +VISTA    & $\mathbf{105.64 \pm 39.65}$   & $\mathbf{198.82 \pm 34.12}$   & $\mathbf{225.16 \pm 11.45}$ \\
\bottomrule
\end{tabular}
}
\vspace{-1em}
\end{wraptable}   reductions compared to the original methods. These improvements are not due to algorithm-specific acceleration but result directly from our divide-and-conquer design: since each local subgraph is processed independently, the learning procedure naturally supports parallel execution. This decomposition effectively reduces the per-task computational load and alleviates memory bottlenecks, enabling scalable causal discovery even with large node counts. Further results for other settings are included in Appendix \ref{supp_time}.


\paragraph{Sensitivity study of $\lambda$}
\label{sec_lambda}

We sweep $\lambda$ and plot precision/recall in Figure \ref{fig_lambda}. By the conclusion of Theorem \ref{thm_lambda} and Appendix \ref{sec:lambda-discussion}, larger $\lambda$ shifts the method toward higher recall and lower precision by relaxing the penalty on low-support edges. Within the theoretical range, this precision–recall trade-off is smooth and yields informative voting thresholds \(r_\lambda(m)\). The figure also substantiate this point, Small $\lambda$ strongly discounts low-support edges, yielding high precision and low recall. Similarly, as $\lambda$ increases, recall rises while precision falls. Beyond the upper end of $(\ref{lambda_range})$ we have $(1-e^{-\lambda m})\approx1$ and thus $s(X\to Y)\approx A/m$, so the curves plateau and further increases of $\lambda$ have negligible effect. Therefore, to balance precision and recall in practice, a moderate value of the hyperparameter could be fixed within the theoretical range, which serves as a stable operating point.



\begin{figure}[htbp]
    \centering
    \subfigure[GOLEM, SF5, $n=300$]{
        \includegraphics[width=0.3\textwidth]{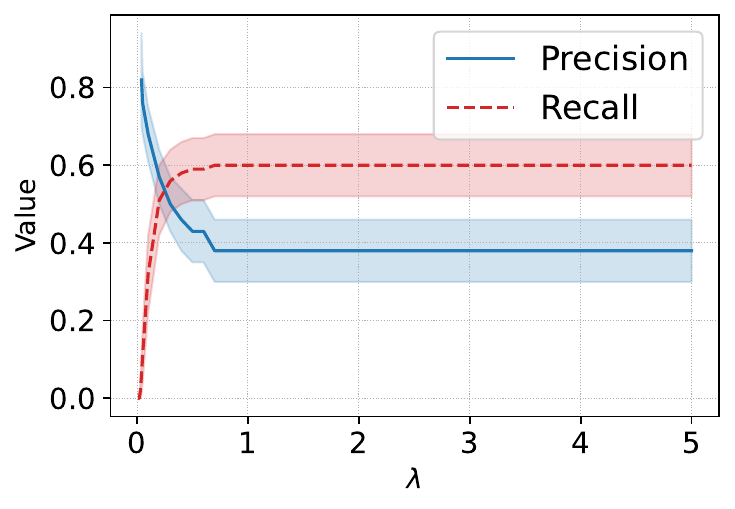}
    }
    \subfigure[SCORE, SF3, $n=100$]{
        \includegraphics[width=0.3\textwidth]{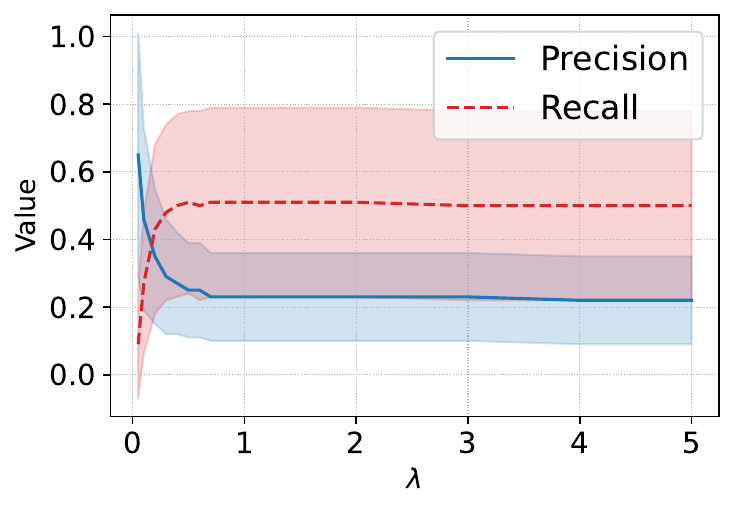}
    }
    \subfigure[DAGGNN, ER3, $n=100$]{
        \includegraphics[width=0.3\textwidth]{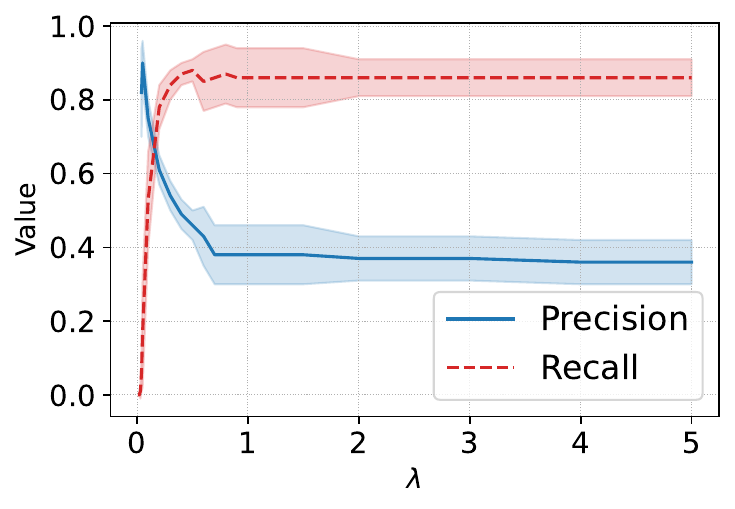}
    }
    \caption{Precision–recall trade-off under varying $\lambda$, where threshold $t=0.5$.}
    \label{fig_lambda}\vspace{-6pt}
\end{figure}

\begin{wraptable}{r}{0.6\textwidth}
\centering
\caption{Results on the Sachs protein-signaling network.}
\begin{tabular}{lcccc}
\toprule
Method & FDR↓ & TPR↑ & SHD↓ & SID↓ \\
\midrule
GOLEM           & 0.80 & 0.26 & 16 & 50 \\
\quad\textbf{+VISTA}       & 0.57 & 0.18 & 16 & 48 \\\addlinespace
SCORE           & 0.81 & 0.18 & 18   & 57 \\
\quad\textbf{+VISTA}      & 0.60 & 0.12 & 15 & 53 \\\addlinespace
DAG-GNN          & 0.50 & 0.12 & 15 & 54 \\
\quad\textbf{+VISTA}       & 0.25 & 0.18 & 14 & 52 \\\addlinespace
GraN-DAG         & 0.82 & 0.53 & 16 & 48 \\
\quad\textbf{+VISTA}       & 0.00 & 0.29 & 12 & 45 \\
\bottomrule
\end{tabular}
\label{tab:sachs}
\end{wraptable}
\subsection{Real data}
We further evaluate all methods on the well-known Sachs protein signaling network based on expression levels of proteins and phospholipids \cite{sachs}. This benchmark is widely used in causal discovery research, and the ground-truth graph with $11$ nodes and $17$ directed edges is consistently accepted by the community. 
  
Here we trained normalized data with 853 samples and reported the results in  Table~\ref{tab:sachs}. Incorporating VISTA consistently reduces false discoveries and improves structural accuracy, measured by SHD and SID \cite{7047938} across different baselines. This highlights that VISTA is a plug-and-play module that can reliably enhance the performance of arbitrary causal discovery algorithms.

\section{Conclusion}\label{sec: conclusion}
In this paper, we introduced VISTA, a scalable and model-agnostic framework for causal discovery that decomposes global structure learning into Markov Blanket neighborhoods, aggregates them via a weighted voting scheme, and enforces acyclicity through FAS post-processing. The design is fully parallelizable, and the aggregation step operates only at the edge level, enabling efficient exploration of operating points regardless of the base learner. Theoretically, we establish finite-sample error guarantees and asymptotic consistency under mild conditions. Empirically, across diverse graph families and base learners, VISTA improves accuracy and runtime efficiency, typically increasing precision without sacrificing recall.

Despite the favorable performance of VISTA, the framework has several limitations. First, when aggregating local graphs, latent confounding introduced by restricting the learner to subsets may produce high-confidence redundant edges. In some cases these edges do not necessarily participate in cycles and our current framework can only mitigate them through the combination of GreedyFAS and threshold-based filtering. Moreover, although the FAS projection guarantees acyclicity, it may also prune edges that are weakly supported yet correct, which can negatively affect downstream tasks that are sensitive to edge directions. Future work includes incorporating interventional data to improve orientation accuracy and extending the VISTA framework to online settings for large-scale applications.

\bibliography{ref}
\bibliographystyle{plain}

\newpage

\appendix

\section{Algorithm of VISTA}

\begin{algorithm}[h]
\caption{VISTA-Weighted Voting}
\label{alg:voting_merging}
\begin{algorithmic}[1]
\REQUIRE A set of local subgraphs $\{\mathcal{G}_V: V \in \mV\}$, each induced by the Markov Blanket of node $V$; hyperparameters $\lambda$ and threshold $t$.
\STATE Initialize zero matrix \texttt{EdgeCount} to record counts for each edge $V_i \to V_j$ where $V_i, V_j \in \mV$.
\FOR{each local subgraph $\mathcal{G}_V$}
    \FOR{each directed edge $V_i \to V_j$ in $\mathcal{G}_V$ \textbf{with} $i \ne j$}
        \STATE Increment \texttt{EdgeCount}[$V_i, V_j$] by 1.
    \ENDFOR
\ENDFOR
\STATE Compute the \texttt{Occurrence} matrix as $\texttt{Occurrence} \leftarrow \texttt{EdgeCount} + \texttt{EdgeCount}^{\top}$
\STATE Compute the coefficient matrix elementwise: $\texttt{Coef} \leftarrow 1 - \exp(-\lambda \cdot \texttt{Occurrence})$.
\STATE Compute the merged weighted directed graph $\mathcal{G}_{1} = \texttt{Coef}\odot\texttt{EdgeCount}/\texttt{Occurrence}$.
\STATE Use Algorithm \ref{alg:fas} to break cycles in $\mathcal{G}_{1}$ and obtain a DAG $\mathcal{G}_{2}$.
\STATE Remove edges in $\mathcal{G}_{2}$ whose weights are less than threshold $t$ to obtain the final DAG $\mathcal{G}$.
\RETURN the global causal graph $\mathcal{G}$.
\end{algorithmic}
\end{algorithm}

\section{Related Works}
\label{app: detailed related works}
\textbf{General Causal Discovery Methods:} Classical algorithms recover Directed Acyclic Graphs (DAGs) by either testing conditional independencies or maximizing a score on the discrete space of graphs.  Constraint–based methods such as \textsc{PC} and \textsc{FCI} \cite{colombo2012learning, spirtes2000causation} iteratively remove edges whose endpoints become independent given bounded‐size conditioning sets. Assuming faithfulness, with only observational data, a common result in causal discovery shows that one can only recover the causal graph up to its Markov Equivalence Class (MEC)~\cite{andersson1997characterization,verma2022equivalence}. Therefore, interventional data is usually required to fully recover the graph. Many works propose algorithms that aim to learn the graph with minimal interventional data~\cite{choo2022verification,hauser2012characterization,he2008active,shanmugam2015learning,   squires2020active,  zhousample}. Score–based searches, e.g., GES \cite{chickering2002optimal} and exact DP‐based optimizers \cite{chickering2004large}, evaluate a decomposable metric (BIC, MDL) while heuristically exploring the super‐exponential DAG space.  Hybrid strategies typified by MMHC \cite{tsamardinos2006max} first identify each variable’s Markov Blanket and then run a restricted greedy search.   Although provably sound under the causal Markov and faithfulness assumptions, all three lines are NP‐hard and their run time or memory grows super-polynomially with node count, limiting practical use to \(\lesssim 10^2\) variables.

Ordering-based methods constitute a distinct and increasingly influential category. These approaches first attempt to infer a topological ordering of variables and then determine parent sets accordingly. Early examples such as DirectLiNGAM~\cite{shimizu2011directlingam} and RESIT~\cite{peters2014causal} exploit non-Gaussianity or additive-noise assumptions to infer edge directions from regression residuals. CAM~\cite{buhlmann2016cam} extends this idea to nonlinear settings via generalized additive models and greedy order search. More recently, SCORE~\cite{rolland2022score} proposes to identify causal ordering by minimizing the variance of the score function, which has inspired several scalable extensions leveraging score-matching or diffusion-based estimation~\cite{montagna2023nogam,montagna2023scalable,sanchez2023diffan}. These methods achieve promising empirical results on graphs with thousands of nodes, but typically rely on strong functional assumptions and remain sensitive to latent confounders.

Besides, recent years have seen a growing emphasis on continuous and differentiable formulations in causal structure learning, aiming to overcome the combinatorial challenges associated with discrete DAG optimization.   NOTEARS \cite{zheng2018dags}, DAG-GNN \cite{yu2019dag}, GraN-DAG \cite{lachapellegradient}, and their low-rank or log-det variants \cite{bello2022dagma,fang2023low} convert acyclicity into a smooth penalty and learn graphs via gradient descent.  Reinforcement learning and meta-learning schemes \cite{wang2021ordering,zhu2019causal,lippe2021efficient} treat node ordering as a policy and bypass explicit acyclicity constraints.  These methods alleviate combinatorial search but still entail an \(O(d^2)\) adjacency parameterization or an \(O(d^3)\) matrix exponential, so GPU memory becomes a bottleneck beyond a few hundred nodes. In summary, although continuous optimization and ordering-based heuristics mitigate the need for discrete search, general-purpose methods typically incur \(\mathcal{O}(d^2)\) memory overhead or rely on restrictive assumptions, which constrains their applicability to graphs of moderate size.



\textbf{Large-Scale Causal Discovery:} To push causal discovery into the high-dimensional regime, researchers have explored sparsity-aware and parallel variants of the above paradigms.  Fast Greedy Search (FGS) \cite{ramsey2017million} and parallel‐PC \cite{le2016fast} cache CI tests and distribute computations over multi-core CPUs, handling tens of thousands of genes.  In the continuous camp, DAGMA \cite{bello2022dagma} and NOTEARS-LowRank \cite{fang2023low} reduce memory usage by factorizing the weight matrix, achieving 5k–10k nodes on a single GPU, while Amortized Causal Discovery \cite{lowe2022amortized} shares a latent decoder across samples to scale to massive time-series.  Bootstrap and bagging strategies aggregate multiple weak graphs to improve stability without increasing per-run complexity \cite{wu2023practical,kaiser2024blockbootstrap}.  Despite these advances, most scalable algorithms either rely on heavy solvers (e.g., SDP/MILP), strong sparsity assumptions, or lack finite-sample guarantees, motivating alternative divide-and-conquer solutions. As a complementary approach, our proposed VISTA framework addresses these challenges through modular subgraph decomposition and lightweight aggregation, while providing finite-sample error control and scalability to graphs with a large scale of nodes.

\textbf{Scalable or Modular Structure Learning:} Partition-based approaches decompose the global graph into overlapping neighbourhoods, learn local substructures, and then reconcile conflicts.  Early local-to-global techniques grow random neighbourhoods until conditional independence saturates \cite{gao2017local}.  \cite{gu2020learning,huang2022latent} apply hierarchical clustering before local search, whereas \cite{shah2024causal} first estimates a coarse skeleton and then partitions it to learn subgraphs in parallel.  DCILP \cite{dong2024dcdilp} formulates the fusion step as an integer program that guarantees optimal conflict resolution but suffers from MILP infeasibility on dense regions.  Recent ensemble methods perform Markov‐Blanket bootstrap with majority or confidence‐weighted voting \cite{wu2023practical,ban2024differentiable}, yet provide limited theoretical analysis of the aggregated error. 

Our method VISTA follows the divide-and-conquer paradigm but departs from prior work by integrating a frequency-aware weighted voting mechanism that admits closed-form error analysis, and by enforcing global acyclicity through a lightweight GreedyFAS post-processing step instead of solving large-scale ILPs. These design choices lead to near-linear memory usage, full parallelizability, and theoretical consistency guarantees, enabling scalable causal discovery on graphs with thousands of nodes.

\section{Pseudocode of Feedback Arc Set}
\label{supp_algo_fas}

After obtaining a directed graph with weighted edges from the voting stage, the final step is to enforce acyclicity, formulated as a \emph{feedback arc set (FAS)} problem. Since exact FAS is NP-hard, we adopt a greedy approximation based on node degree imbalance.

For each node $V_i \in \mV$, let $d^o(V_i)$ and $d^i(V_i)$ be its out- and in-degrees, and define imbalance $\delta(V_i) = d^o(V_i) - d^i(V_i)$. At each iteration, we remove one node: sources are appended to a sequence $s_1$, sinks are prepended to a sequence $s_2$, and if neither exists, we select the node with the largest absolute imbalance $|\delta(V_i)|$. This process continues until all nodes are removed, yielding a topological order $s = s_1 \texttt{//} s_2$.

Given this order, any edge $(V_i, V_j)\in\mE$ that points from a later node to an earlier node in $s$ is marked as a backward edge. These are sorted by weight and the lightest ones are iteratively removed until the graph becomes acyclic. Algorithm~\ref{alg:fas} summarizes the procedure.

\begin{algorithm}[h]
\caption{Solve FAS to guarantee acyclicity on the weighted directed graph}
\label{alg:fas}
\begin{algorithmic}[1]
\REQUIRE A weighted directed graph $\mcG=(\mV, \mE)$, where $E_{UV}$ denotes the edge from $U \in \mV$ to $V \in \mV$ and $w_{UV}>0$ is its weight.
\STATE For the ease of description, $\mcG^{\prime}$ is a copy of input graph $\mcG$.
\STATE Initialize two empty sequences $s_1 \leftarrow \emptyset, s_2 \leftarrow \emptyset$, and a backward edge set $b \leftarrow \emptyset$.

\WHILE{$\mcG \neq \emptyset$}
    \IF{$\mcG$ contains a source (or a sink)}
        \STATE choose $U$ as the source (sink) with maximum (minimum) $\delta(\cdot)$
        \STATE $s_1 \leftarrow s_1 // U$ \; ($s_2 \leftarrow U // s_2$)
        \STATE $\mV \leftarrow \mV \setminus \{U\}$;
               $\mE \leftarrow \mE \setminus \{E_{UX},E_{XU}\mid X\in \mV\}$
        \STATE $\mcG \leftarrow (\mV,\mE)$
    \ENDIF
\ENDWHILE
\STATE The topological ordering is $s = s_1 // s_2$
\FOR{$E_{UV}$ in the input graph $\mcG^{\prime}$}
    \IF {$U$ is after $V$ in $s$}
        \STATE $b \leftarrow b // E_{UV}$
    \ENDIF
\ENDFOR
\STATE Sort $b$ in ascending order according to $w_{UV}$
\WHILE{$\mcG^{\prime}$ is not a DAG}
    \STATE remove the edge with smallest $w_{UV}$ from $\mcG^{\prime}$
\ENDWHILE

\RETURN The directed acyclic causal graph $\mcG^{\prime}$.
\end{algorithmic}
\end{algorithm}

\section{Statistical Accuracy Analysis of Weighted Voting}
\label{supp_E}
This section provides a theoretical analysis of the statistical behavior of the weighted voting mechanism introduced in Section \ref{sec_vista}. The goal is to characterize the conditions under which a candidate edge is correctly retained or excluded based on its empirical directional support. The analysis builds on a probabilistic interpretation of the weighted score as a posterior expectation, and derives sufficient conditions for edge-level accuracy using concentration inequalities.

We begin by examining the relationship between the weighting parameter $\lambda$, the empirical support rate, and the effective threshold. We then establish a general bound on the probability of edge-level error under the weighted voting rule, and provide sufficient conditions under specific support distributions that guarantee accurate recovery.

\subsection{Bayesian Motivation for the Weighted Voting Rule}
\label{supp_motivation}
Specifically, we show that the score can be viewed as the posterior mean under a Beta prior whose influence diminishes as the number of supporting subgraphs increases. We first consider each edge direction $X \to Y$ as a binary decision problem. Suppose each local subgraph that includes both $X$ and $Y$ independently votes for one of the two directions: $X \to Y$ or $Y \to X$. Let $A$ and $B$ denote the number of times each direction appears, and let $m = A + B$ be the total number of subgraphs providing directional evidence.

A natural approach is to model the true support probability $p = \Pr(X \to Y)$ using a Beta prior:
\begin{equation*}
p \sim \mathrm{Beta}(\alpha, \beta),    
\end{equation*}
so that the posterior mean becomes
\begin{equation}
\mathbb{E}[p \mid A] = \frac{A + \alpha}{m + \alpha + \beta}.
\end{equation}
In classical Laplace smoothing, a fixed prior such as $\mathrm{Beta}(1, 1)$ adds uniform pseudo-counts regardless of sample size. However, in our setting, most candidate edges are supported by very few subgraphs. The fixed priors are therefore either too weak to suppress noise or too strong to allow learning when evidence grows.

We therefore introduce a data-dependent pseudo-count that decreases with $m$. Specifically, we set $\alpha = 0$, and define an effective prior strength:
\begin{equation}
\beta = \kappa(m) := \frac{m e^{-\lambda m}}{1 - e^{-\lambda m}},
\end{equation}
where $\lambda > 0$ is a tunable parameter. This yields the posterior mean:
\begin{equation}
\mathbb{E}[p \mid A] = \frac{A}{m + \kappa(m)} = \left(1 - e^{-\lambda m}\right) \cdot \frac{A}{m}.
\end{equation}
Thus, our weighted score function $s(X \to Y)$ can be viewed as the posterior mean under a Beta prior whose strength vanishes exponentially as the number of supporting subgraphs increases. When $m$ is small, the exponential decay is slow, and the prior contributes a significant regularization, effectively suppressing low-support edges. As $m$ grows, the prior influence rapidly vanishes, and the score approaches the empirical frequency $A/m$, recovering naive voting.

The hyperparameter $\lambda$ controls how quickly the prior decays. A larger $\lambda$ yields more aggressive penalization for rare edges, while a smaller $\lambda$ allows quicker adaptation to the empirical signal. This dynamic pseudo-count interpretation explains the design of our exponential weight $1 - e^{-\lambda m}$ and its effectiveness in controlling false positives in sparse and noisy settings.

\subsection{Proof of Theorem 3.2}

\textbf{Theorem 3.2} (Sufficient Condition for Weighted Voting Accuracy)
\textit{Let $A \sim \operatorname{Binomial}(m, p)$ represent the number of successful votes in $m$ independent subgraphs for the edge direction $X_1 \to X_2$, where each subgraph supports this direction independently with probability $p \in (0,1)$, decision threshold $t \in (0,1)$ and the weight function $w(m) = 1 - e^{-\lambda m}, \quad \lambda > 0$.  
Assume the effective threshold for accept the edge direction $X_1 \to X_2$ is $r(m) =\frac{t}{1 - e^{-\lambda m}}< p$, i.e., the true support rate $p$ is above the effective threshold. Then, if
\begin{equation*}
    \frac{m p}{2} \left(1 - \frac{t}{p (1 - e^{-\lambda m})} \right)^2
\ge \log\frac{1}{\epsilon},
\end{equation*}
it follows that $P\left(s(A) \geq t \right) \ge 1 - \epsilon$.
}

\begin{proof}
Our goal is to show that 
\begin{equation}
P\Bigl(s = \bigl[1 - \exp(-\lambda m)\bigr]\cdot \tfrac{A}{m} \ge t\Bigr)\ge 1 - \epsilon,
\end{equation}
where $A \sim \mathrm{Binomial}(m,p)$ and $m$ is the number of (independent) subgraphs or subsamples considered. Rewriting $s \ge t$ gives
\begin{equation*}
\label{shift_chernoff}
\bigl[1 - \exp(-\lambda m)\bigr]\cdot \frac{A}{m} \ge t\quad\Longleftrightarrow\quad\frac{A}{m} \ge\frac{t}{1 - \exp(-\lambda m)}.
\end{equation*}
For notational simplicity, we define $r=\frac{t}{1 - \exp(-\lambda m)}.$
Hence, our goal becomes ensuring $P(A \ge mr) \ge 1 - \epsilon$. Since $A$ is a binomial random variable $A \sim \operatorname{Binomial}(m,p)$, $\mathbb{E}[A] = mp$, we therefore have the Chernoff bound, states that,
\begin{equation}
    P\Bigl(A \le (1-\delta)mp\Bigr) \le \exp\Bigl(-\tfrac{\delta^2}{2}mp\Bigr),
\end{equation}
for any $0 < \delta < 1$. Subsequently, we set $(1-\delta)mp = mr$, i.e.,
$\delta = 1 - \frac{r}{p}$.
Note that for this $\delta$ to be positive (so that the Chernoff bound form applies), we need $r < p$. In other words,
\begin{equation*}
\frac{t}{1 - \exp(-\lambda m)}=r<p,
\end{equation*}
which is the intuitive condition that the true probability $p$ exceeds the effective threshold $r$. With the above definition of $\delta$,
\begin{equation*}
P\Bigl(A < mr\Bigr) =P\Bigl(A \le (1-\delta)mp\Bigr)\le\exp\Bigl(-\tfrac{mp}{2}\bigl(1 - \tfrac{r}{p}\bigr)^2\Bigr).
\end{equation*}

Hence
\begin{equation}
P\Bigl(A \ge mr\Bigr)\ge1 - \exp\Bigl(-\tfrac{mp}{2}\bigl(1 - \tfrac{r}{p}\bigr)^2\Bigr).
\end{equation}
To ensure this probability is at least $1 - \epsilon$, we impose
\begin{equation*}
\exp\Bigl(-\tfrac{mp}{2}\bigl(1 - \tfrac{r}{p}\bigr)^2\Bigr)\le \epsilon.
\end{equation*}
Since $r = \tfrac{t}{1 - e^{-\lambda m}}$, this condition explicitly becomes
\begin{equation}
\frac{mp}{2}\Bigl(1 - \frac{t}{p(1 - e^{-\lambda m})}\Bigr)^2\ge\log \frac{1}{\epsilon}.
\end{equation}
Therefore, whenever (\ref{eq:thm4.4_condition}) and $r < p$ is satisfied, we have
\begin{equation}
P\Bigl(\bigl[1-e^{-\lambda m}\bigr]\cdot \tfrac{A}{m} \ge t\Bigr)=P\Bigl(A \ge mr\Bigr)\ge 1 - \epsilon.
\end{equation}
Hence the theorem follows.
\end{proof}

\subsection{Proof of Corollory 3.3}



\textbf{Corollory 3.3} (Upper bound of node in subgraphs)\textit{
Let $\lambda>0$, $t\in(0,1)$, and $\epsilon\in(0,1)$ be fixed.
For a candidate edge $(X,Y)$, denote by $m$ the number of local subgraphs whose Markov Blankets contain both endpoints. Under the setting of Theorem \ref{thm_accuracy}, the sufficient condition (\ref{eq:thm4.4_condition}) can be converted into the explicit bound
\begin{equation*}
    m \geq \frac{2 \log(1/\epsilon)}{p \left( (1 - t/p)^2 - 2(t/p)(1 - t/p)e^{-\lambda} \right)},
\end{equation*}}

\begin{proof}
We first define $y = \exp(-\lambda m)$. Then, by the conclusion of Theorem 4.4, we obtain
\begin{equation}\label{eq_proof_cor}
    -\frac{p}{2\lambda} \log y \left(1 - \frac{t}{p (1 - y)}\right)^2
\ge \log\frac{1}{\epsilon}.
\end{equation}
Next, we consider the first-order Taylor expansion:
\begin{equation}
    \begin{aligned}
        \left(1-\frac{t}p\frac{1}{1-y}\right)^2&=\left[1-\frac{t}p-\frac{t}py+O(y^2)\right]^2\\
        &=\left[\gamma-\theta y+O(y^2)\right]^2\\
        &=\gamma^2-2\theta\gamma y+O(y^2),
    \end{aligned}
\end{equation}
where we set $\theta = \frac{t}{p}$ and $\gamma = 1 - \frac{t}{p}$. Therefore, (\ref{eq_proof_cor}) becomes
\begin{equation}
    \log y \left[\gamma^2-2\theta\gamma y+O(y^2)\right] \leq -\frac{2\lambda}{p}\log\frac{1}{\epsilon}.
\end{equation}

Therefore, by substituting $\log y = -\lambda m$ and dropping the $O(y^2)$ term (since $y^2$ is small enough), we get an approximate condition:
\begin{equation}
\label{eq_inC_2}
m(\gamma^2 - 2\theta \gamma e^{-\lambda m}) \geq \frac{2}{p} \log \frac{1}{\epsilon}.
\end{equation}

This is an implicit condition on $m$. To derive an explicit and sufficient lower bound, we strengthen the left-hand side. Since $m \geq 1$, we have $e^{-\lambda m} \leq e^{-\lambda}$, therefore
\begin{equation*}
\gamma^2 - 2\theta \gamma e^{-\lambda m} \geq \gamma^2 - 2\theta \gamma e^{-\lambda}.
\end{equation*}

Let $K_\lambda = \gamma^2 - 2\theta \gamma e^{-\lambda}$. To ensure the lower bound is positive, we require $\gamma^2 > 2\theta \gamma e^{-\lambda}$, or equivalently $\gamma > 2\theta e^{-\lambda}$ (since $\gamma = 1 - t/p > 0$). This condition simplifies to $1 - t/p > 2(t/p)e^{-\lambda}$.

By our Theorem \ref{thm_lambda}, since $\lambda > \frac{1}{m}\log(1-t)$, we have
\begin{equation*}
    \frac{t}{p}\left(2e^{-\lambda}+1\right)<\frac{t}{p}\left(2(1-t)^{\frac{1}m}+1\right)<1.
\end{equation*}
This inequality is easily satisfied and thus represents a very mild condition. Consequently, we can regard the lower bound $K_\lambda > 0$ as being established. Then the inequality ($\ref{eq_inC_2}$) is satisfied under a stronger condition:

\begin{equation*}
m \cdot K_\lambda \geq \frac{2}{p} \log \frac{1}{\epsilon}.
\end{equation*}

Solving for $m$ gives an explicit lower bound:
\begin{equation*}
m \geq \frac{2 \log(1/\epsilon)}{p K_\lambda} = \frac{2 \log(1/\epsilon)}{p(\gamma^2 - 2\theta \gamma e^{-\lambda})}.
\end{equation*}

Substituting the definitions of $\gamma$ and $\theta$, we obtain:
\begin{equation*}
m \geq \frac{2 \log(1/\epsilon)}{p \left( (1 - t/p)^2 - 2(t/p)(1 - t/p)e^{-\lambda} \right)}.
\end{equation*}

\end{proof}

\section{Discussion of the Structure‑Aware Error Bound}
\label{supp_F}
The weighted voting procedure serves as the core mechanism for aggregating local subgraph estimates into a global DAG. While this method adjusts edge confidence based on empirical support, its effectiveness ultimately depends on the ability to balance false positives and false negatives across the merged graph. To better understand this behavior, we analyze the global error induced by the weighted voting rule and how it interacts with the sparsity of the graph, the choice of voting threshold, and the distribution of subgraph overlaps.

This section formalizes that analysis. We first derive a decomposition of the total error into false positive and false negative components, followed by a structure-aware upper bound based on the union bound. The role of the weighting parameter $\lambda$ is then examined in detail, culminating in formal proofs of Theorem \ref{thm_lambda} and Theorem \ref{thm_asymptotic}, which establish a feasible range for $\lambda$ and the asymptotic vanishing of global error, respectively. These bounds are further instantiated under Erdős–Rényi (ER) and scale-free (SF) graph models to characterize how graph topology influences the merging accuracy.

To begin with, we formalize the decomposition of the global error into \emph{false negatives} (FN) and \emph{false positives} (FP), and derive a structure-aware upper bound based on the union bound. We summarized it into the following lemma:

\begin{lemma}[Structure‑aware global error bound]
\label{lem:struct_bound}
Each candidate directed edge $(V_i, V_j)$ is evaluated in $m_{ij}$ independent local sub‑graphs whose Markov Blankets contain both endpoints.
\begin{itemize}
    \item For a \textbf{true} edge, the vote count obeys $A_{ij}\sim\mathrm{Binomial}(m_{ij},p)$.
    \item For a \textbf{false} edge, $A_{ij}\sim\mathrm{Binomial}(m_{ij},q)$ with $p>q$.
\end{itemize}
Using the weighted rule
\begin{equation*}
  s_{ij}=\bigl[1-e^{-\lambda m_{ij}}\bigr]\frac{A_{ij}}{m_{ij}}\ge t,
  \qquad
  r_{\lambda}(m_{ij})=\frac{t}{1-e^{-\lambda m_{ij}}},
\end{equation*}
assume $p>r_{\lambda}(m_{ij})$ and $q<r_{\lambda}(m_{ij})$ for every edge.  
Then
\begin{equation}
  \Pr(\text{global error})
  \le
  \underbrace{\sum_{(i,j)\in E^\ast}
      e^{-2m_{ij}\left[p-r_{\lambda}(m_{ij})\right]^2}}_{\text{FN contribution}}
  +
  \underbrace{\sum_{(i,j)\notin E^\ast}
      e^{-2m_{ij}\left[r_{\lambda}(m_{ij})-q\right]^2}}_{\text{FP contribution}},
\end{equation}
where $E^\ast$ denotes the ground‑truth edge set.
\end{lemma}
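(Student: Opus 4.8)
\textbf{Proof plan for Lemma~\ref{lem:struct_bound}.}
The plan is to control the two ways a global error can arise—missing a true edge (FN) or including a false edge (FP)—separately, bound each edge's failure probability by a single concentration estimate, and then glue the per-edge bounds together with a union bound over the $O(n^2)$ candidate pairs. First I would observe that, by the algebra already recorded in the proof of Theorem~\ref{thm_accuracy}, the weighted acceptance event $s_{ij}\ge t$ is exactly equivalent to the empirical frequency event $A_{ij}/m_{ij}\ge r_\lambda(m_{ij})$, where $r_\lambda(m_{ij})=t/(1-e^{-\lambda m_{ij}})$. This reduces everything to tail bounds for a Binomial mean around a fixed level $r_\lambda(m_{ij})$, which is the clean object Hoeffding's inequality handles.

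Next I would split into the two cases. For a true edge $(i,j)\in E^\ast$ we have $A_{ij}\sim\mathrm{Binomial}(m_{ij},p)$ with $p>r_\lambda(m_{ij})$ by assumption; the edge is \emph{missed} precisely when $A_{ij}/m_{ij}<r_\lambda(m_{ij})$, i.e. when the empirical mean falls below its expectation $p$ by at least $p-r_\lambda(m_{ij})>0$. Hoeffding's inequality for a sum of $m_{ij}$ i.i.d.\ Bernoulli$(p)$ variables then gives
\begin{equation*}
\Pr\!\left(\tfrac{A_{ij}}{m_{ij}}<r_\lambda(m_{ij})\right)\le e^{-2m_{ij}[p-r_\lambda(m_{ij})]^2}.
\end{equation*}
Symmetrically, for a false edge $(i,j)\notin E^\ast$ we have $A_{ij}\sim\mathrm{Binomial}(m_{ij},q)$ with $q<r_\lambda(m_{ij})$; the edge is \emph{wrongly included} when $A_{ij}/m_{ij}\ge r_\lambda(m_{ij})$, i.e.\ the empirical mean exceeds its expectation $q$ by at least $r_\lambda(m_{ij})-q>0$, and the upper-tail Hoeffding bound yields $e^{-2m_{ij}[r_\lambda(m_{ij})-q]^2}$. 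Here I would use that the votes across the $m_{ij}$ local subgraphs are independent (the idealized assumption already flagged after Theorem~\ref{thm_accuracy}), so the Bernoulli summands are genuinely i.i.d.\ and Hoeffding applies directly.

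Finally I would assemble the result: a global error occurs iff \emph{some} true edge is missed or \emph{some} false edge is included, so by the union bound $\Pr(\text{global error})$ is at most the sum over $(i,j)\in E^\ast$ of the FN probabilities plus the sum over $(i,j)\notin E^\ast$ of the FP probabilities, which is exactly the claimed inequality. The steps are individually routine; the only real subtlety—and the place the statement is doing work—is the independence assumption across local subgraphs, which is what licenses the clean Hoeffding bound per edge. I would note explicitly that without it one would need a dependent-variable concentration inequality and a correspondingly weaker constant, consistent with the remark following Theorem~\ref{thm_accuracy}. A minor secondary point is that the hypotheses $p>r_\lambda(m_{ij})$ and $q<r_\lambda(m_{ij})$ must hold for every edge so that each deviation $p-r_\lambda(m_{ij})$ and $r_\lambda(m_{ij})-q$ is strictly positive and the exponents are genuine decay rates rather than vacuous; this is assumed in the statement, so no extra argument is needed.
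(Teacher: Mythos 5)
Your proposal is correct and follows essentially the same route as the paper's proof: rewrite $s_{ij}\ge t$ as $A_{ij}/m_{ij}\ge r_\lambda(m_{ij})$, apply Hoeffding's lower-tail bound for true edges and the symmetric upper-tail bound for false edges, then combine via the union bound over all candidate edges. Your added remarks on the independence assumption and the strict positivity of the margins are consistent with the paper's stated hypotheses and do not change the argument.
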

\begin{proof}
For $(V_i, V_j)\in \mE^\ast$, we have
\begin{equation}
  \Pr\bigl(\text{FN on }(V_i, V_j)\bigr)
  =
  \Pr \bigl(A_{ij}/m_{ij}<r_\lambda(m_{ij})\bigr)
  \le
  e^{-2m_{ij}\left[p-r_{\lambda}(m_{ij})\right]^2}
\end{equation}
by Hoeffding’s inequality.  
A symmetric argument gives the FP term for $(V_i, V_j)\notin \mE^\ast)$.  
Finally, the union bound over all edges yields the claimed inequality.
\end{proof}

\begin{corollary}[Worst‑case simplification]
\label{cor:min_m}
If $m_{ij}\ge m_{\min}$ for all edges, then
\begin{equation}
  \Pr(\text{global error})
  \le
  N_{\text{FN}}~
    e^{-2m_{\min}\bigl[p-r_{\lambda}(m_{\min})\bigr]^2}
  +
  N_{\text{FP}}~
    e^{-2m_{\min}\bigl[r_{\lambda}(m_{\min})-q\bigr]^2},
\end{equation}
where $N_{\text{FN}}=|\mE^\ast|$ and
$N_{\text{FP}}=\binom{n}{2}-N_{\text{FN}}$ for a graph with $n$ nodes.
\end{corollary}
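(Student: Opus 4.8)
The plan is to obtain the corollary directly from the two-sum estimate in Lemma~\ref{lem:struct_bound}: replace each edge-dependent exponent by a single worst-case exponent controlled by $m_{\min}$, and then count the summands of each type. So I would start from
\[
\Pr(\text{global error})
\le
\sum_{(i,j)\in \mE^\ast} e^{-2m_{ij}[p-r_{\lambda}(m_{ij})]^2}
+
\sum_{(i,j)\notin \mE^\ast} e^{-2m_{ij}[r_{\lambda}(m_{ij})-q]^2}
\]
and bound each summand by its value at $m_{ij}=m_{\min}$.

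For the false-negative sum this is immediate. We have $m_{ij}\ge m_{\min}$, and since $r_{\lambda}(m)=t/(1-e^{-\lambda m})$ is strictly decreasing in $m$ we get $r_{\lambda}(m_{ij})\le r_{\lambda}(m_{\min})<p$, hence $0<p-r_{\lambda}(m_{\min})\le p-r_{\lambda}(m_{ij})$. Multiplying the two inequalities gives $m_{ij}[p-r_{\lambda}(m_{ij})]^2\ge m_{\min}[p-r_{\lambda}(m_{\min})]^2$, so every false-negative summand is at most $e^{-2m_{\min}[p-r_{\lambda}(m_{\min})]^2}$; since there are $N_{\mathrm{FN}}=|\mE^\ast|$ ground-truth edges, the first term follows. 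For the false-positive sum one argues symmetrically to get each summand $\le e^{-2m_{\min}[r_{\lambda}(m_{\min})-q]^2}$, and there are $N_{\mathrm{FP}}=\binom{n}{2}-|\mE^\ast|$ non-edge slots, giving the second term. Adding the two uniform bounds yields the claim.

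The step that genuinely needs care is the false-positive direction. In the FN case both $m\mapsto m$ and $m\mapsto[p-r_{\lambda}(m)]^2$ are non-decreasing on the admissible range, so their product is; but for FP the factor $[r_{\lambda}(m)-q]^2$ is \emph{non-increasing} in $m$ (as $r_{\lambda}(m)\downarrow t>q$), so $g(m):=m[r_{\lambda}(m)-q]^2$ need not be monotone---in fact it first decreases and then increases. I would close this by invoking the admissible window for $\lambda$ from Theorem~\ref{thm_lambda}, which keeps $\lambda m$ bounded (so $r_{\lambda}$ varies slowly over the relevant range of $m$) and places $m_{\min}$ on the non-decreasing branch of $g$; this holds automatically in the regime $m_{ij}=m=C\log n$ of Theorem~\ref{thm_asymptotic}, where $g$ is essentially linear in $m$. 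As a fallback that needs no such assumption, one can use the crude bound $r_{\lambda}(m_{ij})>t$ to get $e^{-2m_{ij}[r_{\lambda}(m_{ij})-q]^2}\le e^{-2m_{\min}(t-q)^2}$, which proves the corollary with $r_{\lambda}(m_{\min})$ weakened to $t$. The remaining bookkeeping---counting $N_{\mathrm{FN}}$ true edges and $N_{\mathrm{FP}}$ non-edges---is routine, so the monotonicity of $g$ on $[m_{\min},\infty)$ is the only real obstacle.
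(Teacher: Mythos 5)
Your overall route---bounding every summand of Lemma~\ref{lem:struct_bound} by its value at $m_{\min}$ and then counting $N_{\mathrm{FN}}=|\mE^\ast|$ true pairs and $N_{\mathrm{FP}}=\binom{n}{2}-N_{\mathrm{FN}}$ remaining pairs---is exactly the derivation the paper leaves implicit (the corollary is stated with no proof, as a direct specialization of the lemma). Your false-negative half and the counting are correct: since $r_{\lambda}(m)$ decreases in $m$, both $m$ and $p-r_{\lambda}(m)$ are smallest at $m_{\min}$, so the FN exponent is minimized there. You have also put your finger on the genuine subtlety: for false positives the exponent $g(m)=m\,[r_{\lambda}(m)-q]^2$ need not be monotone, because $r_{\lambda}(m)\downarrow t$ as $m$ grows, so the per-edge substitution $m_{ij}\mapsto m_{\min}$ is not automatic. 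The paper's statement silently assumes this (or, equivalently, the uniform case $m_{ij}\equiv m_{\min}$).

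The gap is that your proposed patch does not close this step. The admissible window of Theorem~\ref{thm_lambda} does \emph{not} place $m_{\min}$ on the non-decreasing branch of $g$: writing $g'(m)=(r_{\lambda}-q)\bigl[(r_{\lambda}-q)-\tfrac{2mt\lambda e^{-\lambda m}}{(1-e^{-\lambda m})^{2}}\bigr]$ and taking $\lambda m$ just above the lower endpoint $\ln\tfrac{1}{1-t}$ (so $r_{\lambda}\approx 1$), the bracket is approximately $(1-q)-\tfrac{2(1-t)}{t}\ln\tfrac{1}{1-t}$, which is negative for moderate $t$; e.g.\ $t=\tfrac12$ gives $(1-q)-2\ln 2<0$. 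Hence inside the admissible interval there can be edges with $m_{ij}>m_{\min}$ but $g(m_{ij})<g(m_{\min})$, and the claimed per-summand bound fails; the appeal to the $m=C\log n$ regime of Theorem~\ref{thm_asymptotic} only covers the homogeneous case where the corollary is trivial. Your fallback $r_{\lambda}(m_{ij})>t$ is sound, but it proves only the weaker inequality with exponent $2m_{\min}(t-q)^2$, i.e.\ the corollary with $r_{\lambda}(m_{\min})$ replaced by $t$ (which is in fact the margin the paper uses downstream in Theorem~\ref{thm_asymptotic}). So as a proof of Corollary~\ref{cor:min_m} exactly as stated, the FP half remains open; it needs an added hypothesis (monotonicity of $g$ on $[m_{\min},\infty)$, or $m_{ij}\equiv m_{\min}$) or the weakened exponent---a caveat that applies equally to the paper's own unstated argument.
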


The error bound derived above depends on the effective threshold $r_\lambda(m)$, which is controlled by the weighting parameter $\lambda$. To understand the role of this parameter, it is instructive to consider the limiting case $\lambda = 0$, which corresponds to the naive voting scheme. In this case, the weight term disappears, and the edge inclusion rule reduces to comparing the raw directional frequency $A/m$ against the fixed threshold $t$.

\begin{remark}[Naive voting baseline]
\label{remark_naive}
If we drop the weight and decide solely on the unweighted fraction
$\frac{A_{ij}}{m_{ij}}\ge t$, Lemma~\ref{lem:struct_bound} specialises to
\begin{equation}
  \Pr(\text{global error})
  \le
  \sum_{(i,j)\in E^\ast} e^{-2m_{ij}(p-t)^2}
  +\sum_{(i,j)\notin E^\ast} e^{-2m_{ij}(t-q)^2}.
\end{equation}
\end{remark}

In sparse graphs, where the number of candidate false positive edges vastly exceeds the number of true positives (i.e., $N_{\text{FP}}\gg N_{\text{FN}}$), the overall error is typically dominated by the first summation term. Therefore, a moderate increase in $\lambda$ can lead to a significant reduction in total error by aggressively penalizing low-support spurious edges, even if it slightly increases the false negative rate. This trade-off is particularly favorable in high-dimensional settings, where controlling the false discovery rate is often more critical than maximizing recall. These insights align with the empirical results reported in Section \ref{sec_lambda}, where the weighted voting scheme consistently improves FDR without severely compromising TPR across a wide range of base learners.

\subsection{Influence and Practical Range of the Weight Parameter $\lambda$}
\label{sec:lambda-discussion}

To ensure that the weighted voting mechanism achieves a reliable trade-off between false positives and false negatives, it is necessary to understand how the choice of the weighting parameter $\lambda$ affects the acceptance threshold and the overall error bound. The following derivation provides a characterization of the feasible range of $\lambda$ that satisfies the conditions used in the theoretical analysis of edge decisions. This directly supports the proof of Theorem \ref{thm_lambda} in the main text.

\textbf{Theorem 3.4} (Practical choice of $\lambda$)
\textit{Fix a vote count $m \ge 1$, a decision threshold $t \in (0,1)$, and a target error level $\epsilon \in (0,1)$. 
If $\lambda$ satisfies
\begin{equation*}
    -\frac{1}{m}\ln(1-t)<\lambda\le-\frac{1}{m}\ln\epsilon,
\end{equation*}
then the weighted-vote rule achieves the prescribed error control under the union bound. }
\begin{proof}

Define the Hoeffding-based global error upper bound $\mathcal L(\lambda)= N_{\mathrm{FN}}~e^{-2m_{\min}(p-r_\lambda)^2}+ N_{\mathrm{FP}}~e^{-2m_{\min}(r_\lambda-q)^2}$,
where $N_{\mathrm{FN}}$ ($N_{\mathrm{FP}}$) is the number
of true (false) candidate edges rescaled by their respective
cost coefficients. For notational simplicity, we omit the subscripts, and use $m$ to represent $m_{\min}$ in our later proof. We first differentiate $\mathcal L$ w.r.t.\ $\lambda$:
\begin{equation}
\begin{aligned}
\frac{\partial r_\lambda}{\partial\lambda}
   &= \frac{t m e^{-\lambda m}}{(1-e^{-\lambda m})^{2}} = r_\lambda\frac{m e^{-\lambda m}}{1-e^{-\lambda m}}>0,\\
\frac{\partial \mathcal L}{\partial\lambda}   &= 2m\frac{t e^{-\lambda m}}{(1-e^{-\lambda m})^{2}}\Bigl[N_{\mathrm{FN}}~\delta_pe^{-2m\delta_p^{2}}- N_{\mathrm{FP}}~\delta_q e^{-2m\delta_q^{2}}\Bigr],
\end{aligned}
\end{equation}

with
$\delta_{p}=p-r_\lambda>0$ and $\delta_{q}=r_\lambda-q>0.$

Because $\delta_p$ \emph{increases} and $\delta_q$ \emph{decreases} as $\lambda$ grows, a larger $\lambda$ lowers the false‑negative term (higher \textbf{recall}) but raises the false‑positive term (lower \textbf{precision}). For sparse causal graphs we typically have $N_{\mathrm{FP}}\gg N_{\mathrm{FN}}$, making the second term dominant and hence $\partial\mathcal L/\partial\lambda<0$ \emph{until} the exponential weight saturates. Consequently, increasing~$\lambda$ is beneficial
\emph{only} inside a finite interval.

\paragraph{Upper bound for $\lambda$.}
In the worst-case scenario where all candidate edges are consistently supported in the same direction, the voting scores for both true and false edges become uniformly close to $1-e^{-\lambda m}$. If $1-e^{-\lambda m}\ge 1-\epsilon 
~(0<\epsilon\ll1)$, then even false edges can exceed the decision threshold $t$, leading to a large number of number of false positives. Therefore To avoid such indiscriminate acceptance, $\lambda$ must be chosen to ensure that $1-e^{-\lambda m}$ remains sufficiently below $1$. Solving $e^{-\lambda m}= \epsilon$ gives
\begin{equation}
\boxed{
   0 < \lambda\leq\lambda_{\max}(\epsilon)=-\frac{1}{m}\ln\epsilon
}.
\end{equation}
\paragraph{Lower bound for $\lambda$.}
The effective threshold $r_\lambda(m)=\dfrac{t}{1-e^{-\lambda m}}$ must satisfy $0<r_\lambda<1$; otherwise the acceptance condition $A/m\ge r_\lambda$ can never be met because $A/m\le 1$ by definition. Solving the inequality $r_\lambda<1$ yields
\begin{equation}
\frac{t}{1-e^{-\lambda m}}<1\quad\Longleftrightarrow\quad e^{-\lambda m}<1-t\quad\Longleftrightarrow\quad\boxed{\lambda>\lambda_{\min}:= -\frac{1}{m}\ln(1-t)}.
\end{equation}
Intuitively, when $\lambda$ falls below this bound the exponential weight is so close to $1$ that the prefactor $1-e^{-\lambda m}$ becomes \emph{smaller} than $t$, inflating $r_\lambda$ beyond 1 and blocking every candidate edge, including true ones. Hence $\lambda_{\min}$ is the \emph{viability threshold}: only for $\lambda>\lambda_{\min}$ does the weighted‑voting rule retain a non‑zero recall. Therefore a practical search range is
\begin{equation}
\lambda\in\left[-\frac{1}{m}\ln(1-t),~\frac{1}{m}\ln\epsilon\right],
\end{equation}
within which cross‑validation or the closed‑form condition $\partial\mathcal L/\partial\lambda=0$ can be used to pinpoint an optimal~$\lambda^\star$.
\end{proof}

\paragraph{Exponentially vanishing reversal error.}
For any $\lambda$ in this range and any true edge with support probability $p>r_\lambda$, the probability of being accepted in the reverse direction is $\Pr(\text{reverse})\leq \exp[-2m~(p-r_\lambda)^2],$which decays exponentially with the number of independent subgraphs $m$. This guarantees that the weighted‑voting merger remains statistically consistent as data grow, while a properly chosen $\lambda$ suppresses spurious edges in finite‑sample regimes.

The general error bound depends on the number of subgraphs in which each edge appears. This quantity is influenced by the underlying graph topology. In the following, the behavior of the bound is examined under two commonly used random graph models: Erdős–Rényi and scale-free graphs. The analysis characterizes typical support counts and their implications for the error terms derived in Lemma \ref{lem:struct_bound}.
\subsection{Erd\H{o}s--R\'enyi and Scale-Free Graphs}
\label{f.2}
The error bounds derived in the previous section depend not only on the weighting parameter $\lambda$, but also on the empirical support count $m_{ij}$ which measures the number of subgraphs in which each edge appear. This quantity is influenced by the underlying graph topology and the statistical properties of the Markov Blanket construction.

To understand how $m_{ij}$ behaves in practice, we analyze two representative random graph models: Erdős–Rényi (ER) and scale-free (SF) networks. These models differ significantly in their degree distributions, which in turn affect the overlap patterns among Markov Blankets and the expected frequency with which edges are covered by local subgraphs. The analysis below characterizes typical support rates under each model, providing context for interpreting the global error bounds and informing the expected sample complexity of reliable aggregation.

\begin{theorem}[ER‑$h$ graph]\label{thm:ER}
Let $G\sim ER(n,\theta)$ with edge probability $\theta=h/(n-1)$, and assign directions by a random topological order so that the expected out‑degree is $h$.  
Denote
\begin{equation*}
\delta_p := p-r_{\lambda}(2), \qquad\delta_q := r_{\lambda}(2)-q \quad (\delta_p,\delta_q>0).
\end{equation*}
Then, with probability at least $1-O(\theta^2)$ over the graph draw,
\begin{equation}
\Pr(\text{\emph{global error}})\le\frac{nh}{2}~e^{-4\delta_p^{2}}+\frac{n(n-1)-nh}{2}~e^{-4\delta_q^{2}}+O(\theta^{2}).
\end{equation}
The $O(\theta^{2})$ term covers the negligible fraction of edges whose vote count $m_{ij}>2$.
\end{theorem}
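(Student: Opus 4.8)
\textbf{Proof plan for Theorem~\ref{thm:ER}.}
The plan is to combine the structure-aware union bound of Lemma~\ref{lem:struct_bound} with a concentration/counting argument about how many subgraphs cover a given edge in an ER-$h$ graph. First I would identify the typical value of the coverage count $m_{ij}$. For a candidate ordered pair $(V_i,V_j)$, an edge (true or false) is included in the induced subgraph of node $V_k$ precisely when both $V_i$ and $V_j$ belong to $\{V_k\}\cup\mathrm{MB}(V_k)$. For a true edge $V_i\to V_j$, Proposition~\ref{thm:coverage} already guarantees it is covered by the subgraphs centered at $V_i$ and at $V_j$, so $m_{ij}\ge 2$ deterministically; the claim is that in the sparse regime $\theta=h/(n-1)$ the event $m_{ij}>2$ (some third node $V_k$ having both endpoints in its Markov Blanket) has probability $O(\theta^2)$. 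This is because $V_k$ would need two almost-independent ``hits'' into its blanket, each occurring with probability $O(\theta)$ up to degree fluctuations, and a union over the $n-2$ possible centers still leaves $O(n\theta^2)=O(h\theta)=O(\theta)$ per edge — one then absorbs the aggregate contribution of all such rare events into the stated $O(\theta^2)$ slack (being a bit careful about which quantity the $O(\cdot)$ is measured in; I would state it as a high-probability event over the graph draw). So with probability $1-O(\theta^2)$ every edge has $m_{ij}=2$ exactly.

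Next I would plug $m_{ij}=2$ into Lemma~\ref{lem:struct_bound}. On the good event, every true edge contributes $e^{-2\cdot 2\,[p-r_\lambda(2)]^2}=e^{-4\delta_p^2}$ and every false edge contributes $e^{-4\delta_q^2}$, using the definitions $\delta_p=p-r_\lambda(2)$, $\delta_q=r_\lambda(2)-q$, both positive by the hypotheses $p>r_\lambda(2)>q$. I then need the counts: the expected number of directed true edges is $nh$ (expected out-degree $h$ at each of $n$ nodes), and with concentration — Chernoff on the binomial edge count, or simply working in expectation and folding deviations into the $O(\theta^2)$ term — this is $nh/2$ in the symmetrized ordered-pair bookkeeping used in Corollary~\ref{cor:min_m}; the number of false ordered pairs is the complement $\tfrac{n(n-1)}{2}-\tfrac{nh}{2}$ (or $\binom n2 - N_{\mathrm{FN}}$ in the corollary's notation, matching $n(n-1)-nh$ over $2$). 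Assembling these gives exactly
\[
\Pr(\text{global error})\le \frac{nh}{2}e^{-4\delta_p^2}+\frac{n(n-1)-nh}{2}e^{-4\delta_q^2}+O(\theta^2),
\]
where the trailing term collects both the probability that some edge is covered more than twice and any contribution of edges on the bad graph event.

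The step I expect to be the main obstacle is making the $O(\theta^2)$ bookkeeping honest rather than hand-wavy. There are two intertwined sources: (i) the probability, over the random ER graph and random topological order, that a fixed edge has $m_{ij}>2$, which requires controlling correlations between ``$V_i\in\mathrm{MB}(V_k)$'' and ``$V_j\in\mathrm{MB}(V_k)$'' — these are not independent because a common child of $V_i$ and $V_j$ pulls both into the blanket, and spouse relationships add further coupling; and (ii) aggregating such rare events across all $\binom n2$ edges without the total blowing up. I would handle (i) by writing the blanket-membership event explicitly in terms of the presence of at most a bounded number of connecting edges (parent, child, or a shared child), bounding each configuration's probability by a product of $\theta$'s times a factor for the expected number of intermediate nodes, and checking that the dominant term is $O(\theta^2)$ after the intermediate-node sum; for (ii) I would note that if we only claim the final bound holds \emph{with probability $1-O(\theta^2)$ over the graph draw}, then it suffices to bound the probability that \emph{any} edge is over-covered or the edge count deviates, by a single union bound, and this is where the $1-O(\theta^2)$ in the theorem statement comes from — the $O(\theta^2)$ inside the displayed inequality is then just the residual expected contribution of the at most $O(n^2\theta^2)\cdot$(something small) over-covered edges, each still contributing an exponentially small term, so the product is controlled. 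I would also double check the degree-fluctuation point: the out-degrees are $\mathrm{Binomial}(n-1,\theta)$, concentrated around $h$, so replacing ``out-degree exactly $h$'' by ``expected out-degree $h$'' only perturbs $N_{\mathrm{FN}}$ by lower-order terms, again swept into $O(\theta^2)$.
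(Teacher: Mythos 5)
Your plan follows essentially the same route as the paper's proof: show that the coverage count is $m_{ij}=2$ for all but a rare set of pairs (the paper attributes the extra votes to common children of the two endpoints and models their number as Poisson with mean $(n-2)\theta^{2}$), plug $m=2$ into Lemma~\ref{lem:struct_bound} to obtain the per-edge terms $e^{-4\delta_p^{2}}$ and $e^{-4\delta_q^{2}}$, and then count $N_{\mathrm{FN}}\approx nh/2$ and $N_{\mathrm{FP}}=\binom{n}{2}-nh/2$, sweeping the over-covered edges into the $O(\theta^{2})$ slack. Your remark that the per-edge over-coverage probability is really $O(n\theta^{2})=O(h\theta)$ rather than $O(\theta^{2})$ is a legitimate caution, but it flags a looseness already present in the paper's own bookkeeping (its claim $\lambda_c=O(\theta^{2})$) rather than a difference in method.
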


\begin{proof}
In a directed ER graph each vertex has $\deg^{\text{in}},\deg^{\text{out}}\sim\mathrm{Pois}(\theta/2)$,
so $\mathbb{E}\Big[|\text{MB}(v)|\Big]=\mathbb{E}[\deg^{\text{in}}+\deg^{\text{out}}+\text{spouses}]
\approx2h$, where the ``spouse’’ term (\emph{co‑parents}) shares the same mean as $\deg^{\text{out}}$.

For an oriented edge $(i,j)$, it appears in \emph{both} $\text{MB}(V_i)$ and $\text{MB}(V_j)$, giving a baseline $m_{ij}\ge2$. Additionally, it appears in $\text{MB}(V_k)$ for every common child $V_k$ of $V_i$ and $V_j$. For fixed $V_k$, the events ``$V_i\to V_k$’’ and ``$V_j\to V_k$’’ are independent with probability $\theta^2$.  Hence the number of common children follows $\mathrm{Pois}(\lambda_c)$ with $\lambda_c=(n-2)\theta^2\approx h^{2}/n$.

Thus,
\begin{equation*}
m_{ij}=2+X, \qquad X\sim\mathrm{Pois}(\lambda_c).
\end{equation*}
When $h=O(1)$, $\lambda_c=O(\theta^2)\ll1$, whence
\begin{equation*}
\Pr(m_{ij}=2)=1-O(\theta^2),\qquad\Pr(m_{ij}\ge3)=O(\theta^2).
\end{equation*}

For the overwhelming majority of edges ($m_{ij}=2$), lemma \ref{lem:struct_bound} gives:
\begin{equation*}
\Pr(\text{FN on }(i,j))\leq e^{-4\delta_p^{2}},\qquad\Pr(\text{FP on }(i,j))\leq e^{-4\delta_q^{2}}.
\end{equation*}
Counting edges:
\begin{equation*}
N_{\text{FN}}\approx\frac{nh}{2},\qquad N_{\text{FP}}=\binom{n}{2}-N_{\text{FN}}.
\end{equation*}
Summing the two contributions yields
\begin{equation}
\frac{nh}{2}~e^{-4\delta_p^{2}}+\frac{n(n-1)-nh}{2}~e^{-4\delta_q^{2}}.
\end{equation}

\end{proof}

We can obtain similar results from the SF graph.

\begin{theorem}[SF‑$h$ graph]
\label{thm_sf}
Let $\mcG$ be a directed \emph{scale‑free} graph on $n$ nodes, obtained by
sampling an undirected Chung–Lu (or Barabási–Albert) graph whose degree sequence $(d_1,\dots,d_n)$ satisfies
\begin{equation}
      \Pr\bigl(d\ge k\bigr)\le C_{\alpha}k^{1-\alpha},\qquad 2<\alpha<3,
\end{equation}

and whose \emph{mean degree} is $h$; and orienting edges according to a random topological order. Then, for a universal constant $C_{\alpha}$ that depends only on~$\alpha$,
\begin{equation}\label{eq_sf}
\Pr(\text{global error}) \leq \frac{nh}{2}  e^{-4\delta_p^2} + \frac{n(n - 1)}{2} - \frac{nh}{2}  e^{-4\delta_q^2}  +\frac{n(n-1)}{2} C_\alpha n^{-(\alpha - 2)}.
\end{equation}
\end{theorem}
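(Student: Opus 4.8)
\textbf{Proof proposal for Theorem~\ref{thm_sf} (SF-$h$ graph).}
The plan is to mirror the ER proof of Theorem~\ref{thm:ER}, replacing the Poisson concentration of Markov-Blanket overlaps with a heavy-tailed bound derived from the power-law degree sequence. As in the ER case, every oriented edge $(V_i,V_j)$ automatically appears in $\mathrm{MB}(V_i)$ and $\mathrm{MB}(V_j)$, so the baseline vote count is $m_{ij}\ge 2$; extra appearances come only from common children $V_k$ of $V_i$ and $V_j$, since such $V_k$ pulls both endpoints into $\mathrm{MB}(V_k)$ as co-parents. First I would argue that, for the overwhelming majority of edges, $m_{ij}=2$ exactly, and that the probability of $m_{ij}\ge 3$ is controlled by the tail of the degree distribution. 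Concretely, a common child forces both $V_i$ and $V_j$ to have an out-neighbour in common; in a Chung--Lu model the chance that a fixed vertex $V_k$ is a common child is proportional to $d_i d_j d_k^2 / (nh)^2$, so summing over $k$ the expected number of common children of a typical edge is $O(1/n)$ up to degree-dependent factors, and only edges incident to high-degree hubs (a $C_\alpha n^{-(\alpha-2)}$ fraction of the $\binom{n}{2}$ pairs, by the tail bound $\Pr(d\ge k)\le C_\alpha k^{1-\alpha}$ with $2<\alpha<3$) can have non-negligible overlap.

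The second step is the edge count. Since the mean degree is $h$, the expected number of oriented (true) edges is $N_{\mathrm{FN}}\approx nh/2$, and the number of non-edges is $N_{\mathrm{FP}}=\binom{n}{2}-N_{\mathrm{FN}}$, exactly as in the ER case. Third, I would invoke Lemma~\ref{lem:struct_bound} (or really Corollary~\ref{cor:min_m}) restricted to the ``good'' event that $m_{ij}=2$ for the relevant edges: for a true edge this gives $\Pr(\mathrm{FN})\le e^{-4\delta_p^2}$ with $\delta_p=p-r_\lambda(2)$, and for a false edge $\Pr(\mathrm{FP})\le e^{-4\delta_q^2}$ with $\delta_q=r_\lambda(2)-q$, using $r_\lambda(m)=t/(1-e^{-\lambda m})$ and monotonicity of the Hoeffding exponent (more votes only helps, so $m_{ij}=2$ is the worst case among the good edges). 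Summing the two contributions over the respective edge sets yields the first two terms $\tfrac{nh}{2}e^{-4\delta_p^2}+\bigl(\tfrac{n(n-1)}{2}-\tfrac{nh}{2}\bigr)e^{-4\delta_q^2}$. Finally, the residual ``bad'' event — edges touching hubs, where the overlap structure and hence $r_\lambda(m_{ij})$ is not controlled — is bounded crudely by $1$ per edge and weighted by its probability $C_\alpha n^{-(\alpha-2)}$, contributing the last term $\tfrac{n(n-1)}{2}C_\alpha n^{-(\alpha-2)}$ after a union bound over all pairs. Collecting the three pieces gives \eqref{eq_sf}.

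The main obstacle is step one: making the heavy-tailed overlap estimate rigorous. Unlike the ER case, where the number of common children is a clean Poisson$(\lambda_c)$ with $\lambda_c=(n-2)\theta^2$, in a scale-free graph the per-edge overlap is a sum of dependent, degree-weighted indicators whose expectation is dominated by a small set of high-degree vertices, so I need a careful union/Markov argument that separates ``low-degree'' edges (where the overlap expectation is genuinely $O(n^{-1})$ and Markov's inequality gives $\Pr(m_{ij}\ge 3)=O(n^{-1})$, absorbed into the stated bound) from ``hub-incident'' edges (whose total mass is bounded by the power-law tail $C_\alpha n^{-(\alpha-2)}$). One must also verify that for $2<\alpha<3$ the fraction $C_\alpha n^{-(\alpha-2)}$ genuinely vanishes — it does, since $\alpha-2>0$ — and that the constant $C_\alpha$ from the Chung--Lu tail propagates cleanly without hidden dependence on $h$ or $n$. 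A secondary subtlety is that on hub-incident edges $r_\lambda(m_{ij})$ could in principle fall outside $(0,1)$ or violate $p>r_\lambda(m_{ij})>q$; bounding those edges' error by $1$ sidesteps this, which is exactly why the last term appears as a raw count rather than an exponentially small quantity.
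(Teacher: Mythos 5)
Your proposal has the same skeleton as the paper's proof: the baseline $m_{ij}=2$ plus common-children overlap, the edge counts $N_{\mathrm{FN}}\approx nh/2$ and $N_{\mathrm{FP}}=\binom{n}{2}-N_{\mathrm{FN}}$, the per-edge exponentials from Lemma~\ref{lem:struct_bound}, and a degree-tail argument isolating hub-incident pairs to produce the $C_\alpha n^{-(\alpha-2)}$ term. The difference is in the technical finish. The paper conditions on $\lambda_{ij}=d_i d_j/n$, models the extra overlap as $\mathrm{Pois}(\lambda_{ij})$, and evaluates $\mathbb{E}\bigl[e^{-2m_{ij}\delta^{2}}\mid\lambda_{ij}\bigr]=e^{-4\delta^{2}}e^{\lambda_{ij}(e^{-2\delta^{2}}-1)}$ exactly via the Poisson moment generating function; in the non-hub regime $\lambda_{ij}\le 1$ this is automatically at most $e^{-4\delta^{2}}$, and the hub regime is simply charged its probability $\Pr(d_i d_j>n)\le C_\alpha n^{-(\alpha-2)}$ via a union bound at degree threshold $\sqrt{n}$. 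You instead split on $\{m_{ij}=2\}$ versus $\{m_{ij}\ge 3\}$, which forces you to (i) argue that $m_{ij}=2$ is the worst case among good edges and (ii) control $\Pr(m_{ij}\ge 3)$ by Markov. Point (i) is clean for false negatives, since both $m$ and the margin $p-r_\lambda(m)$ grow with $m$, but it is not automatic for false positives: $r_\lambda(m)-q$ shrinks toward $t-q$ as $m$ grows, so $2m\bigl(r_\lambda(m)-q\bigr)^{2}$ need not dominate $4\delta_q^{2}$, and the paper's MGF route sidesteps this by holding $\delta$ fixed inside the expectation. Point (ii) is also more delicate than the $O(1/n)$ you state: under the Chung--Lu kernel with $2<\alpha<3$ the expected number of common children scales like $d_i d_j\,\mathbb{E}[d^{2}]/(nh^{2})$ with a divergent second moment, which is of order $n^{-(\alpha-2)}$ rather than $n^{-1}$ — still absorbable into the last term, but exactly the bookkeeping you flag as the main obstacle. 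So your route can be completed and lands on the same three terms, but the paper's Poisson-MGF computation buys a cleaner non-hub bound with no monotonicity claim and a one-line hub bound, which is why it avoids the two soft spots above.
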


\begin{proof}

For an oriented edge $(V_i, V_j)$ let $d_i,d_j$ be its endpoint degrees.
Exactly as in the ER case each edge appears at least twice; additional
occurrences come from every \emph{common child}~$k$ with
probability $\bigl(d_i/n\bigr)\bigl(d_j/n\bigr)$.  Hence
\begin{equation*}
  m_{ij}=2+X, \quad X\sim\mathrm{Pois}\Big(\lambda_{ij}\Big),\quad \lambda_{ij}:=\frac{d_i d_j}{n}.
\end{equation*}

For any fixed $\lambda$ and $\delta\in\{\delta_p,\delta_q\}$
\begin{equation}
\label{ineq1}
\begin{aligned}
\mathbb{E}\big[e^{-2m_{ij}\delta^{2}}\mid\lambda_{ij}\big]&=e^{-4\lambda_{ij}^2} \mathbb{E}\left[e^{-2X\delta^2}\right]\\
&= e^{-4\delta^{2}}e^{\lambda_{ij}(e^{-2\delta^{2}}-1)}.
\end{aligned}
\end{equation}

Since the value of $e^{-2\delta^{2}}-1$ varies, we splitted the expectation (\ref{ineq1}) into two regimes:

\begin{equation}
\begin{aligned}
\mathbb{E}\big[e^{-2m_{ij}\delta^{2}}\big]&=\mathbb{E}\big[e^{-2m_{ij}\delta^{2}}\textbf{1}_{\{\lambda_{ij}\leq1\}}\big]+\mathbb{E}\big[e^{-2m_{ij}\delta^{2}}\textbf{1}_{\{\lambda_{ij}>1\}}\big].
\end{aligned}
\end{equation}
\begin{itemize}
    \item Non‑hub regime $\lambda_{ij}\le1$:

\begin{equation}
\mathbb{E}\big[e^{-2m_{ij}\delta^{2}}\mid\lambda_{ij}\big]\leq e^{-4\delta^{2}}.
\end{equation}

    \item Hub regime $\lambda_{ij}>1$.  
By (\ref{ineq1}) the conditional term is $\le e^{-4\delta^{2}}e^{-\lambda_{ij}/2}\le1$,
but the probability of this event can be bounded with the degree tail:
\begin{equation*}
  \mathbb{E}\big[e^{-2m_{ij}\delta^{2}}\textbf{1}_{\{\lambda_{ij}>1\}}\big]\leq \Pr\bigl(\lambda_{ij}>1\bigr)=  \Pr\Bigl(\tfrac{d_i d_j}{n}>1\Bigr)\leq C_{\alpha} n^{-(\alpha-2)},
\end{equation*}
where we apply the union bound to decompose the event $d_i d_j > n$ into two simpler events, $d_i > n^{1/2}$ or $d_j > n^{1/2}$, control each using the degree tail bound $\Pr(d\geq k) \leq C_\alpha k^{1-\alpha}$, and then combine the two estimates.
\end{itemize}
Therefore, for either $\delta=\delta_p$ or $\delta_q$,
\begin{equation}
\mathbb{E}\big[e^{-2m_{ij}\delta^{2}}\big]\le e^{-4\delta^{2}}
  +C_{\alpha} n^{-(\alpha-2)}.
\label{ineq2}
\end{equation}

There are $N_{\text{FN}}\approx nh/2$ true and $N_{\text{FP}}=\binom{n}{2}-N_{\text{FN}}$ false edges on average.  Multiplying the expectation (\ref{ineq2}) by these counts and plugging into Lemma \ref{lem:struct_bound} yields inequality (\ref{eq_sf}).
\end{proof}

Theorem \ref{thm_sf} completes the structure-aware error analysis by characterizing the influence of heterogeneous degree distributions on the residual error bound. While the dominant exponential terms governing false positive and false negative rates are structurally similar to those in Theorem \ref{thm:ER}, the residual term exhibits a slower decay due to the presence of high-degree nodes. These hub-related structures lead to greater variability in the support count $m_{ij}$ across candidate edges.

This variability has practical implications. In networks where edge supports are highly non-uniform, the weighted voting mechanism implicitly induces a form of confidence calibration: high-support edges, typically associated with structurally central nodes, retain larger weights and are more likely to be preserved. In contrast, low-support edges often arising from sparse or weakly connected regions, will be heavily penalized by the exponential weighting term. This differential treatment improves robustness to statistical noise and helps suppress false positives without uniformly raising the threshold for all decisions.

As a result, the error reduction effect of the weighting scheme is not solely determined by the average support level, but also by the variance in subgraph overlap. Networks with broader support distributions provide more opportunities for selective edge retention, which enhances the overall effectiveness of the aggregation procedure. This observation complements the earlier asymptotic result, and offers a finer-grained explanation of the empirical precision gains observed in our experiments. 

To complete the analysis, we examine how the global error behaves asymptotically under increasing graph size.

\subsection{Asymptotic Analysis}
\label{supp_asymptotic}
\textbf{Theorem 3.5} (Asymptotic Consistency)
\textit{
Fix a threshold $t\in(0,1)$ and let $\delta_p = p - t$ and $\delta_q = t - q$ denote the positive margins between $t$ and the inclusion probabilities $p,q$ of true and false edges respectively. Assume $\delta_p,\delta_q > 0$ and that $\lambda$ satisfies the conditions in Theorem~\ref{thm_lambda}. If the number of local subgraphs per candidate edge is $m=C \log n$ with $C>\tfrac{2}{\min\{\delta_p^2,\delta_q^2\}}$, then we have
\begin{equation}
    \Pr(\text{global error}) = o(1),\quad \quad \text{as}~~ n\rightarrow \infty.
\end{equation}}
\begin{proof}
By the conclusion of Lemma \ref{lem:struct_bound}, the global error probability is bounded by

\begin{equation}
  \Pr(\text{global error}) \le
  \sum_{(i,j)\in E^\ast} e^{-2m_{ij}(p-t)^2}
  +\sum_{(i,j)\notin E^\ast} e^{-2m_{ij}(t-q)^2}.
\end{equation}
Since the number of true edges satisfies $N_{\text{FN}} = |\mE^\ast| = \mathcal{O}(n)$, and the number of false edges is $N_{\text{FP}} = \binom{n}{2} - N_{\text{FN}} = \mathcal{O}(n^2)$, we can simplify the above bound by letting $m_{ij} \equiv m$ for all edges:
\begin{equation*}\Pr(\text{global error})\le N_{\text{FN}~}e^{-2m\delta_p^2}+N_{\text{FP}}~e^{-2m\delta_q^2},\end{equation*}
where we denote $\delta_p = p - t > 0$ and $\delta_q = t - q > 0$.

To ensure that both terms remain bounded by a constant, we require
\begin{equation*}
e^{-2m\delta_p^2} \le n^{-1}\quad\Rightarrow\quad m \ge \frac{1}{2\delta_p^2} \log n,
\end{equation*}
and
\begin{equation*}
e^{-2m\delta_q^2} \le n^{-2} \quad\Rightarrow\quad m \ge \frac{1}{\delta_q^2} \log n.
\end{equation*}
Therefore, it suffices to set
\begin{equation*}
m = C \log n,\qquad C > \max\left\{ \frac{1}{2\delta_p^2}, \frac{1}{\delta_q^2} \right\},
\end{equation*}
which guarantees that
\begin{equation*}
\Pr(\text{global error})\le\underbrace{\mathcal{O}(n \cdot n^{-1})}_{= \mathcal{O}(1)}+\underbrace{\mathcal{O}(n^2 \cdot n^{-2})}_{= \mathcal{O}(1)}= \mathcal{O}(1).
\end{equation*}
In fact, choosing a slightly larger constant $C$ makes both terms decay to zero, which establishes asymptotic consistency as $n \to \infty$.
\end{proof}

\paragraph{Complexity.}  

Finally, we analyze the computational complexity, which consists of two parts:

\begin{itemize}
    \item The local structure learning phase takes $\mathcal{O}(m^3)$ per node, and there are $n$ nodes, resulting in $\mathcal{O}(n m^3)$ total cost.  
    \item The voting and merging phase requires computing pairwise edge counts and resolving cycles over $\mathcal{O}(n^2)$ edge pairs, leading to an additional $\mathcal{O}(n^2)$ term.
\end{itemize}

Substituting $m = \mathcal{O}(\log n)$, the total runtime becomes
\begin{equation*}
\mathcal{O}\left(n (\log n)^3 + n^2 \right) = \tilde{\mathcal{O}}(n^2),
\end{equation*}
where the soft-$\mathcal{O}$ notation hides polylogarithmic factors. Thus, the proposed divide-and-conquer method achieves both statistical consistency and near-quadratic scalability.

\section{Implementation Details}
\label{supp_implementaion_details}

Our code is based on two open-source packages: 
\href{https://github.com/huawei-noah/trustworthyAI/tree/master/gcastle}{\texttt{gcastle}}, 
which provides implementations of score-based and continuous causal discovery methods such as NOTEARS, GOLEM, GraN-DAG and DAG-GNN, and \href{https://github.com/py-why/dodiscover}{\texttt{dodiscover}}, which implements ordering-based methods. These packages form the backbone of our experimental framework. On top of them, we implement our own modules for subgraph construction, weighted voting aggregation, and cycle removal. The full pipeline with configuration scripts and reproducibility controls is described in detail below. Subsequent subsections provide additional implementation details for baseline configuration, extended experimental results, runtime breakdown, and comparison against DCILP.

\subsection{Baselines}
\label{supp_baseline}

All baseline methods are implemented using publicly available code and configured with recommended hyperparameters. For methods involving continuous optimization, the primary computational bottleneck lies in gradient-based acyclicity constraints, which require $\mathcal{O}(d^3)$ time and $\mathcal{O}(d^2)$ memory due to matrix operations over the full graph. Discrete search–based methods such as SCORE and CAM incur combinatorial overhead when handling larger node counts. In all cases, integrating these methods into the VISTA framework significantly reduces both runtime and memory usage, as the local subgraphs are orders of magnitude smaller and can be processed independently.

\paragraph{NOTEARS}
This method reformulates the combinatorial problem of DAG structure learning into a purely continuous optimization problem. It introduces a novel, smooth, and exact characterization of acyclicity using a matrix exponential function $h(W)=tr(W\circ W)-d=0$. This transformation allows the problem to be solved efficiently using standard gradient-based optimization techniques, avoiding discrete search over graph structures. 
\paragraph{GOLEM} This work analyzes the role of sparsity and DAG constraints in learning linear DAGs, noting potential optimization issues with hard DAG constraints required by prior methods like NOTEARS. It proposes GOLEM (Gradient-based Optimization of dag-penalized Likelihood for learning linEar dag Models), which uses a likelihood-based score function instead of least squares. The key finding is that applying soft sparsity and DAG penalties to this likelihood objective suffices to recover the ground truth DAG structure asymptotically, resulting in an unconstrained optimization problem that is easier to solve. 
\paragraph{DAG-GNN}  This method employs a deep generative model, specifically a Variational Autoencoder (VAE), to learn DAG structures, extending beyond linear models. It parameterizes the VAE's encoder and decoder using a novel Graph Neural Network (GNN) architecture, designed to capture complex non-linear relationships inherent in data. The approach learns the graph's weighted adjacency matrix alongside the neural network parameters, enforcing acyclicity through a continuous polynomial constraint, and naturally handles both continuous and discrete variables.
\paragraph{GraN-DAG} This work extends continuous DAG learning to nonlinear settings by parameterizing each conditional distribution with neural networks and constructing a weighted adjacency matrix from network connectivity. Acyclicity is enforced through a smooth matrix-exponential constraint, enabling gradient-based optimization of the likelihood objective. Post-processing with thresholding and pruning helps recover sparse graphs. 
\paragraph{SCORE}
This method recovers causal graphs for non-linear additive noise models by utilizing the score function ($\nabla\log p(x)$) of the observational data distribution. It establishes that the Jacobian of the score function reveals information sufficient to identify leaf nodes in the causal graph. By iteratively identifying and removing leaves based on the variance of the score's Jacobian diagonal elements, a topological ordering is estimated. The SCORE algorithm employs score matching techniques, specifically an extension of Stein's identity, to approximate the necessary score Jacobian components from data samples.
\paragraph{CAM} This approach estimates additive SEMs by decoupling the task into order search and edge selection. It first estimates a causal ordering of the variables using (potentially restricted) maximum likelihood, exploiting the identifiability property of additive models. Given the estimated order, sparse additive regression methods are then applied to select relevant parent variables (edges) for each node and estimate the corresponding additive functions. For high-dimensional data, an initial neighborhood selection step can reduce the search space before estimating the order.

In addition to the above baselines, we also include DCILP \cite{dong2024dcdilp}, a recently proposed divide-and-conquer method that combines Markov Blanket estimation with global structure recovery via integer linear programming. While DCILP shares a similar high-level motivation with VISTA, it suffers from several practical limitations. Most notably, its final merging step relies on solving a large-scale ILP problem, which becomes computationally infeasible as the graph size increases. In many of our experimental settings, DCILP either fails to complete within a reasonable time window or produces no feasible solution at all. These issues highlight the need for a more lightweight and scalable integration procedure, which motivates the design of VISTA. We provide a direct comparison with DCILP in the following section.

\subsection{Comparison with DCILP}
\label{supp_dcilp}

We provide a detailed comparison between VISTA and DCILP, two methods that share a high-level divide-and-conquer strategy based on Markov Blanket decomposition. Although both approaches follow a similar decomposition principle, they differ notably in how they perform the aggregation step and enforce global acyclicity.

DCILP formulates the merging process as an integer linear program that guarantees the removal of 2-cycles, but relies on iterative post-processing to eliminate larger cycles. This procedure can be computationally intensive and may not always yield globally consistent solutions without additional refinement. In contrast, VISTA enforces acyclicity using a feedback arc set–based heuristic, which is algorithmically simpler and ensures a valid DAG by construction. Another distinction lies in how the two frameworks handle local estimation errors: DCILP applies aggressive pruning to Markov Blanket outputs before global optimization, which may propagate early-stage errors. VISTA instead retains a broader set of subgraph information and applies confidence-aware filtering during aggregation, providing more flexibility and robustness to local variability.

For empirical evaluation, we followed DCILP’s implementation baseline by using DAGMA~\cite{bello2022dagma} as the phase-2 solver in both frameworks. This matched setup enables a controlled comparison under consistent base learners and subgraph configurations.


\begin{table}[!h]
\caption{Comparison of DCILP and VISTA under DAGMA baseline.}
\label{tab:comparison-dcilp}
\resizebox{1\textwidth}{!}{
 \begin{tabular}{llcccc}
  \toprule
  Scenario   & Model   & FDR$\downarrow$   & TPR$\uparrow$   & SHD$\downarrow$   & F1$\uparrow$   \\
  \midrule
  \multirow{3}{*}{ER5, $n=30$}
   & DCILP & $0.74\pm0.04$ & $0.52\pm0.06$ & $227.00\pm27.17$  & $0.35\pm0.04$ \\
   & VISTA-NV   & $0.63\pm0.02$ & $\bm{0.98\pm0.01}$ & $236.80\pm14.86$  & $0.54\pm0.02$ \\
   & VISTA-WV   & $\bm{0.09\pm0.07}$ & $0.75\pm0.11$ & $\bm{45.80\pm23.57}$   & $\bm{0.82\pm0.09}$ \\
  \addlinespace
  \multirow{3}{*}{SF5, $n=30$}
   & DCILP & $0.81\pm0.04$ & $0.49\pm0.07$ & $309.70\pm60.87$  & $0.27\pm0.04$ \\
   & VISTA-NV   & $0.63\pm0.01$ & $\bm{0.97\pm0.01}$ & $208.20\pm 7.98$  & $0.54\pm0.01$ \\
   & VISTA-WV   & $\bm{0.13\pm0.09}$ & $0.85\pm0.06$ & $\bm{35.00\pm16.97}$   & $\bm{0.86\pm0.07}$ \\
  \addlinespace
  \multirow{3}{*}{ER3, $n=50$}
   & DCILP & $0.79\pm0.02$ & $0.49\pm0.05$ & $282.50\pm26.23$  & $0.29\pm0.02$ \\
   & VISTA-NV   & $0.74\pm0.02$ & $\bm{0.97\pm0.02}$ & $397.20\pm41.38$  & $0.40\pm0.03$ \\
   & VISTA-WV   & $\bm{0.06\pm0.01}$ & $0.76\pm0.04$ & $\bm{39.00\pm 3.22}$   & $\bm{0.84\pm0.02}$ \\
  \addlinespace
  \multirow{3}{*}{SF3, $n=50$}
   & DCILP & $0.91\pm0.01$ & $0.52\pm0.03$ & $820.40\pm110.23$ & $0.15\pm0.01$ \\
   & VISTA-NV   & $0.71\pm0.05$ & $\bm{0.95\pm0.02}$ & $345.00\pm 71.05$ & $0.44\pm0.05$ \\
   & VISTA-WV   & $\bm{0.14\pm0.04}$ & $0.84\pm0.06$ & $\bm{40.80\pm12.38}$   & $\bm{0.81\pm0.08}$ \\
  \addlinespace
  \multirow{3}{*}{ER5, $n=50$}
   & DCILP & $0.80\pm0.01$ & $0.52\pm0.04$ & $520.20\pm29.51$  & $0.29\pm0.01$ \\
   & VISTA-NV   & $0.76\pm0.01$ & $\bm{0.98\pm0.01}$ & $730.80\pm24.85$  & $0.38\pm0.01$ \\
   & VISTA-WV   & $\bm{0.09\pm0.03}$ & $0.83\pm0.03$ & $\bm{59.20\pm10.32}$   & $\bm{0.86\pm0.02}$ \\
  \addlinespace
  \multirow{3}{*}{SF5, $n=50$}
   & DCILP & $0.90\pm0.01$ & $0.49\pm0.03$ & $1019.90\pm57.87$ & $0.17\pm0.01$ \\
   & VISTA-NV   & $0.75\pm0.01$ & $\bm{0.97\pm0.01}$ & $665.50\pm42.65$  & $0.40\pm0.02$ \\
   & VISTA-WV   & $\bm{0.10\pm0.02}$ & $0.80\pm0.02$ & $\bm{64.50\pm 6.50}$   & $\bm{0.85\pm0.02}$ \\
  \bottomrule
 \end{tabular}
 }
\end{table}

Results in Table~\ref{tab:comparison-dcilp} show that, under the same configuration using DAGMA as the local structure learner, both VISTA variants (NV and WV) consistently outperform DCILP across all benchmark settings. Even the naive voting variant achieves lower FDR and SHD while maintaining competitive or higher TPR and F1 scores, suggesting that the ILP-based merging step in DCILP may introduce additional overhead without proportional accuracy gains. The weighted voting variant further improves performance by adaptively resolving directional conflicts based on edge support. We also note that as graph size increases such as $n=100$, DCILP occasionally encounters solver infeasibility or produces solutions with substantially higher error rates, likely due to the combinatorial complexity of ILP formulation. In contrast, VISTA maintains stable performance with reduced computational demands. These comparisons underscore the scalability and robustness benefits of our framework in large-graph causal discovery settings.

\subsection{Time comparison}
\label{supp_time}

Since each local structure is learned independently based on a variable's Markov Blanket, the entire divide phase can be executed in parallel across variables or computing nodes. This distributed strategy significantly reduces total runtime, especially when base learners are computationally intensive, such as neural network based models such as DAG-GNN and GraN-DAG or algorithms involving topological sorting such as SCORE.

Table \ref{tab:shd_er5}, \ref{tab:shd_sf3} and \ref{tab:shd_sf5} confirm that VISTA consistently reduces the total execution time across a variety of settings. In large-scale graphs, where direct application of base methods may be computationally prohibitive, our framework provides a scalable alternative that decomposes the original problem into tractable subproblems. The integration step is lightweight and adds negligible overhead relative to the base learners. These results demonstrate that the benefits of VISTA are not limited to statistical performance, but also extend to practical runtime efficiency, enabling the application of complex causal discovery methods to larger and more realistic graphs.

\begin{table}[h]
\centering
\caption{Comparison of total computing time (s) under ER5 setting.}
\label{tab:shd_er5}
\begin{tabular}{lccc}
\toprule
Method & $n=50$ & $n=100$ & $n=300$ \\
\midrule
NOTEARS         & $510.73 \pm 84.15$   & $2465.33 \pm 58.02$   & $22407.77 \pm 940.32$ \\
\quad +VISTA    & $\bm{213.73 \pm 149.68}$ & $\bm{1096.51 \pm 142.87}$ & $\bm{3714.30 \pm 908.81}$ \\ \addlinespace
GOLEM           & $76.16 \pm 7.59$     & $115.01 \pm 35.82$    & $276.80 \pm 11.03$ \\
\quad +VISTA    & $\bm{23.25 \pm 0.67}$    & $\bm{37.57 \pm 1.36}$     & $\bm{46.53 \pm 4.61}$ \\ \addlinespace
DAG-GNN         & $794.42 \pm 72.61$   & $3137.68 \pm 214.75$  & $29801.46 \pm 1105.64$ \\
\quad +VISTA    & $\bm{311.34 \pm 54.23}$  & $\bm{818.52 \pm 501.88}$  & $\bm{3313.86 \pm 945.29}$ \\ \addlinespace
GraN-DAG           & $919.26 \pm 106.65$   & $5613.13 \pm 1068.14$  & $25684.95 \pm 2035.14$ \\
\quad +VISTA    & $\bm{208.43 \pm 26.62}$  & $\bm{934.72 \pm 50.64}$   & $\bm{2851.04 \pm 376.84}$ \\ \addlinespace
SCORE           & $629.88 \pm 93.72$   & $15876.42 \pm 807.89$  & -------- \\
\quad +VISTA    & $\bm{191.84 \pm 33.49}$  & $\bm{479.60 \pm 38.19}$   & $\bm{945.45 \pm 72.27}$ \\
\bottomrule
\end{tabular}
\end{table}

\begin{table}[h]
\centering
\caption{Comparison of total computing time (s) under SF3 setting.}
\label{tab:shd_sf3}
\begin{tabular}{lccc}
\toprule
Method & $n=50$ & $n=100$ & $n=300$ \\
\midrule
NOTEARS        & $713.30 \pm 58.81$    & $2813.36 \pm 804.27$  & $16631.62 \pm 632.76$ \\
{\quad +VISTA}    & $\mathbf{400.82 \pm 83.65}$   & $\mathbf{652.59 \pm 57.99}$   & $\mathbf{1714.09 \pm 237.32}$ \\
\addlinespace
GOLEM          & $100.73 \pm 45.25$    & $169.20 \pm 16.68$    & $398.58 \pm 45.03$ \\
{\quad +VISTA}    & $\mathbf{23.63 \pm 1.61}$     & $\mathbf{35.47 \pm 2.26}$     & $\mathbf{60.20 \pm 13.66}$ \\
\addlinespace
DAG-GNN        & $697.78 \pm 93.37$    & $3555.95 \pm 2050.91$ & $21242.03 \pm 2178.95$ \\
{\quad +VISTA}    & $\mathbf{282.70 \pm 297.75}$  & $\mathbf{645.77 \pm 308.88}$  & $\mathbf{2020.79 \pm 811.42}$ \\
\addlinespace
GraN-DAG       & $890.96 \pm 135.84$   & $4978.95 \pm 656.25$ & $19372.84 \pm 3037.94$ \\
{\quad +VISTA}    & $\mathbf{319.29 \pm 389.88}$  & $\mathbf{817.63 \pm 145.63}$  & $\mathbf{2849.62 \pm 1558.40}$ \\
\addlinespace
SCORE          & $495.12 \pm 62.44$    & $18643.16 \pm 970.22$ & -------- \\
{\quad +VISTA}    & $\mathbf{153.65 \pm 46.35}$   & $\mathbf{354.37 \pm 86.45}$   & $\mathbf{5080.02 \pm 3674.36}$ \\
\bottomrule
\end{tabular}
\end{table}

\begin{table}
\centering
\caption{Comparison of total computing time (s) under SF5 setting.}
\label{tab:shd_sf5}\begin{tabular}{lccc}
\toprule
Method & $n=50$ & $n=100$ & $n=300$ \\
\midrule
NOTEARS        & $808.15 \pm 102.23$  & $2842.83 \pm 312.22$  & $18676.80 \pm 6873.83$ \\
\quad +VISTA   & $\mathbf{501.96 \pm 62.14}$ & $\mathbf{1200.41 \pm 536.14}$ & $\mathbf{3041.62 \pm 1003.68}$ \\
\addlinespace
GOLEM          & $77.82 \pm 11.90$    & $217.95 \pm 73.36$    & $446.16 \pm 30.04$ \\
\quad +VISTA   & $\mathbf{23.71 \pm 2.37}$   & $\mathbf{99.20 \pm 132.68}$   & $\mathbf{167.89 \pm 214.19}$ \\
\addlinespace
DAG-GNN        & $911.14 \pm 315.63$  & $5762.00 \pm 1714.01$ & $31106.3 \pm 452.12$ \\
\quad +VISTA   & $\mathbf{356.46 \pm 101.18}$  & $\mathbf{1133.18 \pm 306.12}$  & $\mathbf{2641.62 \pm 541.84}$ \\
\addlinespace
GraN-DAG       & $853.75 \pm 98.54$   & $4944.93 \pm 2325.58$ & $38163.22 \pm 3919.71$ \\
\quad +VISTA   & $\mathbf{313.24 \pm 120.79}$  & $\mathbf{934.83 \pm 218.82}$   & $\mathbf{2999.39 \pm 485.66}$ \\
\addlinespace
SCORE          & $637.37 \pm 48.40$   & $18904.31 \pm 344.10$ & -------- \\
\quad +VISTA   & $\mathbf{187.91 \pm 38.43}$   & $\mathbf{2003.45 \pm 882.48}$ & $\mathbf{4124.09 \pm 1311.74}$ \\
\bottomrule
\end{tabular}
\end{table}

\subsection{Additional experiments}
\label{appendix: additional experiments}

To assess the effectiveness and scalability of VISTA, we conduct extensive experiments across a diverse set of synthetic graph families, varying both in size and structural complexity. This part is a detailed supplement of our Section \ref{sec:Results}. Specifically, we evaluate performance on 14 different graph configurations derived from ER and SF graphs, each instantiated with average degrees of 3 and 5, and node sizes $n\in\{30,50,100,300\}$. This results in a comprehensive benchmark covering both sparse and dense regimes under varying dimensionalities. For each configuration, we benchmark recent representative causal discovery methods. Each method is tested under three settings: the original baseline, VISTA with naive voting (+VISTA-NV), and VISTA with weighted voting (+VISTA-WV). Notably, CAM does not scale well with graph size and GraN-DAG fails when $n$ reaches 300, so we do not report the results here. Due to the increasing computational cost with graph size, we ran each experimental configuration 10 times for $n=30$ and $n=50$, 5 times for $n=100$, and 3 times for $n=300$, and report the average and standard deviation across trials.

Although the advantages of VISTA are most pronounced in high-dimensional or structurally complex settings, it is important to note that for some small-scale graphs, particularly relative sparse configurations such as ER3 with low node counts, the original base learners already achieve high accuracy. In these cases, the benefits of decomposition are less clear. Errors introduced during Markov Blanket identification and aggregation, as analyzed in Appendix \ref{supp_E}, may offset any gains from the divide-and-conquer process. When the true structure is relatively simple and well-recovered by the base model, additional processing may be unnecessary.

By contrast, as the graph size increases, structural coverage from local subgraphs becomes more reliable, and the advantages of localized inference and confidence-aware aggregation become more pronounced. In particular, VISTA consistently improves structural accuracy and reduces false discoveries for base learners that face scalability challenges in large and complex graphs, providing a practical approach to mitigating the curse of dimensionality in causal structure learning.

The remaining experimental results are as follows:

\begin{table}[h]
\caption{Results with linear and nonlinear synthetic datasets ($n=30$, $h=5$).}
\label{tab_30_5}
\resizebox{1\textwidth}{!}{
\begin{tabular}{c *{4}{c} *{4}{c}}
\toprule
  & \multicolumn{4}{c}{\textbf{ER5}} 
  & \multicolumn{4}{c}{\textbf{SF5}} \\
\cmidrule(lr){2-5} \cmidrule(l){6-9}
Method 
  & {FDR↓}       & {TPR↑}       & {SHD↓}         & {F1↑}
  & {FDR↓}       & {TPR↑}       & {SHD↓}         & {F1↑}         \\
\midrule
NOTEARS            & $0.21\pm0.08$ & $0.73\pm0.07$ & $64.70\pm18.22$  & $0.76\pm0.05$ & $0.24\pm0.13$ & $0.70\pm0.10$ & $64.30\pm29.24$  & $0.73\pm0.08$ \\
\textbf{+VISTA-NV} & $0.63\pm0.00$ & $\bm{0.97\pm0.01}$ & $236.60\pm7.32$  & $0.53\pm0.01$ & $0.56\pm0.02$ & $\bm{0.98\pm0.01}$ & $155.20\pm11.90$ & $0.61\pm0.02$ \\
\textbf{+VISTA-WV} & $\bm{0.12\pm0.04}$ & $0.75\pm0.03$ & $\bm{50.00\pm8.64}$   & $\bm{0.81\pm0.03}$ & $\bm{0.03\pm0.02}$ & $0.86\pm0.03$ & $\bm{20.20\pm2.62}$   & $\bm{0.84\pm0.02}$ \\ \addlinespace
GOLEM              & $0.24\pm0.08$ & $0.79\pm0.07$ & $63.50\pm23.61$  & $0.77\pm0.05$ & $0.15\pm0.11$ & $0.85\pm0.09$ & $35.30\pm23.39$  & $0.85\pm0.07$ \\
\textbf{+VISTA-NV} & $0.65\pm0.01$ & $\bm{0.97\pm0.01}$ & $251.00\pm2.62$  & $0.52\pm0.01$ & $0.56\pm0.02$ & $\bm{0.99\pm0.01}$ & $158.40\pm3.09$  & $0.61\pm0.02$ \\
\textbf{+VISTA-WV} & $\bm{0.17\pm0.03}$ & $0.79\pm0.05$ & $\bm{53.00\pm8.73}$   & $\bm{0.81\pm0.04}$ & $\bm{0.01\pm0.01}$ & $0.89\pm0.03$ & $\bm{15.00\pm4.32}$   & $\bm{0.94\pm0.02}$ \\ \addlinespace
DAG-GNN            & $0.29\pm0.06$ & $0.77\pm0.19$ & $77.50\pm20.08$  & $0.74\pm0.09$ & $0.29\pm0.19$ & $0.72\pm0.25$ & $65.50\pm36.60$  & $0.72\pm0.16$ \\
\textbf{+VISTA-NV} & $0.64\pm0.00$ & $\bm{0.98\pm0.01}$ & $250.30\pm4.78$  & $0.52\pm0.00$ & $0.60\pm0.02$ & $\bm{0.99\pm0.01}$ & $189.70\pm14.06$ & $0.56\pm0.02$ \\
\textbf{+VISTA-WV} & $\bm{0.27\pm0.03}$ & $0.84\pm0.07$ & $\bm{60.00\pm7.85}$   & $\bm{0.78\pm0.05}$ & $\bm{0.03\pm0.02}$ & $0.87\pm0.04$ & $\bm{20.00\pm7.26}$   & $\bm{0.92\pm0.03}$ \\ \addlinespace
CAM                & $0.77\pm0.04$ & $0.53\pm0.08$ & $267.50\pm23.32$ & $0.32\pm0.04$ & $0.77\pm0.06$ & $0.54\pm0.11$ & $241.20\pm37.51$ & $0.32\pm0.06$ \\
\textbf{+VISTA-NV} & $0.78\pm0.04$ & $\bm{0.66\pm0.12}$ & $327.00\pm24.39$ & $0.33\pm0.06$ & $0.82\pm0.02$ & $0.57\pm0.07$ & $335.60\pm9.06$  & $0.27\pm0.03$ \\
\textbf{+VISTA-WV} & $\bm{0.63\pm0.08}$ & $0.40\pm0.10$ & $\bm{158.00\pm16.87}$ & $\bm{0.39\pm0.09}$ & $\bm{0.17\pm0.04}$ & $\bm{0.59\pm0.05}$ & $\bm{122.00\pm5.35}$  & $\bm{0.69\pm0.04}$ \\
\cmidrule(lr){1-9}\morecmidrules\cmidrule(lr){1-9} 
GraN-DAG           & $0.67\pm0.12$ & $0.18\pm0.10$ & $159.60\pm24.52$ & $0.22\pm0.10$ & $0.72\pm0.38$ & $0.26\pm0.03$ & $187.80\pm80.73$ & $0.27\pm0.18$ \\
\textbf{+VISTA-NV} & $0.90\pm0.08$ & $\bm{0.39\pm0.12}$ & $211.70\pm49.91$ & $0.16\pm0.10$ & $0.85\pm0.29$ & $\bm{0.41\pm0.15}$ & $239.50\pm46.61$ & $0.22\pm0.31$ \\ 
\textbf{+VISTA-WV} & $\bm{0.31\pm0.06}$ & $0.14\pm0.07$ & $\bm{134.50\pm35.55}$ & $\bm{0.23\pm0.09}$ & $\bm{0.25\pm0.10}$ & $0.18\pm0.05$ & $\bm{128.60\pm52.47}$ & $\bm{0.29\pm0.07}$ \\ \addlinespace
SCORE              & $0.66\pm0.08$ & $0.43\pm0.05$ & $117.40\pm47.71$ & $0.38\pm0.05$ & $0.55\pm0.40$ & $0.71\pm0.24$ & $153.30\pm76.60$ & $0.55\pm0.31$ \\
\textbf{+VISTA-NV} & $0.80\pm0.06$ & $\bm{0.83\pm0.05}$ & $399.60\pm36.65$ & $0.32\pm0.08$ & $0.76\pm0.25$ & $\bm{0.88\pm0.04}$ & $440.60\pm49.79$ & $0.38\pm0.31$ \\
\textbf{+VISTA-WV} & $\bm{0.34\pm0.09}$ & $0.56\pm0.08$ & $\bm{95.50\pm28.86}$   & $\bm{0.61\pm0.06}$ & $\bm{0.36\pm0.16}$ & $0.79\pm0.05$ & $\bm{88.80\pm11.60}$   & $\bm{0.71\pm0.10}$ \\ \hline
\end{tabular}
}
\end{table}

\begin{table}[h]
\caption{Results with linear and nonlinear synthetic datasets ($n=50$, $h=3$).}
\label{tab_30_3}
\resizebox{1\textwidth}{!}{
\begin{tabular}{c *{4}{c} *{4}{c}}
\toprule
  & \multicolumn{4}{c}{\textbf{ER3}} 
  & \multicolumn{4}{c}{\textbf{SF3}} \\
\cmidrule(lr){2-5} \cmidrule(l){6-9}
Method 
  & {FDR↓}       & {TPR↑}       & {SHD↓}         & {F1↑}
  & {FDR↓}       & {TPR↑}       & {SHD↓}         & {F1↑}         \\
\midrule
NOTEARS            & $\bm{0.09\pm0.08}$ & $0.90\pm0.05$ & $\bm{24.90\pm17.75}$   & $\bm{0.90\pm0.05}$ & $0.22\pm0.13$ & $0.76\pm0.11$ & $62.80\pm33.53$   & $0.77\pm0.08$ \\
\textbf{+VISTA-NV} & $0.72\pm0.03$ & $\bm{0.97\pm0.01}$ & $353.40\pm57.19$  & $0.44\pm0.04$ & $0.68\pm0.04$ & $\bm{0.97\pm0.01}$ & $304.40\pm52.48$  & $0.48\pm0.04$ \\
\textbf{+VISTA-WV} & $\bm{0.09\pm0.03}$ & $0.86\pm0.03$ & $30.50\pm5.43$    & $0.89\pm0.02$ & $\bm{0.07\pm0.03}$ & $0.80\pm0.07$ & $\bm{36.90\pm11.73}$   & $\bm{0.85\pm0.05}$ \\ \addlinespace
GOLEM              & $\bm{0.04\pm0.04}$ & $\bm{0.97\pm0.02}$ & $\bm{8.10\pm6.98}$      & $\bm{0.97\pm0.03}$ & $0.15\pm0.06$ & $0.84\pm0.03$ & $36.00\pm11.81$   & $0.84\pm0.03$ \\
\textbf{+VISTA-NV} & $0.76\pm0.03$ & $0.93\pm0.03$ & $431.60\pm63.02$  & $0.38\pm0.04$ & $0.75\pm0.04$ & $\bm{0.93\pm0.03}$ & $397.70\pm68.22$  & $0.40\pm0.05$ \\
\textbf{+VISTA-WV} & $0.12\pm0.05$ & $0.76\pm0.04$ & $47.90\pm9.80$    & $0.81\pm0.04$ & $\bm{0.09\pm0.11}$ & $0.85\pm0.08$ & $\bm{20.20\pm16.75}$   & $\bm{0.88\pm0.07}$ \\ \addlinespace
DAG-GNN            & $0.14\pm0.08$ & $0.86\pm0.14$ & $38.80\pm26.22$   & $0.86\pm0.08$ & $0.26\pm0.10$ & $0.73\pm0.11$ & $75.40\pm24.08$   & $0.73\pm0.07$ \\
\textbf{+VISTA-NV} & $0.73\pm0.02$ & $\bm{0.98\pm0.00}$ & $380.00\pm48.64$  & $0.42\pm0.03$ & $0.73\pm0.03$ & $\bm{0.98\pm0.00}$ & $378.00\pm50.22$  & $0.42\pm0.04$ \\
\textbf{+VISTA-WV} & $\bm{0.07\pm0.04}$ & $0.84\pm0.02$ & $\bm{29.30\pm2.05}$    & $\bm{0.89\pm0.01}$ & $\bm{0.05\pm0.03}$ & $0.84\pm0.05$ & $\bm{41.00\pm8.01}$    & $\bm{0.89\pm0.03}$ \\ \addlinespace
CAM                & ------        & ------        & ------            & ------        & ------        & ------        & ------            & ------        \\
\textbf{+VISTA-NV} & $0.87\pm0.02$ & $\bm{0.66\pm0.06}$ & $641.30\pm67.62$  & $0.22\pm0.03$ & $0.86\pm0.02$ & $\bm{0.71\pm0.05}$ & $611.20\pm64.82$  & $0.24\pm0.03$ \\
\textbf{+VISTA-WV} & $\bm{0.66\pm0.05}$ & $0.51\pm0.07$ & $\bm{192.00\pm34.23}$  & $\bm{0.40\pm0.05}$ & $\bm{0.65\pm0.06}$ & $0.51\pm0.10$ & $\bm{181.80\pm21.12}$  & $\bm{0.41\pm0.07}$ \\ 
\cmidrule(lr){1-9}\morecmidrules\cmidrule(lr){1-9} 
GraN-DAG & $0.74\pm0.32$ & $0.09\pm0.04$ & $209.00\pm54.45$ & $0.11\pm0.05$ & $0.34\pm0.42$ & $0.08\pm0.04$ & $166.60\pm42.38$ & $0.12\pm0.03$ \\
\textbf{+VISTA-NV} & $0.67\pm0.15$ & $\bm{0.31\pm0.06}$ & $158.80\pm33.13$ & $0.32\pm0.08$ & $0.48\pm0.30$ & $\bm{0.34\pm0.09}$ & $195.20\pm29.46$ & $\bm{0.41\pm0.11}$ \\
\textbf{+VISTA-WV} & $\bm{0.29\pm0.08}$ & $0.26\pm0.05$ & $\bm{123.40\pm15.51}$ & $\bm{0.38\pm0.05}$ & $\bm{0.22\pm0.21}$ & $0.20\pm0.09$ & $\bm{118.80\pm23.75}$ & $0.32\pm0.12$ \\ \addlinespace
SCORE              & $0.69\pm0.05$ & $0.67\pm0.08$ & $166.20\pm59.57$ & $0.42\pm0.03$ & $0.64\pm0.06$ & $0.64\pm0.10$ & $115.30\pm31.50$ & $0.45\pm0.02$ \\
\textbf{+VISTA-NV} & $0.86\pm0.06$ & $\bm{0.95\pm0.03}$ & $980.80\pm79.12$ & $0.24\pm0.09$ & $0.90\pm0.04$ & $\bm{0.91\pm0.05}$ & $923.70\pm146.32$ & $0.18\pm0.07$ \\
\textbf{+VISTA-WV} & $\bm{0.33\pm0.04}$ & $0.74\pm0.02$ & $\bm{56.40\pm19.95}$   & $\bm{0.70\pm0.02}$ & $\bm{0.49\pm0.40}$ & $0.80\pm0.06$ & $\bm{74.60\pm22.43}$   & $\bm{0.62\pm0.30}$ \\\hline
\end{tabular}
}
\end{table}
\vfill

\begin{table}[h]
\caption{Results with linear and nonlinear synthetic datasets ($n=50$, $h=5$).}
\label{tab_50_5}
\resizebox{1\textwidth}{!}{
\begin{tabular}{c *{4}{c} *{4}{c}}
\toprule
  & \multicolumn{4}{c}{\textbf{ER5}} 
  & \multicolumn{4}{c}{\textbf{SF5}} \\
\cmidrule(lr){2-5} \cmidrule(l){6-9}
Method 
  & {FDR↓}       & {TPR↑}       & {SHD↓}         & {F1↑}
  & {FDR↓}       & {TPR↑}       & {SHD↓}         & {F1↑}         \\
\midrule
NOTEARS            & $0.16\pm0.09$ & $0.81\pm0.06$ & $81.80\pm34.23$   & $0.82\pm0.05$ & $0.23\pm0.08$ & $0.75\pm0.04$ & $105.90\pm26.65$  & $0.76\pm0.04$ \\
\textbf{+VISTA-NV} & $0.75\pm0.02$ & $\bm{0.98\pm0.01}$ & $685.60\pm68.09$  & $0.40\pm0.02$ & $0.72\pm0.02$ & $\bm{0.98\pm0.01}$ & $585.30\pm71.56$  & $0.43\pm0.03$ \\
\textbf{+VISTA-WV} & $\bm{0.08\pm0.04}$ & $0.76\pm0.04$ & $\bm{72.90\pm13.75}$   & $\bm{0.83\pm0.03}$ & $\bm{0.15\pm0.06}$ & $0.82\pm0.04$ & $\bm{71.90\pm12.54}$   & $\bm{0.84\pm0.02}$ \\ \addlinespace
GOLEM              & $0.34\pm0.18$ & $0.73\pm0.14$ & $156.20\pm88.04$  & $0.72\pm0.14$ & $0.30\pm0.17$ & $0.75\pm0.12$ & $130.20\pm71.76$  & $0.72\pm0.11$ \\
\textbf{+VISTA-NV} & $0.75\pm0.02$ & $\bm{0.96\pm0.02}$ & $706.90\pm66.89$  & $0.39\pm0.02$ & $0.74\pm0.03$ & $\bm{0.94\pm0.01}$ & $618.90\pm91.19$  & $0.41\pm0.04$ \\
\textbf{+VISTA-WV} & $\bm{0.20\pm0.11}$ & $0.77\pm0.08$ & $\bm{99.40\pm47.64}$   & $\bm{0.79\pm0.10}$ & $\bm{0.16\pm0.09}$ & $0.77\pm0.06$ & $\bm{84.40\pm34.04}$   & $\bm{0.80\pm0.08}$ \\ \addlinespace
DAG-GNN            & $0.29\pm0.15$ & $0.70\pm0.17$ & $141.10\pm63.30$  & $0.71\pm0.11$ & $0.32\pm0.11$ & $0.74\pm0.07$ & $142.40\pm45.01$  & $0.71\pm0.07$ \\
\textbf{+VISTA-NV} & $0.76\pm0.01$ & $\bm{0.98\pm0.01}$ & $720.70\pm37.85$  & $0.38\pm0.02$ & $0.74\pm0.01$ & $\bm{0.98\pm0.01}$ & $633.00\pm39.65$  & $0.41\pm0.01$ \\
\textbf{+VISTA-WV} & $\bm{0.22\pm0.07}$ & $0.79\pm0.04$ & $\bm{99.00\pm26.95}$   & $\bm{0.79\pm0.05}$ & $\bm{0.27\pm0.05}$ & $0.76\pm0.03$ & $\bm{116.60\pm11.84}$  & $\bm{0.75\pm0.01}$ \\ \addlinespace
CAM                & ------        & ------        & ------            & ------        & ------        & ------        & ------            & ------        \\
\textbf{+VISTA-NV} & $0.86\pm0.01$ & $\bm{0.69\pm0.05}$ & $978.00\pm5.25$   & $0.23\pm0.02$ & $0.86\pm0.01$ & $\bm{0.67\pm0.06}$ & $941.60\pm50.21$  & $0.23\pm0.02$ \\
\textbf{+VISTA-WV} & $\bm{0.75\pm0.02}$ & $0.49\pm0.08$ & $\bm{426.40\pm26.82}$  & $\bm{0.32\pm0.03}$ & $\bm{0.75\pm0.03}$ & $0.47\pm0.08$ & $\bm{400.80\pm41.11}$  & $\bm{0.33\pm0.05}$ \\
\cmidrule(lr){1-9}\morecmidrules\cmidrule(lr){1-9} 
GraN-DAG           & $0.62\pm0.28$ & $0.05\pm0.03$ & $265.80\pm35.61$ & $0.08\pm0.04$ & $0.64\pm0.33$ & $0.07\pm0.05$ & $271.80\pm29.98$ & $0.10\pm0.06$ \\
\textbf{+VISTA-NV} & $0.51\pm0.08$ & $\bm{0.17\pm0.09}$ & $213.40\pm82.48$ & $\bm{0.25\pm0.10}$ & $0.56\pm0.15$ & $\bm{0.26\pm0.05}$ & $229.20\pm63.68$ & $\bm{0.32\pm0.06}$ \\
\textbf{+VISTA-WV} & $\bm{0.36\pm0.05}$ & $0.13\pm0.02$ & $\bm{204.00\pm47.76}$ & $0.22\pm0.03$ & $\bm{0.27\pm0.05}$ & $0.18\pm0.04$ & $\bm{193.00\pm57.21}$ & $0.29\pm0.05$ \\ \addlinespace
SCORE              & $0.73\pm0.05$ & $\bm{0.61\pm0.15}$ & $431.60\pm114.55$ & $0.34\pm0.02$ & $0.71\pm0.04$ & $0.42\pm0.04$ & $365.00\pm68.00$ & $0.34\pm0.03$ \\
\textbf{+VISTA-NV} & $0.84\pm0.01$ & $0.47\pm0.32$ & $686.00\pm62.50$ & $0.24\pm0.04$ & $0.81\pm0.03$ & $\bm{0.54\pm0.06}$ & $582.50\pm57.50$ & $0.28\pm0.03$ \\
\textbf{+VISTA-WV} & $\bm{0.64\pm0.07}$ & $0.38\pm0.06$ & $\bm{271.00\pm43.00}$ & $\bm{0.37\pm0.05}$ & $\bm{0.35\pm0.15}$ & $0.25\pm0.04$ & $\bm{210.50\pm11.50}$ & $\bm{0.36\pm0.04}$ \\ \hline

\end{tabular}
}
\end{table}
\vfill
\begin{table}[ht]
\caption{Results with linear and nonlinear synthetic datasets ($n=100$, $h=3$).}

\resizebox{1\textwidth}{!}{
\begin{tabular}{c *{4}{c} *{4}{c}}
\toprule
  & \multicolumn{4}{c}{\textbf{ER3}} 
  & \multicolumn{4}{c}{\textbf{SF3}} \\
\cmidrule(lr){2-5} \cmidrule(l){6-9}
Method 
  & {FDR↓}       & {TPR↑}       & {SHD↓}         & {F1↑}
  & {FDR↓}       & {TPR↑}       & {SHD↓}         & {F1↑}         \\
\midrule
NOTEARS            & $\bm{0.09\pm0.09}$ & $0.91\pm0.06$ & $\bm{54.60\pm44.78}$  & $\bm{0.91\pm0.08}$ & $0.15\pm0.08$ & $0.75\pm0.06$ & $108.80\pm36.84$  & $0.80\pm0.06$ \\
\textbf{+VISTA-NV} & $0.81\pm0.04$ & $\bm{0.95\pm0.01}$ & $1245.60\pm349.77$ & $0.32\pm0.05$ & $0.75\pm0.04$ & $\bm{0.95\pm0.03}$ & $864.00\pm146.07$ & $0.39\pm0.05$ \\
\textbf{+VISTA-WV} & $\bm{0.09\pm0.02}$ & $0.73\pm0.02$ & $99.00\pm3.77$    & $0.81\pm0.01$ & $\bm{0.11\pm0.03}$ & $0.80\pm0.05$ & $\bm{92.00\pm9.67}$    & $\bm{0.85\pm0.03}$ \\ \addlinespace
GOLEM              & $\bm{0.09\pm0.10}$ & $\bm{0.95\pm0.05}$ & $\bm{39.80\pm43.52}$  & $\bm{0.93\pm0.08}$ & $0.22\pm0.05$ & $0.72\pm0.04$ & $137.00\pm22.24$   & $0.75\pm0.04$ \\
\textbf{+VISTA-NV} & $0.84\pm0.01$ & $0.91\pm0.02$ & $1373.80\pm202.28$  & $0.28\pm0.02$ & $0.81\pm0.03$ & $\bm{0.90\pm0.02}$ & $1180.20\pm163.44$ & $0.31\pm0.04$ \\
\textbf{+VISTA-WV} & $0.22\pm0.02$ & $0.65\pm0.04$ & $147.60\pm17.55$    & $0.71\pm0.03$ & $\bm{0.18\pm0.13}$ & $0.78\pm0.06$ & $\bm{91.80\pm72.13}$  & $\bm{0.80\pm0.07}$ \\ \addlinespace
DAG-GNN            & $0.15\pm0.11$ & $0.71\pm0.17$ & $119.40\pm63.78$  & $0.77\pm0.15$ & $0.31\pm0.14$ & $0.54\pm0.09$ & $215.60\pm47.23$  & $0.59\pm0.06$ \\
\textbf{+VISTA-NV} & $0.63\pm0.03$ & $\bm{0.95\pm0.01}$ & $1239.60\pm131.07$ & $0.53\pm0.03$ & $0.78\pm0.04$ & $\bm{0.94\pm0.01}$ & $1058.40\pm259.74$ & $0.34\pm0.06$ \\
\textbf{+VISTA-WV} & $\bm{0.12\pm0.02}$ & $0.82\pm0.03$ & $\bm{87.20\pm15.30}$    & $\bm{0.85\pm0.02}$ & $\bm{0.23\pm0.10}$ & $0.70\pm0.08$ & $\bm{151.20\pm25.35}$   & $\bm{0.73\pm0.03}$ \\ 
\cmidrule(lr){1-9}\morecmidrules\cmidrule(lr){1-9} 
GraN-DAG           & $0.90\pm0.03$ & $0.04\pm0.02$ & $463.40\pm22.94$    & $0.04\pm0.02$ & $0.83\pm0.07$ & $0.02\pm0.02$ & $366.40\pm118.26$ & $0.05\pm0.02$ \\
\textbf{+VISTA-NV} & $0.88\pm0.06$ & $\bm{0.25\pm0.06}$ & $390.80\pm73.58$    & $0.17\pm0.06$ & $0.78\pm0.06$ & $\bm{0.26\pm0.08}$ & $308.40\pm49.60$  & $0.24\pm0.05$ \\
\textbf{+VISTA-WV} & $\bm{0.38\pm0.05}$ & $0.16\pm0.03$ & $\bm{250.60\pm82.64}$    & $\bm{0.25\pm0.04}$ & $\bm{0.44\pm0.08}$ & $0.18\pm0.05$ & $\bm{266.68\pm67.76}$  & $\bm{0.27\pm0.06}$ \\\addlinespace
SCORE              & $0.91\pm0.05$ & $0.62\pm0.04$ & $2859.40\pm839.4$   & $0.16\pm0.08$ & $0.92\pm0.03$ & $0.66\pm0.04$ & $3131.20\pm1002$  & $0.14\pm0.05$ \\
\textbf{+VISTA-NV} & $0.94\pm0.04$ & $\bm{0.95\pm0.12}$ & $2614.80\pm566.5$   & $0.11\pm0.07$ & $0.91\pm0.05$ & $\bm{0.70\pm0.05}$ & $2727.60\pm505.6$ & $0.16\pm0.08$ \\
\textbf{+VISTA-WV} & $\bm{0.53\pm0.05}$ & $0.75\pm0.06$ & $\bm{339.00\pm189.43}$   & $\bm{0.58\pm0.04}$ & $\bm{0.51\pm0.10}$ & $0.68\pm0.08$ & $\bm{408.80\pm205.64}$ & $\bm{0.57\pm0.07}$ \\ \hline
\end{tabular}
}
\end{table}
\

\begin{table}[h]
\caption{Results with linear and nonlinear synthetic datasets ($n=300$, $h=3$).}
\label{tab_300_3}
\resizebox{1\textwidth}{!}{
\begin{tabular}{c *{4}{c} *{4}{c}}
\toprule
  & \multicolumn{4}{c}{\textbf{ER3}} 
  & \multicolumn{4}{c}{\textbf{SF3}} \\
\cmidrule(lr){2-5} \cmidrule(l){6-9}
Method 
  & {FDR↓}       & {TPR↑}       & {SHD↓}         & {F1↑}
  & {FDR↓}       & {TPR↑}       & {SHD↓}         & {F1↑}         \\
\midrule
NOTEARS            & $\bm{0.13\pm0.03}$ & $0.89\pm0.02$ & $\bm{202.33\pm48.98}$   & $\bm{0.88\pm0.03}$ & $\bm{0.16\pm0.06}$ & $0.72\pm0.04$ & $519.33\pm71.15$     & $\bm{0.78\pm0.04}$ \\
\textbf{+VISTA-NV} & $0.88\pm0.01$ & $\bm{0.91\pm0.00}$ & $6177.00\pm372.80$ & $0.21\pm0.02$ & $0.89\pm0.00$ & $\bm{0.91\pm0.01}$ & $6917.67\pm998.81$   & $0.20\pm0.00$ \\
\textbf{+VISTA-WV} & $0.23\pm0.02$ & $0.66\pm0.03$ & $462.00\pm54.31$    & $0.71\pm0.02$ & $0.21\pm0.03$ & $0.55\pm0.03$ & $\bm{363.00\pm76.53}$     & $0.65\pm0.02$ \\ \addlinespace
GOLEM              & $0.23\pm0.15$ & $0.76\pm0.18$ & $\bm{375.67\pm258.67}$  & $\bm{0.77\pm0.16}$ & $0.56\pm0.19$ & $0.34\pm0.22$ & $913.33\pm304.25$    & $0.38\pm0.21$ \\
\textbf{+VISTA-NV} & $0.88\pm0.00$ & $\bm{0.85\pm0.01}$ & $5389.67\pm46.91$   & $0.22\pm0.00$ & $0.86\pm0.03$ & $\bm{0.48\pm0.28}$ & $3248.67\pm1304.26$  & $0.18\pm0.08$ \\
\textbf{+VISTA-WV} & $\bm{0.17\pm0.02}$ & $0.45\pm0.01$ & $628.33\pm11.90$    & $0.50\pm0.01$ & $\bm{0.21\pm0.06}$ & $0.44\pm0.04$ & $\bm{597.00\pm53.59}$     & $\bm{0.56\pm0.04}$ \\ \addlinespace
DAG-GNN            & $0.55\pm0.34$ & $0.19\pm0.19$ & $1288.33\pm832.49$ & $0.21\pm0.20$ & $0.72\pm0.17$ & $0.23\pm0.22$ & $1264.33\pm484.00$   & $0.22\pm0.19$ \\
\textbf{+VISTA-NV} & $0.89\pm0.01$ & $\bm{0.93\pm0.00}$ & $6449.00\pm89.16$   & $0.19\pm0.01$ & $0.89\pm0.01$ & $\bm{0.89\pm0.02}$ & $6627.00\pm651.98$   & $0.19\pm0.02$ \\
\textbf{+VISTA-WV} & $\bm{0.18\pm0.06}$ & $0.57\pm0.07$ & $\bm{494.67\pm50.22}$    & $\bm{0.66\pm0.04}$ & $\bm{0.34\pm0.06}$ & $0.49\pm0.07$ & $\bm{633.33\pm111.52}$    & $\bm{0.55\pm0.03}$ \\
\cmidrule(lr){1-9}\morecmidrules\cmidrule(lr){1-9} 
SCORE              & ------        & ------        & ------             & ------        & ------        & ------        & ------               & ------        \\
\textbf{+VISTA-NV} & $0.95\pm0.00$ & $\bm{0.76\pm0.04}$ & $11064.00\pm371.63$ & $0.09\pm0.01$ & $0.97\pm0.01$ & $\bm{0.44\pm0.13}$ & $13057.00\pm3556.57$ & $0.06\pm0.02$ \\
\textbf{+VISTA-WV} & $\bm{0.19\pm0.02}$ & $0.32\pm0.03$ & $\bm{666.67\pm18.66}$    & $\bm{0.46\pm0.03}$ & $\bm{0.61\pm0.29}$ & $0.08\pm0.04$ & $\bm{970.67\pm141.74}$    & $\bm{0.13\pm0.07}$ \\\hline
\end{tabular}
}
\end{table}
\vfill
\begin{table}[h]
\caption{Results with linear and nonlinear synthetic datasets ($n=300$, $h=5$).}
\label{tab_300_5}
\resizebox{1\textwidth}{!}{
\begin{tabular}{c *{4}{c} *{4}{c}}
\toprule
  & \multicolumn{4}{c}{\textbf{ER5}} 
  & \multicolumn{4}{c}{\textbf{SF5}} \\
\cmidrule(lr){2-5} \cmidrule(l){6-9}
Method 
  & {FDR↓}       & {TPR↑}       & {SHD↓}         & {F1↑}
  & {FDR↓}       & {TPR↑}       & {SHD↓}         & {F1↑}         \\
\midrule
NOTEARS            & $0.30\pm0.05$ & $0.68\pm0.12$ & $875.33\pm205.07$    & $0.69\pm0.09$ & $0.50\pm0.09$ & $0.22\pm0.12$ & $1402.33\pm70.59$     & $0.29\pm0.14$ \\
\textbf{+VISTA-NV} & $0.93\pm0.02$ & $\bm{0.94\pm0.01}$ & $6520.33\pm1357.12$  & $0.10\pm0.02$ & $0.90\pm0.01$ & $\bm{0.78\pm0.04}$ & $12180.00\pm1008.42$  & $0.18\pm0.02$ \\
\textbf{+VISTA-WV} & $\bm{0.15\pm0.04}$ & $0.67\pm0.05$ & $\bm{689.67\pm98.89}$     & $\bm{0.75\pm0.03}$ & $\bm{0.24\pm0.02}$ & $0.38\pm0.03$ & $\bm{890.33\pm166.61}$     & $\bm{0.51\pm0.03}$ \\ \addlinespace
GOLEM              & $0.81\pm0.05$ & $0.10\pm0.03$ & $1921.33\pm111.21$   & $0.13\pm0.04$ & $0.93\pm0.03$ & $0.02\pm0.01$ & $1839.67\pm139.17$    & $0.03\pm0.02$ \\
\textbf{+VISTA-NV} & $0.92\pm0.01$ & $\bm{0.28\pm0.14}$ & $5551.00\pm1310.12$  & $0.12\pm0.03$ & $0.92\pm0.00$ & $\bm{0.77\pm0.09}$ & $13437.00\pm1562.43$  & $0.14\pm0.00$ \\
\textbf{+VISTA-WV} & $\bm{0.20\pm0.23}$ & $0.23\pm0.02$ & $\bm{1225.00\pm38.79}$    & $\bm{0.36\pm0.03}$ & $\bm{0.37\pm0.11}$ & $0.10\pm0.06$ & $\bm{1391.00\pm61.65}$     & $\bm{0.17\pm0.10}$ \\ \addlinespace
DAG-GNN            & $0.91\pm0.06$ & $0.33\pm0.17$ & $3858.00\pm1558.37$  & $0.17\pm0.04$ & $0.91\pm0.04$ & $0.15\pm0.06$ & $4617.33\pm3064.50$   & $0.10\pm0.04$ \\
\textbf{+VISTA-NV} & $0.90\pm0.03$ & $\bm{0.86\pm0.04}$ & $8988.00\pm910.33$   & $0.18\pm0.05$ & $0.91\pm0.03$ & $\bm{0.81\pm0.04}$ & $14578.67\pm4342.92$  & $0.16\pm0.05$ \\
\textbf{+VISTA-WV} & $\bm{0.37\pm0.14}$ & $0.25\pm0.05$ & $\bm{1920.33\pm809.62}$   & $\bm{0.36\pm0.06}$ & $\bm{0.41\pm0.03}$ & $0.21\pm0.06$ & $\bm{2191.33\pm656.02}$   & $\bm{0.31\pm0.09}$ \\
\cmidrule(lr){1-9}\morecmidrules\cmidrule(lr){1-9} 
SCORE              & ------        & ------        & ------               & ------        & ------        & ------        & ------                & ------        \\
\textbf{+VISTA-NV} & $0.96\pm0.00$ & $\bm{0.17\pm0.07}$ & $18762.67\pm2501.28$ & $0.06\pm0.01$ & $0.98\pm0.00$ & $\bm{0.13\pm0.15}$ & $22039.00\pm2028.89$  & $0.03\pm0.01$ \\
\textbf{+VISTA-WV} & $\bm{0.93\pm0.00}$ & $0.10\pm0.03$ & $\bm{1698.33\pm103.76}$   & $\bm{0.07\pm0.01}$ & $\bm{0.95\pm0.02}$ & $0.08\pm0.06$ & $\bm{2582.67\pm830.34}$   & $\bm{0.06\pm0.02}$ \\ \hline
\end{tabular}
}
\label{tab_300_5}
\end{table}

\vfill

\newpage
\begin{figure}[h]
    \centering
    \subfigure{
        \includegraphics[width=0.46\textwidth]{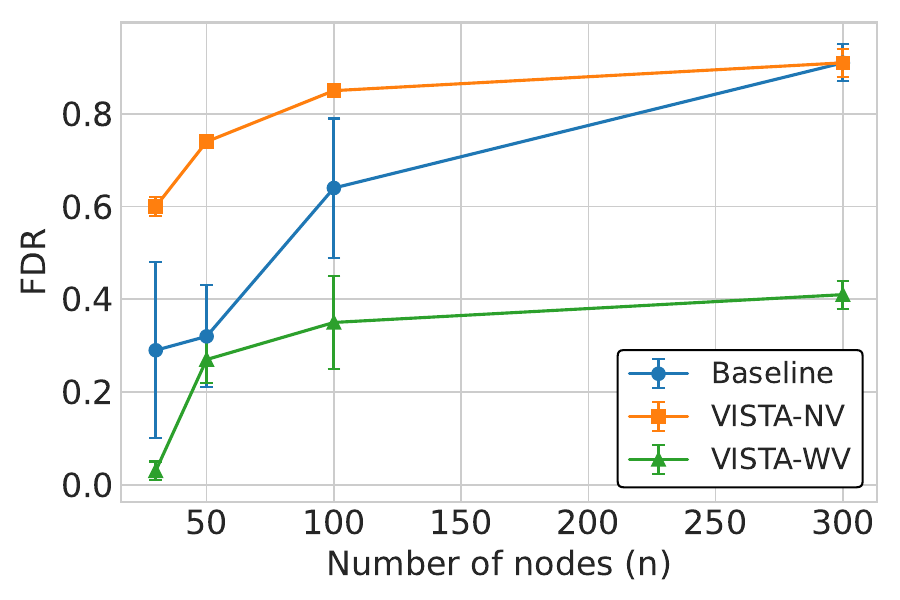}
    }
    \subfigure{
        \includegraphics[width=0.46\textwidth]{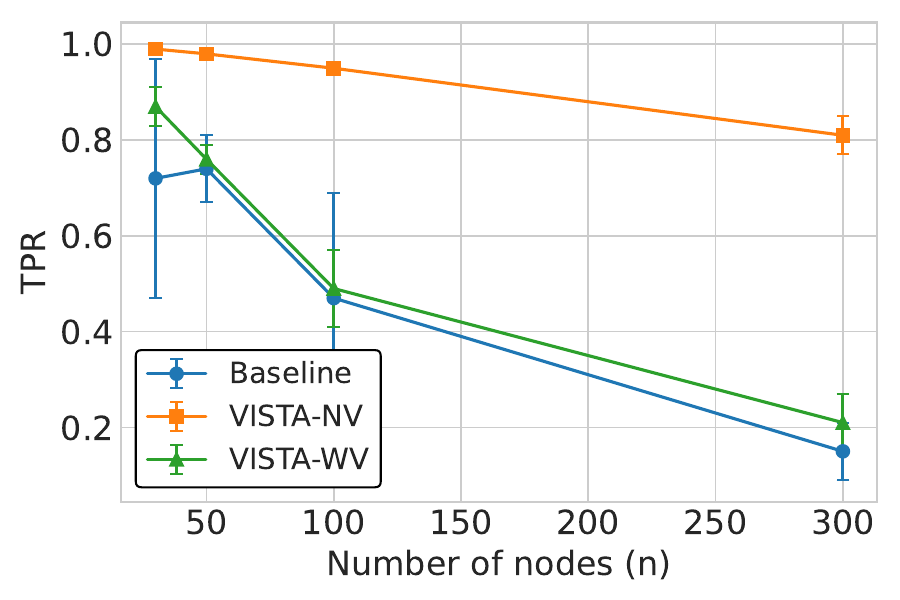}
    }\\
    \subfigure{
        \includegraphics[width=0.46\textwidth]{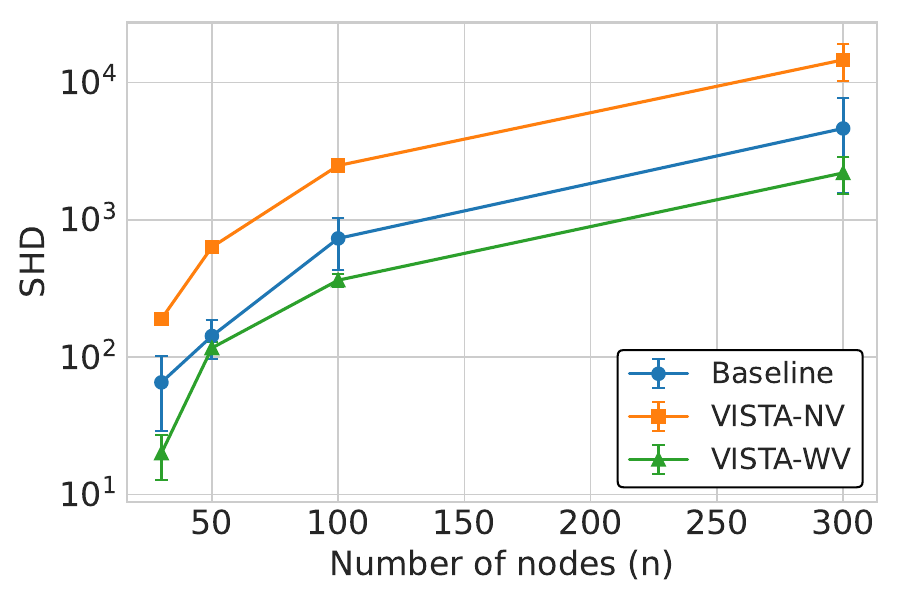}
    }
        \subfigure{
        \includegraphics[width=0.46\textwidth]{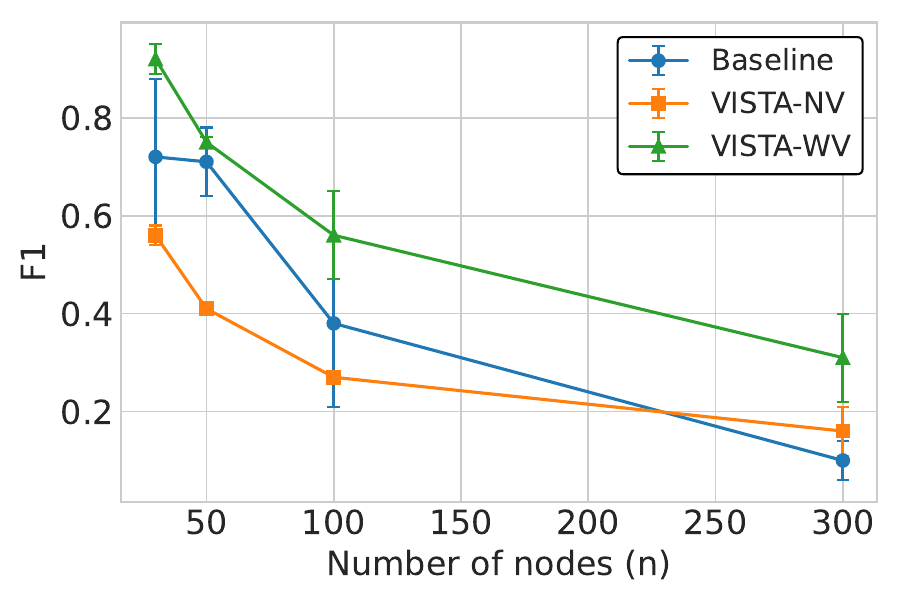}
    }
    \caption{Performance of DAG-GNN on SF5 Graphs.}
\end{figure}

\begin{figure}[htbp]
    \centering
    \subfigure{
        \includegraphics[width=0.46\textwidth]{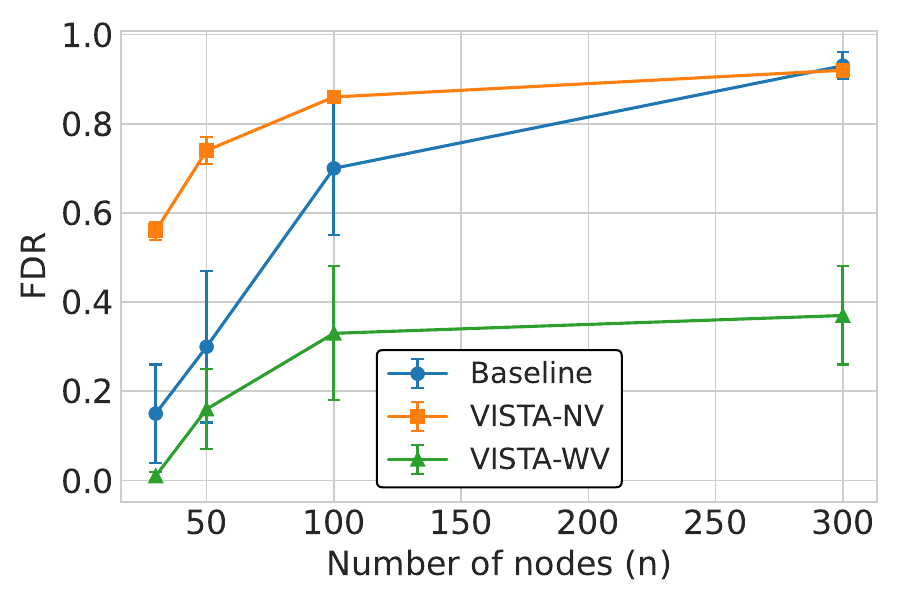}
    }
    \subfigure{
        \includegraphics[width=0.46\textwidth]{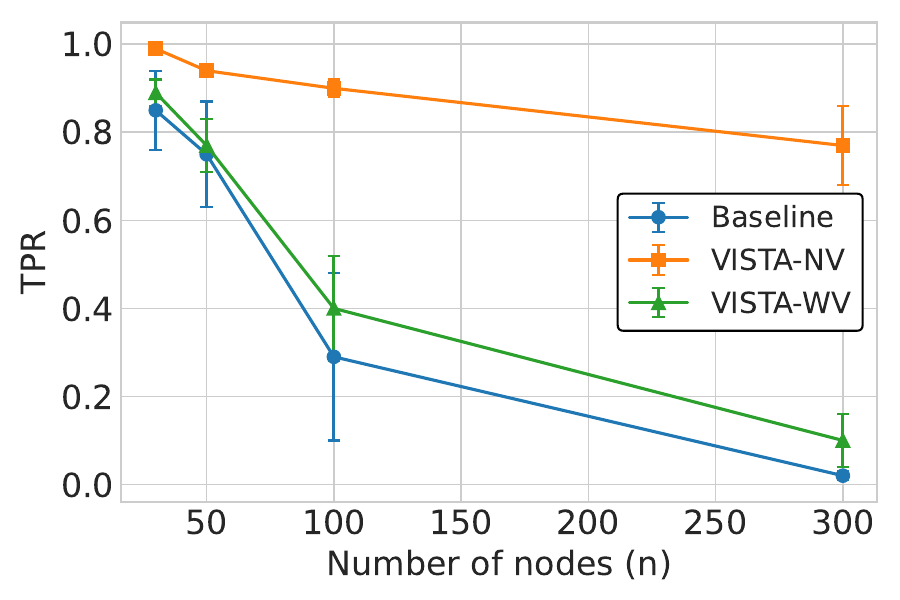}
    }\\
    \subfigure{
        \includegraphics[width=0.46\textwidth]{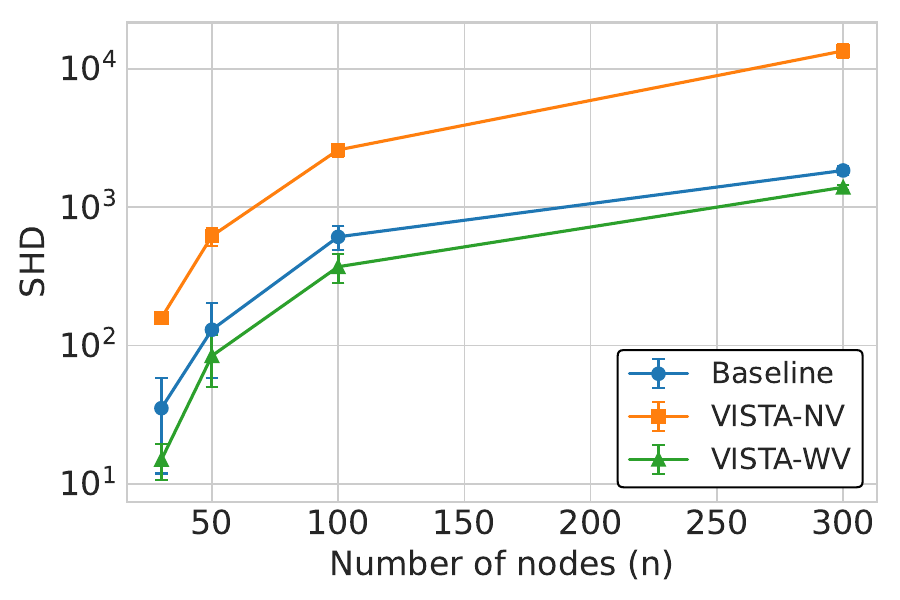}
    }
        \subfigure{
        \includegraphics[width=0.46\textwidth]{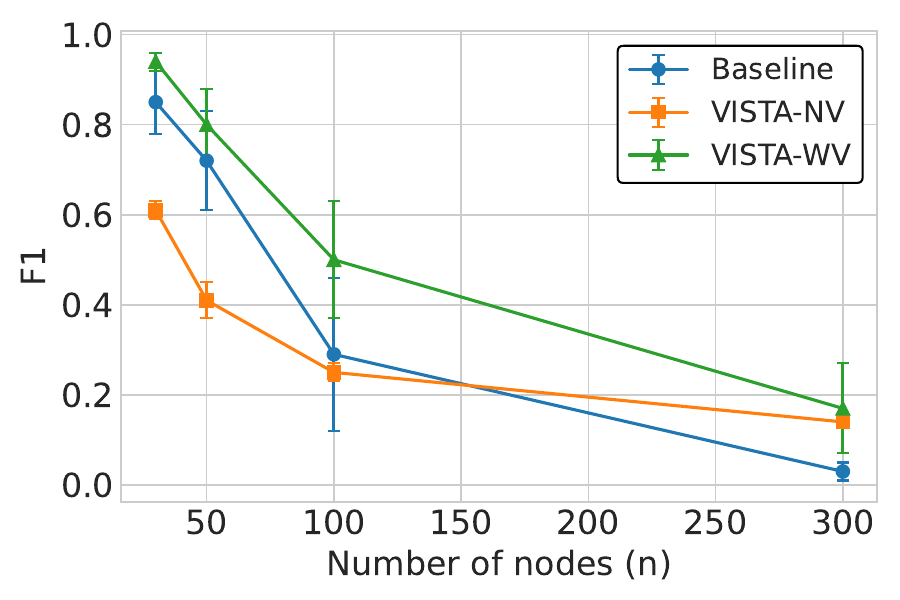}
    }
    \caption{Performance of GOLEM on SF5 Graphs.}
\end{figure}

\begin{figure}[htbp]
    \centering
    \subfigure{
        \includegraphics[width=0.46\textwidth]{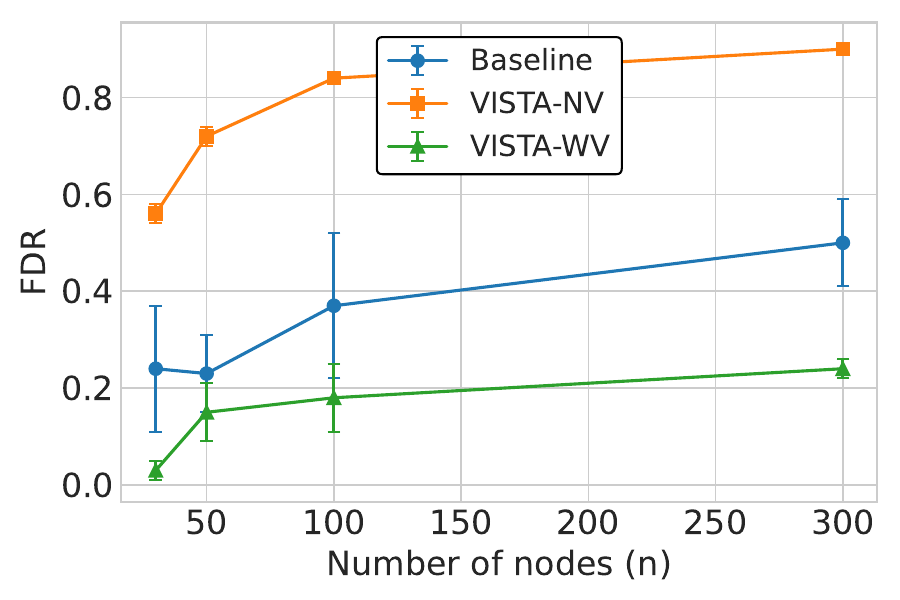}
    }
    \subfigure{
        \includegraphics[width=0.46\textwidth]{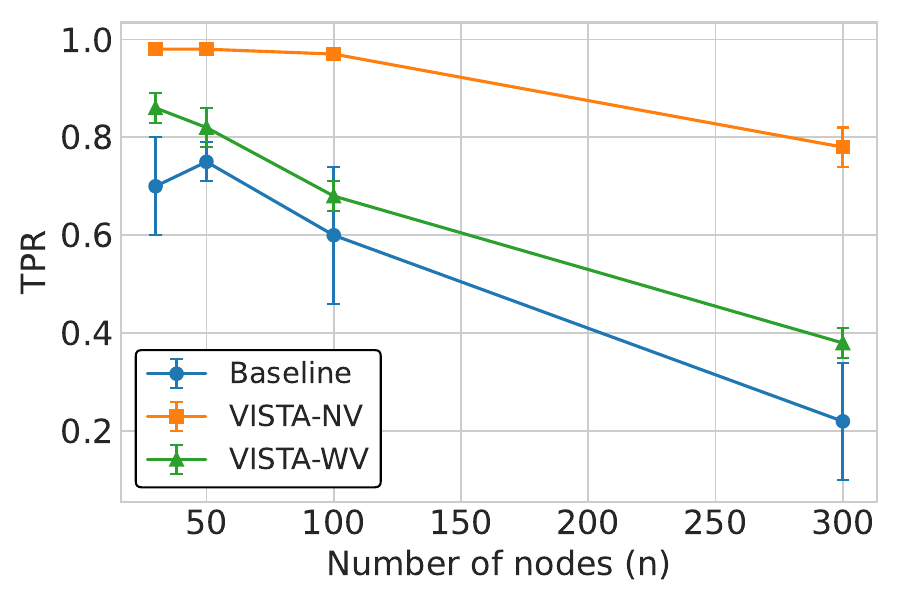}
    }\\
    \subfigure{
        \includegraphics[width=0.46\textwidth]{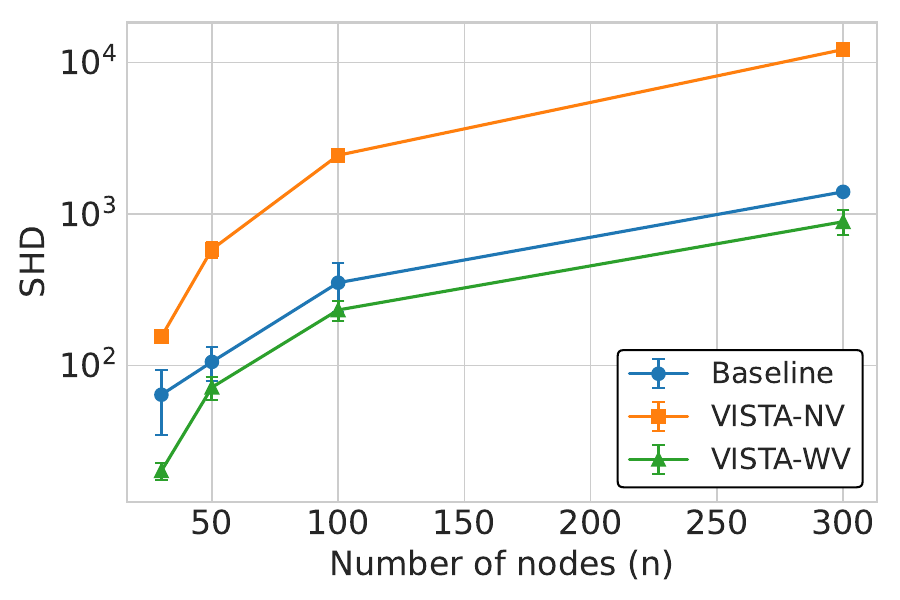}
    }
        \subfigure{
        \includegraphics[width=0.46\textwidth]{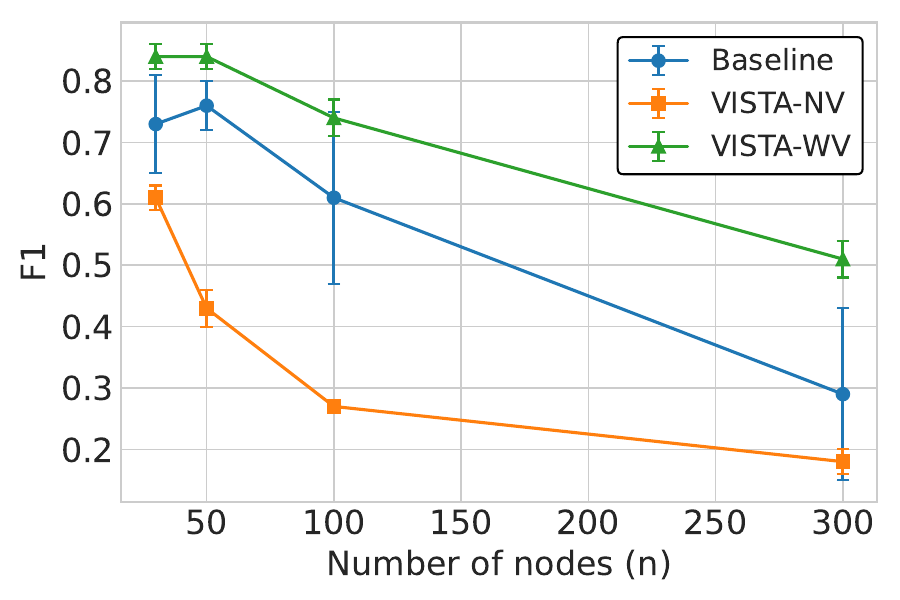}
    }
    \caption{Performance of NOTEARS on SF5 Graphs.}
\end{figure}

\begin{figure}[htbp]
    \centering
    \subfigure{
        \includegraphics[width=0.46\textwidth]{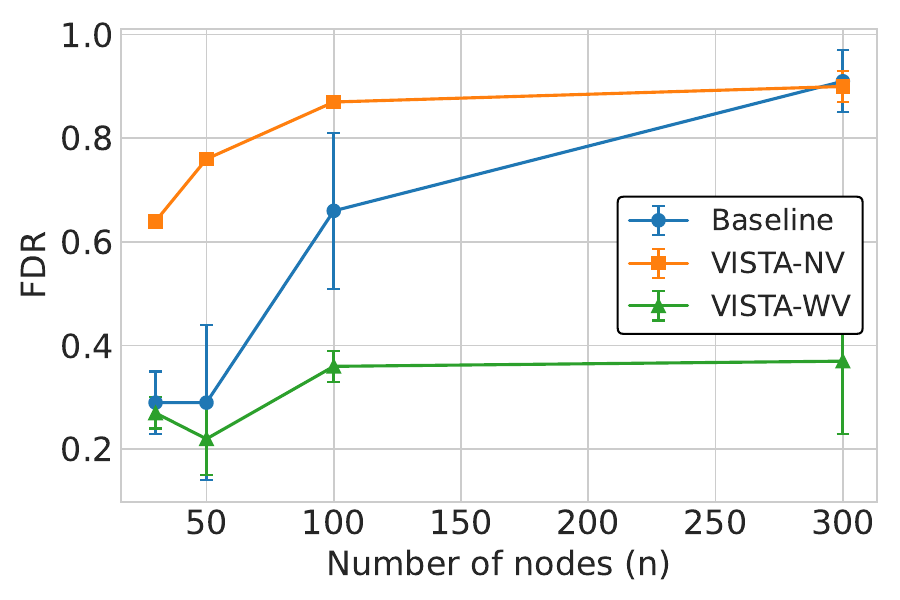}
    }
    \subfigure{
        \includegraphics[width=0.46\textwidth]{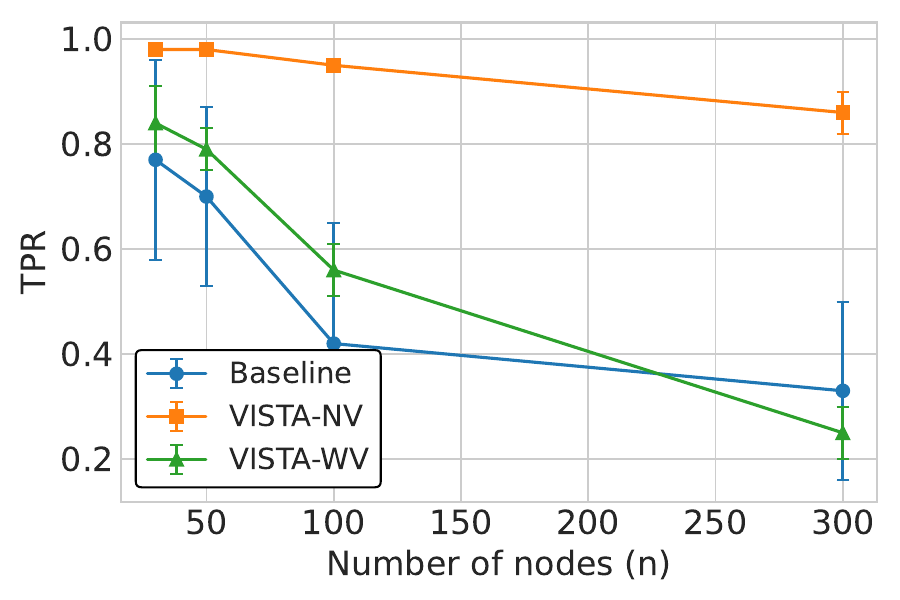}
    }\\
    \subfigure{
        \includegraphics[width=0.46\textwidth]{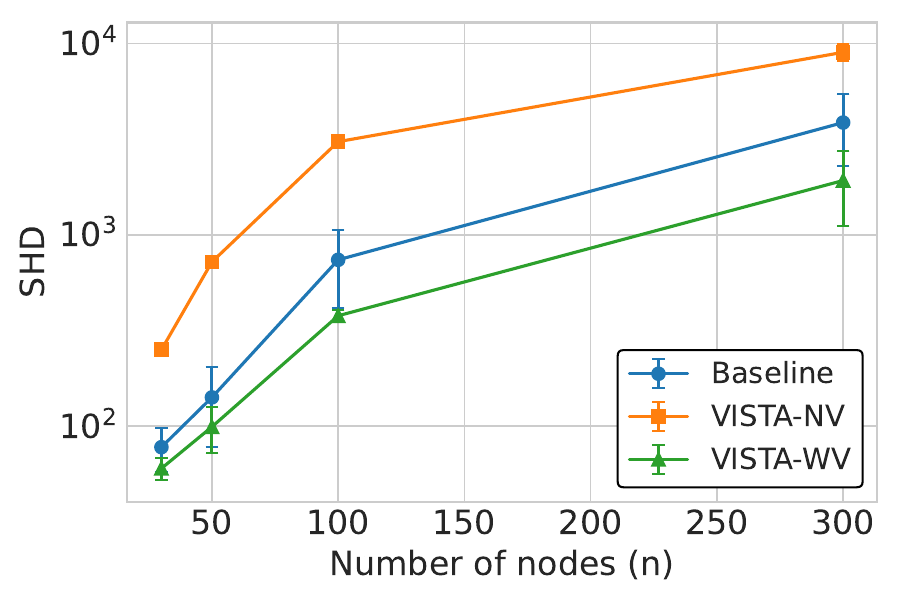}
    }
        \subfigure{
        \includegraphics[width=0.46\textwidth]{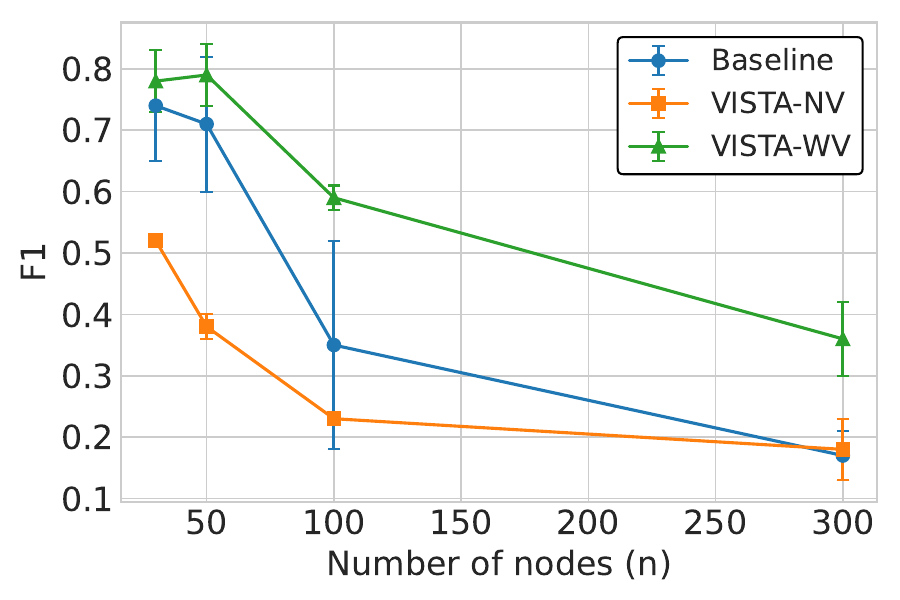}
    }
    \caption{Performance of DAG-GNN on ER5 Graphs.}
\end{figure}

\begin{figure}[htbp]
    \centering
    \subfigure{
        \includegraphics[width=0.46\textwidth]{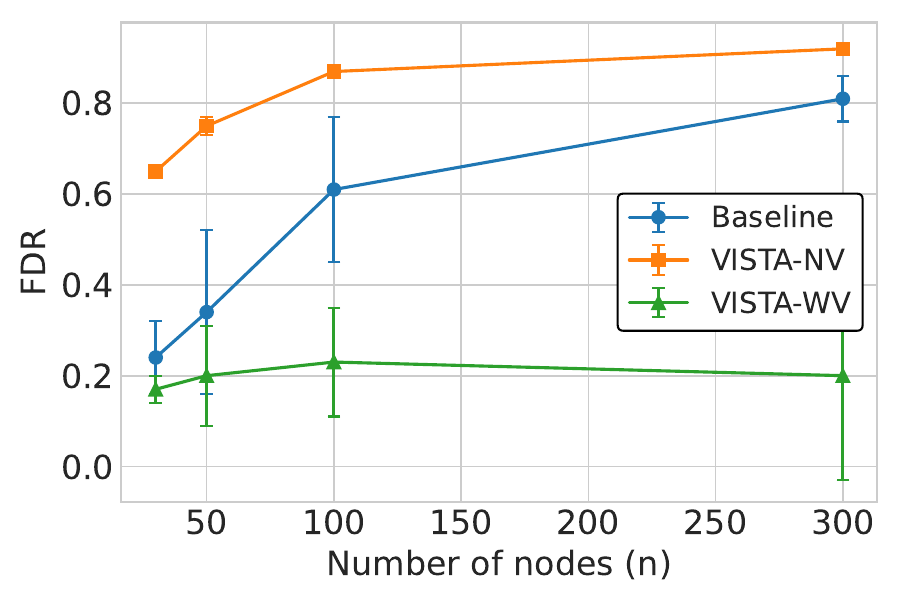}
    }
    \subfigure{
        \includegraphics[width=0.46\textwidth]{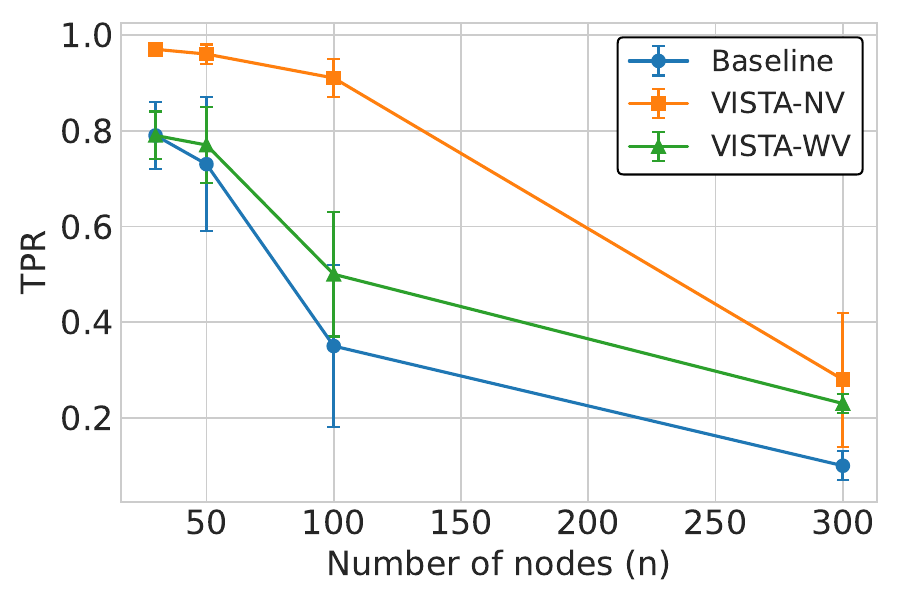}
    }\\
    \subfigure{
        \includegraphics[width=0.46\textwidth]{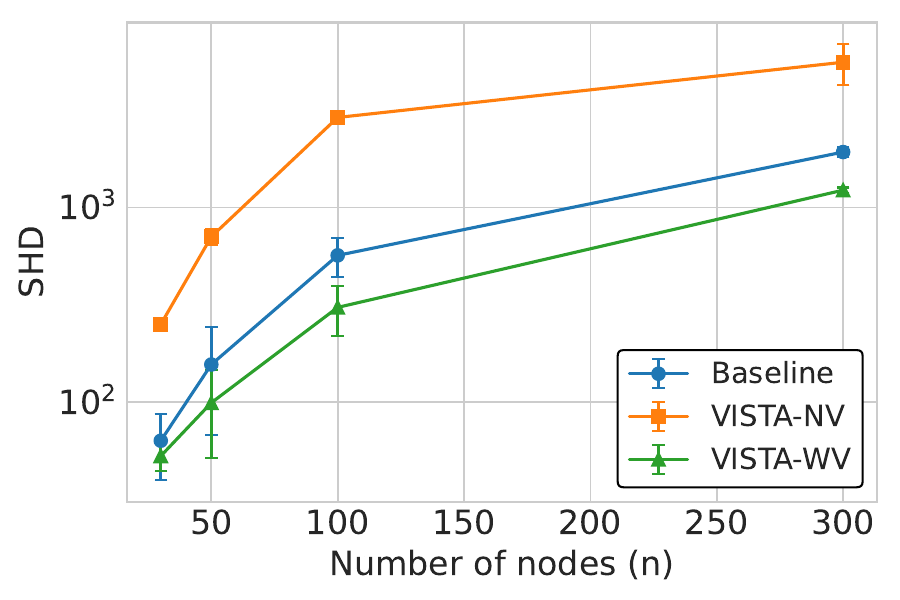}
    }
        \subfigure{
        \includegraphics[width=0.46\textwidth]{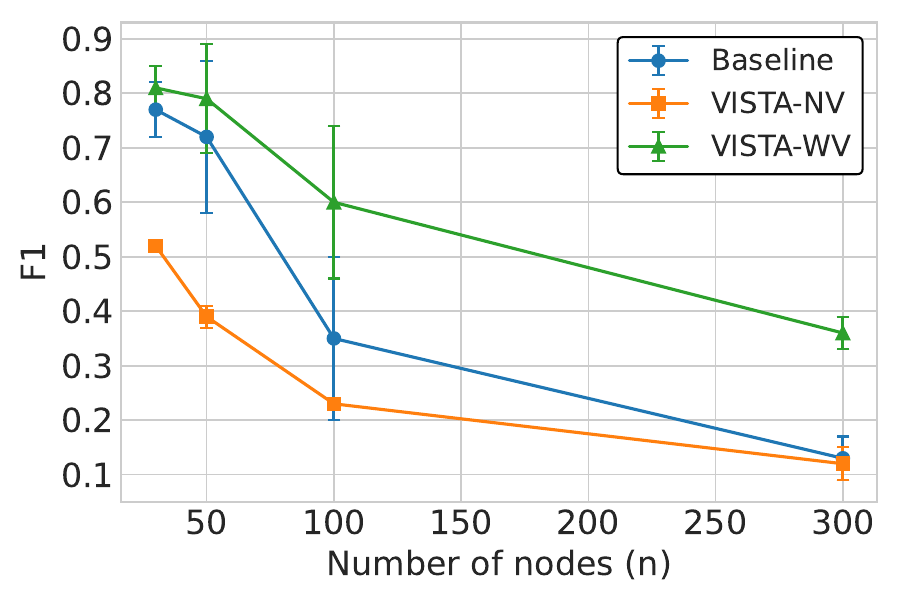}
    }
    \caption{Performance of GOLEM on ER5 Graphs.}
\end{figure}

\begin{figure}[t]
    \centering
    \subfigure{
        \includegraphics[width=0.46\textwidth]{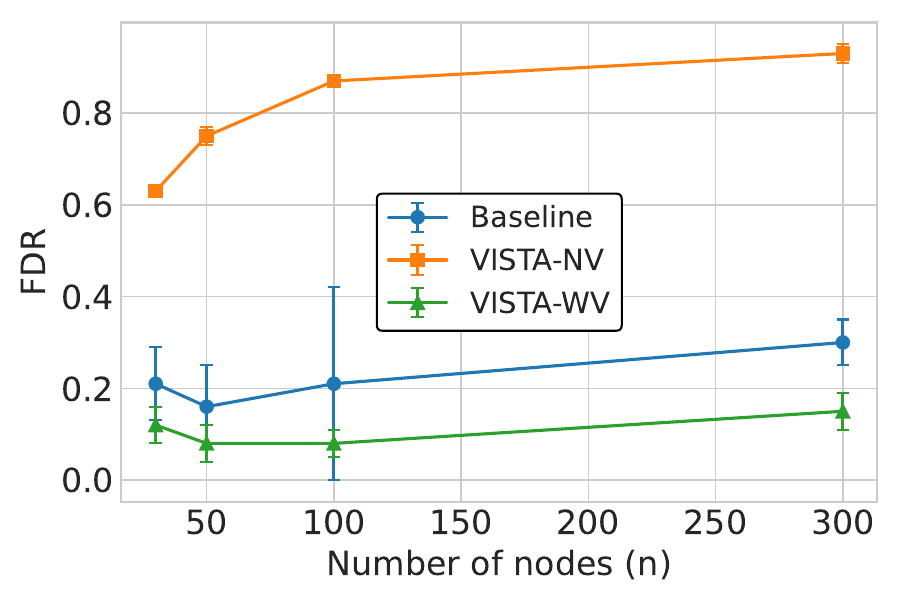}
    }
    \subfigure{
        \includegraphics[width=0.46\textwidth]{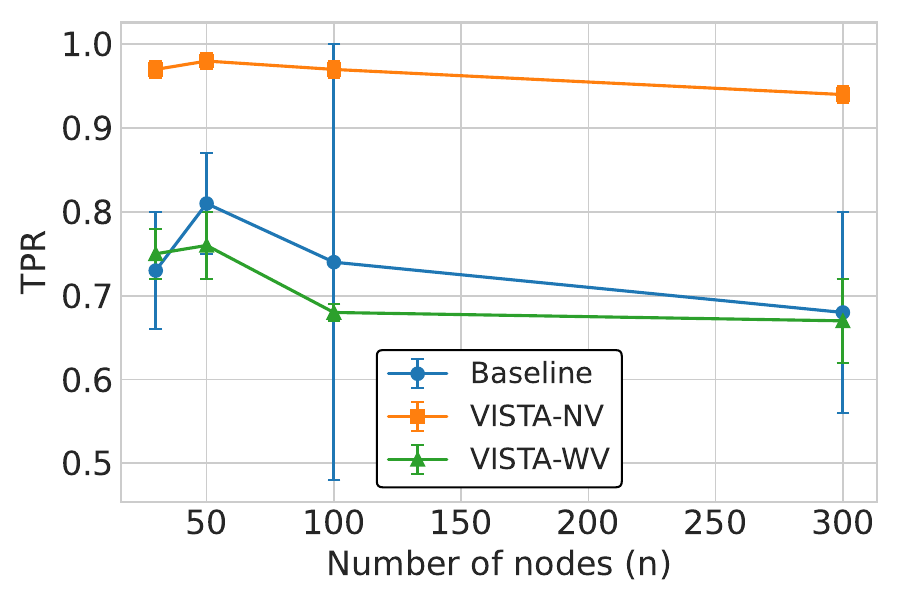}
    }\\
    \subfigure{
        \includegraphics[width=0.46\textwidth]{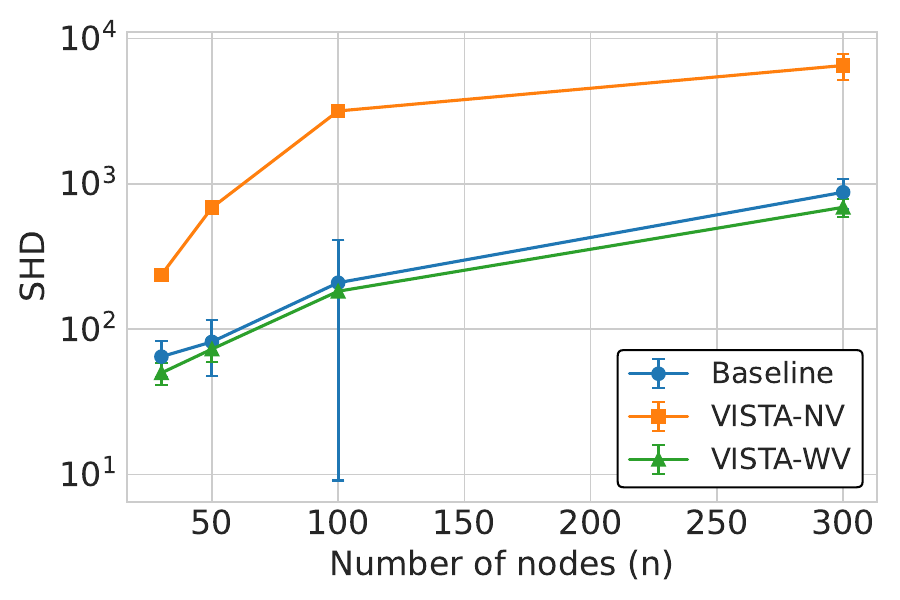}
    }
        \subfigure{
        \includegraphics[width=0.46\textwidth]{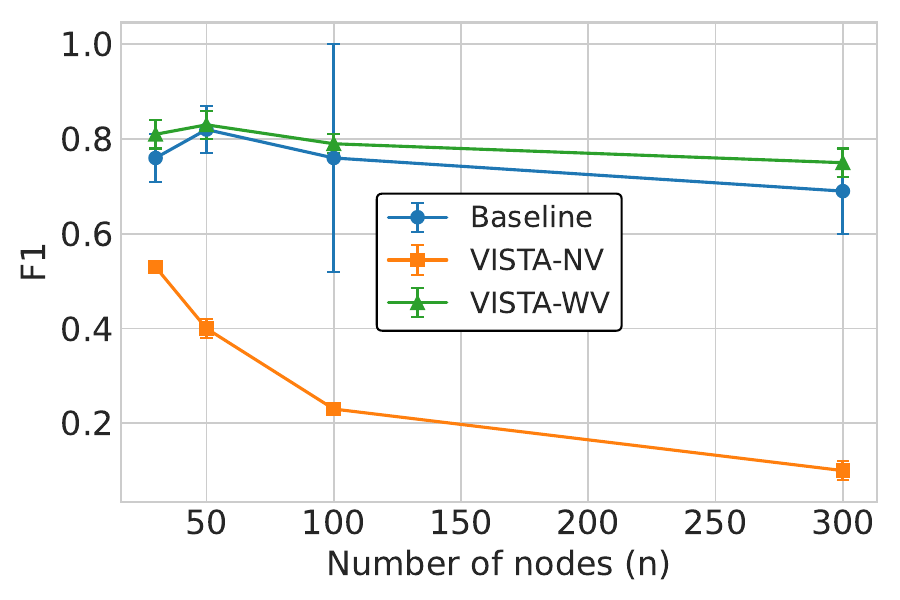}
    }
    \caption{Performance of NOTEARS on ER5 Graphs.}
\end{figure}

\clearpage

\end{document}